\documentclass[default,iicol]{sn-jnl}


\usepackage{graphicx}%
\usepackage{multirow}%
\usepackage{amsmath,amssymb,amsfonts}%
\usepackage{amsthm}%
\usepackage{mathrsfs}%
\usepackage[title]{appendix}%
\usepackage{xcolor}%
\usepackage{textcomp}%
\usepackage{manyfoot}%
\usepackage{booktabs}%
\usepackage{algorithm}%
\usepackage{algorithmicx}%
\usepackage{algpseudocode}%
\usepackage{listings}%
\usepackage{setspace}
\usepackage[colorinlistoftodos]{todonotes}
\usepackage[font=small,labelfont=bf]{caption}
\usepackage{enumerate}
\usepackage[autostyle]{csquotes}
\usepackage[table]{colortbl}
\usepackage{ulem}
\usepackage{hhline}
\usepackage{placeins}
\usepackage{float}

\newcommand{\R}{\mathbb{R}}
\newcommand{\N}{\mathbb{N}}

\DeclareMathOperator{\sign}{sign}

\newtheorem{lemma}{Lemma}






\raggedbottom


\begin{document}

\title[Riesz networks]{Riesz networks: scale invariant neural networks\linebreak 
in a single forward pass}


\author*[1,2]{\fnm{Tin} \sur{ Barisin}}\email{barisin@rptu.de}

\author[2]{\fnm{Katja} \sur{ Schladitz}}\email{katja.schladitz@itwm.fraunhofer.de}

\author[1]{\fnm{Claudia} \sur{ Redenbach}}\email{claudia.redenbach@rptu.de}

\affil[1]{\orgdiv{Mathematics Department}, \orgname{RPTU Kaiserslautern-Landau}, \orgaddress{\street{Gottlieb-Daimler-Straße 47}, \city{Kaiserslautern}, \postcode{67663}, 
\country{Germany}}}

\affil[2]{\orgdiv{Department Image Processing}, \orgname{Fraunhofer-Institut für Techno- und Wirtschaftsmathematik (ITWM)}, \orgaddress{\street{Fraunhofer-Platz 1}, \city{Kaiserslautern}, \postcode{67663}, 
\country{Germany}}}


\abstract{
Scale invariance of an algorithm refers to its ability to treat objects equally independently of their size. For neural networks, scale invariance is typically achieved by data augmentation. However, when presented with a scale far outside the range covered by the training set, neural networks may fail to generalize. 
    
    Here, we introduce the Riesz network, a novel scale invariant neural network. Instead of standard 2d or 3d convolutions for combining spatial information, the Riesz network is based on the Riesz transform which is a scale equivariant operation. As a consequence, this network naturally generalizes to unseen or even arbitrary scales in a single forward pass.
    
    As an application example, we consider detecting and segmenting cracks in tomographic images of concrete. In this context, 'scale' refers to the crack thickness which may vary strongly even within the same sample. 
    To prove its scale invariance, the Riesz network is trained on one fixed crack width. We then validate its performance in segmenting simulated and real tomographic images featuring a wide range of crack widths. An additional experiment is carried out on the MNIST Large Scale data set.
    
}

\keywords{scale invariance, Riesz transform, neural networks, crack segmentation, generalization to unseen scales, computed tomography, concrete}



\maketitle

\section{Introduction}\label{sec1}

In image data, similar objects may occur at highly varying scales. Examples are cars or pedestrians at different distances from the camera, cracks in concrete of varying thickness or imaged at different resolution, or blood vessels in biomedical applications (see Fig. \ref{fig:scale_var}). It is natural to assume that the same object or structure at different scales should be treated equally i.e. should have equal or at least similar features. This property is called scale or dilation invariance and has been investigated in detail in classical image processing~\cite{lindeberg98, lowe99, lindeberg15}. 

Neural networks have proven to segment and classify robustly and well in many computer vision tasks. Nowadays, the most popular and successful neural networks are Convolutional Neural Networks (CNNs). 
It would be desirable that neural networks share typical properties of human vision such as translation, rotation, or scale invariance.
While this is true for translation invariance, CNNs are not scale or rotation invariant by default. This is due to the excessive use of convolutions which are local operators. Moreover, training sets often contain a very limited number of scales. 
To overcome this problem, CNNs are often trained with rescaled images through data augmentation. However, when a CNN is given input whose scale is outside the range covered by the training set, it will not be able to generalize \cite{kanazawa14, jansson22}. 
To overcome this problem, a CNN trained at a fixed scale can be applied to several rescaled versions of the input image and the results can be combined. This, however, requires multiple runs of the network.

One application example, where the just described challenges naturally occur is the task of segmenting cracks in 2d or 3d gray scale images of concrete. Crack segmentation in 2d has been a vividly researched topic in civil engineering, see \cite{barisin22} for an overview. 
Cracks are naturally multiscale structures (Fig. \ref{fig:scale_var}, left) and hence require multiscale treatment. Nevertheless, adaption to scale (crack thickness\footnote{Crack scale, thickness, and width refer to the same characteristic and will be interchangeably used throughout the paper.}) has not been treated explicitly so far.  

Recently, crack segmentation in 3d  images obtained by computed tomography (CT) has become a subject of interest~\cite{paetsch11,barisin22}. Here, the effect of varying scales is even more pronounced \cite{jung22}: crack thicknesses can vary from a single pixel to more than $100$ pixels. Hence, the aim is to design and evaluate crack segmentation methods that work equally well on all possible crack widths without complicated adjustment by the user. 

\begin{figure*}
    \centering
    \includegraphics[width = 0.732\textwidth]{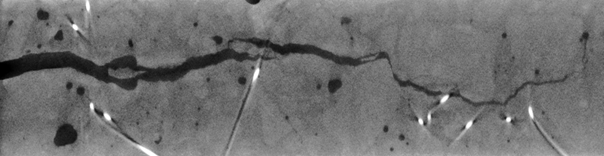}
    \includegraphics[width = 0.25325\textwidth]{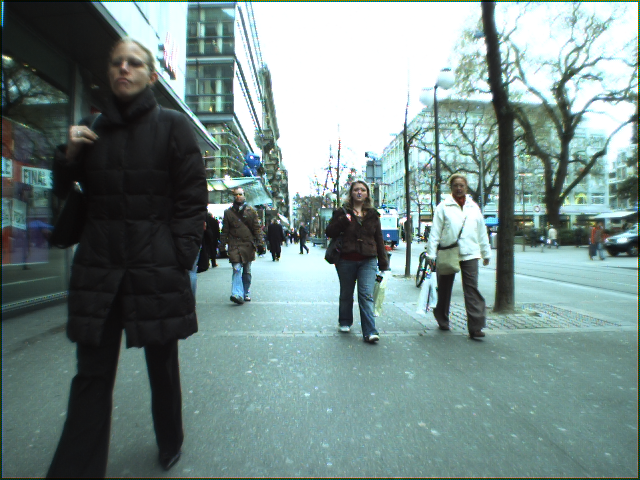}
    \caption{Examples of similar objects appearing on different scales: section of a CT image of concrete showing a crack of locally varying thickness (left) and pedestrians at difference distances from the camera (right, taken from~\cite{ess08}).}
    \label{fig:scale_var}
\end{figure*}

In this work, we focus on 2d multiscale crack segmentation in images of concrete samples. We design the Riesz network which replaces the popular 2d convolutions by first and second order Riesz transforms to allow for a scale invariant spatial operation. The resulting neural network is provably scale invariant in only one forward pass. It is sufficient to train the Riesz network on one scale or crack thickness, only. The network then generalizes automatically without any adjustments or rescaling to completely unseen scales. 
We validate the network performance using images with simulated cracks of constant and varying widths generated as described in \cite{barisin22, jung22a}.  
Our network is compared with competing methods for multiscale segmentation and finally applied to real multiscale cracks observed in 2d slices of tomographic images.

There is just one publicly available dataset which allows for testing scale equivariance -- MNIST Large Scale \cite{jansson22}. Additional experiments with the Riesz network on this dataset are reported in Appendix \ref{secA1}.

\subsection{Related work}

\subsubsection{The Riesz transform} 
The Riesz transform is a generalization of the Hilbert transform to higher dimensional spaces, see e.g. \cite{bernstein13}. First practical applications of the Riesz transform arise in signal processing through the definition of the monogenic signal \cite{felsberg01} which enables a decomposition 
of higher dimensional signals into local phase and local amplitude. 
First, a bandpass filter is applied to the signal to separate the band of frequencies. Using the Riesz transform, the local phase and amplitude can be calculated for a selected range of frequencies. For more details we refer to \cite{felsberg01, felsbergPhD02}. 

As images are 2d or 3d signals, applications of the Riesz transform naturally extend to the fields of image processing and computer vision through the Poisson scale space \cite{felsberg02, felsberg04} which is an alternative to the well-known Gaussian scale space.
K{\"o}the~\cite{koethe05} compared the Riesz transform with the structure tensor from a signal processing perspective. Unser and van de Ville \cite{unser10} related higher order Riesz transforms and derivatives. Furthermore, they give a reason for preferring the Riesz transform over the standard derivative operator: The Riesz transform does not amplify high frequencies. 

Higher order Riesz transforms were also used for analysis of local image structures using ideas from differential geometry \cite{wietzke08, dobrovolskij19}. Benefits of using the first and second order Riesz transforms as low level features have also been shown in measuring similarity \cite{zhang10}, analyzing and classification of textures \cite{depeursinge12, bernstein13}, and orientation estimation \cite{langley10, reinhardt20}. The Riesz transform can be used to create steerable wavelet frames, so-called \textit{Riesz-Laplace wavelets}~\cite{unser09, unser10}, which are the first ones utilizing the scale equivariance property of the Riesz transform and have inspired the design of \textit{quasi monogenic shearlets} \cite{hauser14}.

Interestingly, in early works on the Riesz transform in signal processing or image processing \cite{felsberg01, felsbergPhD02, felsberg04}, scale equivariance has not been noticed as a feature of the Riesz transform and hence remained sidelined. Benefits of the scale equivariance have been shown later in \cite{unser10, dobrovolskij19}. 

Recently, the Riesz transform found its way into the field of deep learning: Riesz transform features are used as supplementary features in classical CNNs to improve robustness \cite{joyseeree19}.
In our work, we will use the Riesz transforms for extracting low-level features from images and use them as basis functions which replace trainable convolutional filters in CNNs or Gaussian derivatives in~\cite{lindeberg21}. 




\subsubsection{Scale invariant deep learning methods}  Deep learning methods which have mechanisms to handle variations in scale effectively can be split in two groups based on their scale generalization ability.


\paragraph{Scale invariant deep learning methods for a limited range of scales} 
The first group can handle multiscale data but is limited to the scales represented either by the training set or by the neural network architecture. 
The simplest approach to learn multiscale features is to apply the convolutions to several rescaled versions of the images or feature maps in every layer and to combine the results by maximum pooling \cite{kanazawa14} or by keeping the scale with maximal activation \cite{marcos18} before passing it to the next layer. 
In \cite{cai16,lin17}, several approaches based on downscaling images or feature maps with the goal of designing robust multiscale object detectors are summarized. However, scaling is included in the network architecture such that scales have to be selected a priori. Therefore, this approach only yields local scale invariance, i.e. an adaption to the scale observed in a given input image is not possible after training. 

Another intuitive approach is to rescale trainable filters, i.e. convolutions, by interpolation \cite{xu14}. 
In \cite{cai16}, a new multiscale strategy was derived which uses convolution blocks of varying sizes sequenced in several downscaling layers creating a pyramidal shape. 
The pyramidal structure is utilized for learning scale dependent features and making predictions in every 
downsampling layer. Layers can be trained according to object size. That is, only the part of the network relevant for the object size is optimized. This guarantees robustness to a large range of object scales.
Similarly, in \cite{lin17}, a network consisting of a downsampling pyramid followed by an upsampling pyramid is proposed. 
Here, connections between pyramid levels are devised for combining low and high resolution features and predictions are also made independently on every pyramid level. 
However, in both cases, scale generalization properties of the networks are restricted by their architecture, i.e. by the depth of the network (number of levels in the image pyramid), the size of convolutions as spatial operators as well as the size of the input image.

Spatial transformer networks \cite{jaderberg15} focus on invariance to affine transformations including scale. 
This is achieved by using a so-called \textit{localisation network} which learns transformation parameters. Finally, using these transformation parameters, a new sampling grid can be created and feature maps are resampled to it. These parts form a trainable module which is able to handle and correct the effect of the affine transformations.
However, spatial transformer networks do not necessarily achieve invariant recognition \cite{finnveden20}. Also, it is not clear how this type of network would generalize to scales not represented in the training set.

In \cite{jacobsen16}, so-called \textit{structured receptive fields} are introduced. Linear combinations ($1\times1$ convolutions) of basis functions (in this case Gaussian derivatives up to $4$th order) are used to learn complex features and to replace convolutions 
(e.g. of size $3\times3$ or $5\times5$). As a consequence, the number of parameters is reduced, while the expressiveness of the neural network is preserved. 
This type of network works better than classical CNNs in the case where little training data is available. However, the standard deviation parameters of the Gaussian kernels are manually selected and kept fixed. Hence, the scale generalization ability remains limited. 

Making use of the semi-group property of scale spaces,
\textit{scale equivariant neural networks} motivate the use of \textit{dilated convolutions} \cite{yu15} to define scale equivariant convolutions on the Gaussian scale space \cite{worral19} or morphological scale spaces \cite{sangalli21}. Unfortunately, these
neural networks are unable to generalize to scales outside those determined by their architecture and are only suitable for 
downscale factors which are powers of 2, i.e. $\{2,4,8,16,\cdots\}$.
Furthermore, scale equivariant steerable networks \cite{sosnovik20} show how to design scale invariant networks on the scale-translation group without using standard or dilated convolutions. Following  an idea from \cite{jacobsen16}, convolutions are replaced by linear combinations of basis filters (Hermite polynomials with Gaussian envelope). While this allows for non-integer scales, scales are still limited to powers of a positive scaling factor $a$. Scale space is again discretized and sampled. Hence, a generalization to arbitrary scales is not guaranteed.

\paragraph{Scale invariant deep learning methods for arbitrary scales}

The second group of methods can generalize to arbitrary scales, i.e. any scales that are in range bounded from below by image resolution and from above by image size, but not necessarily contained in the training set. 
Our Riesz network also belongs to this second group of methods.

An intuitive approach is to train standard CNNs on a fixed range of scales and enhance their scale generalization ability by the following three step procedure based on image pyramids: downsample by a factor $a>1$, forward pass of the CNN, upsample the result by $\frac{1}{a}$ to 
the original image size \cite{jung22,jansson22}. Finally, forward passes of the CNN from several downsampling factors $\{a_1,~\cdots~,a_n~>~0\quad| \quad n\in\N\}$ are aggregated by using the maximum or average operator across the scale dimension. This approach indeed guarantees generalization to unseen scales as scales can be adapted to the input image and share the weights of the network \cite{jansson22}. However, it requires multiple forward passes of the CNN and the downsampling factors have to be selected by the user.

Inspired by Scattering Networks \cite{bruna13, sifre13}, normalized differential operators based on first and second order Gaussian derivatives stacked in layers or a cascade of a network can be used to extract more complex features \cite{lindeberg20}. Subsequently, these features serve as an input for a classifier such as a support vector machine. Varying the standard deviation parameter $\sigma$ of the Gaussian kernel, generalization to new scales can be achieved. 
However, this type of network is useful for creating \textit{handcrafted} complex scale invariant features, only, and hence is not trainable. 

Its expansion to trainable networks by creating so-called Gaussian derivative networks \cite{lindeberg21} is one of the main inspirations for our work. 
For combining spatial information, $\gamma$-normalized Gaussian derivatives are used as scale equivariant operators ($\gamma = 1$). Similarly as in \cite{jacobsen16}, linear combinations of normalized derivatives are used to learn more complex features in the spirit of deep learning.
However, the Gaussian derivative network is based on ideas from scale space theory and requires the specification of a wide enough range of scales that cover all the scales present in the training and testing set.  Hence, the scale dimension needs to be discretized and sampled densely. 
These networks have a separate channel for every scale and the network weights are shared between channels. Scale invariance is achieved by maximum pooling over the multiple scale channels.



\section{The Riesz transform}

Let $L_2(\R^d) = \{ f :\R^d\to\R \text{  } | \text{  } \int_{\R^d}{|f(x)|^2dx < \infty}\}$ be the set of square integrable functions. 

Formally, for a $d$-dimensional signal $f\in L_2(\R^d)$ (i.e. an image or a feature map), the Riesz transform of first order $\mathcal{R}=(\mathcal{R}_1,\cdots,\mathcal{R}_d)$ is defined in the spatial domain as $\mathcal{R}_{j}: L_2(\R^d) \to L_2(\R^d)$
 \begin{equation}
     \mathcal{R}_{j}(f)(x) = C_d \lim_{\epsilon \to 0}{\int_{\R^d \setminus B_{\varepsilon}}{\frac{y_jf(x-y)}{|y|^{d+1}}dy}}, 
 \end{equation}
where $C_d = \Gamma((d+1)/2)/\pi^{(d+1)/2}$ is a normalizing constant and $B_{\varepsilon}$ is ball of radius $\varepsilon$ centered at the origin. Alternatively, the Riesz transform can be defined in the frequency domain via the Fourier transform $\mathcal{F}$ 
 	\begin{equation}
  \label{eq:FT:RT}
     \mathcal{F}(\mathcal{R}_j(f))(u) = -i\frac{u_j}{|u|}\mathcal{F}(f)(u) =  \frac{1}{|u|}\mathcal{F}(\partial_j f)(u),
    \end{equation}
for $j \in \{1,\cdots,d\}$. Higher order Riesz transforms are defined by applying a sequence of first order Riesz transforms. 
That is, for $k_1,k_2,...,k_d\in \N \cup \{0\}$ 
we set
\begin{equation} \mathcal{R}^{(k_1,k_2,...,k_d)} (f)(x) := \mathcal{R}^{k_1}_1( \mathcal{R}^{k_2}_2( \cdots (\mathcal{R}^{k_d}_d(f(x)))),
\end{equation}
where $\mathcal{R}_j^{k_j}$ refers to applying the Riesz transform $\mathcal{R}_j$   $k_j$ times in a sequence.

\begin{figure*}[h]
    \centering
    \includegraphics[width = 0.25\textwidth]{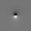}
    \includegraphics[width = 0.25\textwidth]{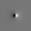}
    \\
    \includegraphics[width = 0.25\textwidth]{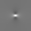}
    \includegraphics[width = 0.25\textwidth]{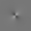}
    \includegraphics[width = 0.25\textwidth]{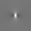}
    \caption{Visualizations of Riesz transform kernels of first and second order. First row (from left to right): $\mathcal{R}_1$ and $\mathcal{R}_2$. Second row (from left to right): $\mathcal{R}^{(2,0)}$, $\mathcal{R}^{(1,1)}$, and $\mathcal{R}^{(0,2)}$. 
    }
    \label{fig:riesz-kernel}
\end{figure*}

The Riesz transform kernels of first and second order resemble those of the corresponding 
derivatives of smoothing filters such as Gaussian or Poisson filters (Fig.~\ref{fig:riesz-kernel}). This 
can be explained by the following relations
  \begin{align}
     &\mathcal{R}(f) =(-1)(-\triangle)^{-1/2} \nabla f \\
     &\mathcal{R}^{(k_1,k_2,...,k_d)} (f)(x)= (-1)^N(- \triangle)^{-N/2} \frac{\partial^N f(x)}{\partial^{k_1}x_1 \cdots \partial^{k_d}x_d},
  \end{align}
for $k_1 + ... + k_d=N$ and $N\in \N$. The fractional Laplace operator $\triangle^{N/2}$ acts as an isotropic low-pass filter.
The main properties of the Riesz transform can be summarized in the following way~\cite{unser10}:
\begin{itemize}
 \item \textbf{translation equivariance:} For  $x_0 \in \R^d$ define a translation operator $\mathcal{T}_{x_0}(f)(x):L_2(\R^d) \to L_2(\R^d)$ as $\mathcal{T}_{x_0}(f)(x) = f(x-x_0)$. It holds that 
       \begin{equation}
            \mathcal{R}_j(\mathcal{T}_{x_0}(f))(x) = \mathcal{T}_{x_0}(\mathcal{R}_j(f))(x),
       \end{equation}
         where $j\in\{1,\cdots,d\}$. This property reflects the fact that the Riesz transform commutes with the translation operator.
 \item \textbf{steerability:}
       The directional Hilbert transform $\mathcal{H}_v:L_2(\R^d) \to L_2(\R^d)$ in direction $v\in\R^d$, $||v||=1$, is defined as $\mathcal{F}(\mathcal{H}_v(f))(u) = i \text{ } \sign(\langle u,v\rangle)$. 
       $\mathcal{H}_v$ is steerable in terms of the Riesz transform, that is it can be written as a linear combination of the Riesz transforms
       \begin{equation}
           \mathcal{H}_v(f)(x) = \sum_{j=1}^d v_j \mathcal{R}_j(f)(x)  = \langle \mathcal{R} (f)(x), v\rangle.
       \end{equation}
       Note that in 2d,  for a unit vector $v=(\cos\theta, \sin\theta),\ \theta\in\left[0,2\pi\right]$, the directional Hilbert transform becomes 
        $\mathcal{H}_v(f)(x) = \cos(\theta)\mathcal{R}_1(f)(x) + \sin(\theta)\mathcal{R}_2(f)(x)$.
        This is equivalent to the link between gradient and directional derivatives \cite{unser10} and a very useful property for learning oriented features.
\item \textbf{all-pass filter \cite{felsberg01}:}
    Let $H = (H_1, \cdots, H_d)$ be the Fourier transform of the Riesz kernel, i.e. $\mathcal{F}(\mathcal{R}_j(f))(u) = i\frac{u_j}{|u|}\mathcal{F}(f)(u) = H_j(u)\mathcal{F}(f)(u)$. 
    The energy of the Riesz transform for frequency $u \in \R^d$ is defined as the norm of the $d$-dimensional vector $H(u)$ and has value $1$ for all non-zero frequencies $u\neq0$, i.e.
       \begin{equation}
           ||H(u)|| = 1, \quad u \neq 0.
       \end{equation} 
       The all-pass filter property reflects the fact that the Riesz transform is a non-local operator and that every frequency is treated fairly and equally. Combined with scale equivariance, this eliminates the need for multiscale analysis or multiscale feature extraction.
       \\
 \item \textbf{scale (dilation) equivariance:} For $a>0$ define a dilation or rescaling operator $L_{a}:L_2(\R^d) \to L_2(\R^d)$ as $L_{a}(f)(x) = f(\frac{x}{a})$. Then
       \begin{equation}
           \mathcal{R}_j(L_{a}(f))(x) =  L_{a}(\mathcal{R}_j(f))(x),
       \end{equation} 
        for $j\in\{1,\cdots,d\}$.  That is, the Riesz transform does not only commute with translations but also with scaling. 
\end{itemize}



Scale equivariance enables an equal treatment of the same objects at different scales. As this is the key property of the Riesz transform for our application, we will briefly present a proof. 
We restrict to the first order in the Fourier domain. The proof for higher orders follows directly from the one for the first order. 
\begin{lemma}
The Riesz transform is scale equivariant, i.e. 
\begin{equation}
\label{scale:invariance}
\mathcal{R}_{i}f\Big(\frac{\cdot}{a}\Big) = [\mathcal{R}_{i}f]\Big(\frac{\cdot}{a}\Big).
\end{equation}
for $f \in L_2(\R^d)$. 
\end{lemma}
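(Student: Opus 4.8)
The plan is to verify the identity in the Fourier domain and then conclude by injectivity of $\mathcal{F}$ on $L_2(\R^d)$. Writing the dilation operator as $L_a(f)(x)=f(x/a)$, the claim \eqref{scale:invariance} is exactly $\mathcal{R}_i(L_a f)=L_a(\mathcal{R}_i f)$, so it suffices to show that the two sides have the same Fourier transform and then use that equal Fourier transforms force equality (a.e., hence in $L_2$). The whole proof rests on a single structural fact, namely that the Riesz multiplier $u\mapsto -i\,u_i/|u|$ is homogeneous of degree $0$, and I would organize the computation so that this is the step that closes the argument.

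First I would record how $\mathcal{F}$ behaves under dilation: a change of variables $y=x/a$ shows that for $a>0$ and any $g\in L_2(\R^d)$,
\[
\mathcal{F}(L_a g)(u)=a^d\,\mathcal{F}(g)(au).
\]
Using this together with the multiplier definition $\mathcal{F}(\mathcal{R}_i h)(u)=-i\,\tfrac{u_i}{|u|}\,\mathcal{F}(h)(u)$, I would compute the Fourier transform of the left-hand side directly,
\[
\mathcal{F}\bigl(\mathcal{R}_i(L_a f)\bigr)(u)=-i\frac{u_i}{|u|}\,\mathcal{F}(L_a f)(u)=-i\frac{u_i}{|u|}\,a^d\,\mathcal{F}(f)(au),
\]
and the Fourier transform of the right-hand side by applying the dilation rule to $h=\mathcal{R}_i f$ and then reading off the multiplier at the rescaled frequency $au$,
\[
\mathcal{F}\bigl(L_a(\mathcal{R}_i f)\bigr)(u)=a^d\,\mathcal{F}(\mathcal{R}_i f)(au)=a^d\Bigl(-i\frac{(au)_i}{|au|}\Bigr)\mathcal{F}(f)(au).
\]

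The crux is then immediate: since $a>0$, we have $(au)_i/|au|=a\,u_i/(a|u|)=u_i/|u|$ for every $u\neq 0$, so the frequency rescaling leaves the multiplier untouched. The two displayed expressions therefore agree for all $u\neq 0$, the single point $u=0$ being negligible in $L_2$, and injectivity of $\mathcal{F}$ yields \eqref{scale:invariance}. The higher-order statement follows by iteration, because composing first-order Riesz transforms multiplies their degree-$0$ multipliers and a product of degree-$0$ homogeneous functions is again degree-$0$ homogeneous, so the same cancellation occurs.

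I do not expect a genuine technical obstacle here: the argument is essentially bookkeeping. The only points requiring care are tracking the dilation factor $a^d$ consistently on both sides and recognizing that the degree-$0$ homogeneity of the multiplier is precisely what absorbs the frequency rescaling $u\mapsto au$. Phrased differently, the "hard part" is conceptual rather than computational, namely isolating homogeneity of the symbol as the mechanism behind scale equivariance, so that the spatial-domain identity drops out with no estimates needed.
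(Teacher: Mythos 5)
Your proof is correct and follows essentially the same route as the paper: both pass to the Fourier domain, apply the dilation rule $\mathcal{F}(L_a g)(u)=a^d\mathcal{F}(g)(au)$, and exploit the degree-$0$ homogeneity of the Riesz multiplier to cancel the frequency rescaling. Your version merely makes explicit two points the paper leaves implicit — the final appeal to injectivity of $\mathcal{F}$ on $L_2$ and the iteration argument for higher orders — so there is nothing to correct.
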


 \begin{proof}
Remember that the Fourier transform of a dilated function is given by 
$\mathcal{F}(f(\alpha \cdot))(u) = \frac{1}{\alpha^d} \mathcal{F}(f)(\frac{u}{\alpha})$. Setting $g(x) = f(\frac{x}{a})$, we have $\mathcal{F}(g)(u) = a^d \mathcal{F}(f)(au)$. This yields 
\begin{align*}
    &\mathcal{F}\Bigg(\mathcal{R}_j\Big(f\big(\frac{\cdot}{a}\big)\Big)\Bigg)(u) =
    \mathcal{F}\Big(\mathcal{R}_j(g)\Big)(u) = 
    \\
    &= i \frac{u_j}{|u|} \mathcal{F}(g)(u) 
    = i \frac{u_j}{|u|} a^d \mathcal{F}(f)(au) = \\
    &= a^d  \Big(i\frac{au_j}{a|u|}\Big)\mathcal{F}(f)(au) =  
     a^d \mathcal{F}\Big(\mathcal{R}_j(f)\Big)(au) =  
     \\
     &= \mathcal{F}\Bigg(\mathcal{R}_j(f)\Big(\frac{\cdot}{a}\Big)\Bigg)(u).
\end{align*}
\end{proof}

\begin{figure*}[h]
    \centering
    \includegraphics[width = 0.35\textwidth]{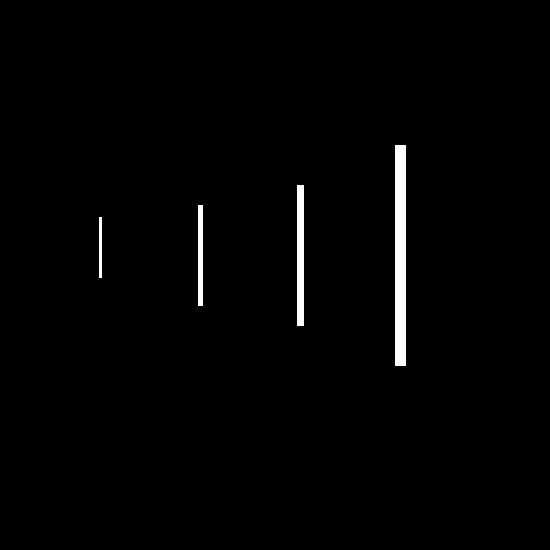}
    \includegraphics[width = 0.6\textwidth]{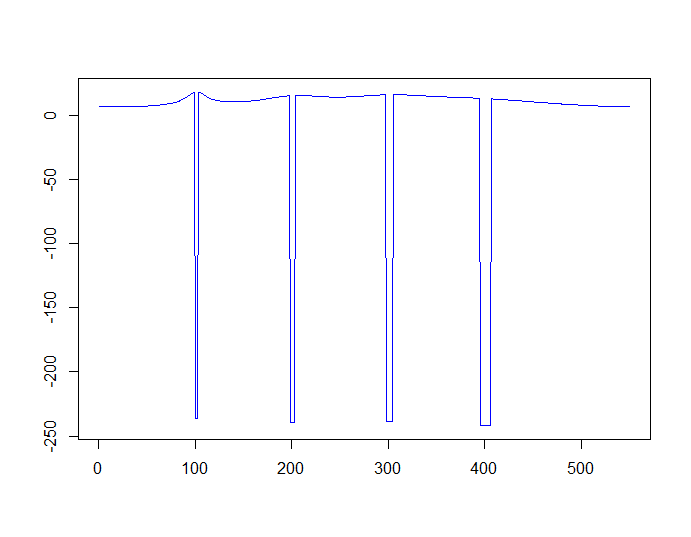}
    \caption{Illustration of the Riesz transform on a mock example of $550\times 550$ pixels: aligned rectangles with equal aspect ratio and constant gray value $255$ (left) and response of the second order Riesz transform $\mathcal{R}^{(2,0)}$ of the left image sampled horizontally through the centers of the rectangles (right).}
    \label{fig:scale-inv-ex}
\end{figure*}

Fig.~\ref{fig:scale-inv-ex} provides an illustration of the scale equivariance. It shows four rectangles with length-to-width ratio $20$ and varying width ($3$, $5$, $7$, and $11$ pixels) together with the gray value profile of the second order Riesz transform $R^{(2,0)}$ along a linear section through the centers of the rectangles. 
In spite of the different widths, the Riesz transform yields equal filter responses for each rectangle (up to rescaling). 
In contrast, to achieve the same behaviour in Gaussian scale space, the scale space has to be sampled (i.e. a subset of scales has to be selected), the $\gamma$-normalized derivative \cite{lindeberg98} has to be calculated for every scale, and finally the scale yielding the maximal absolute value has to be selected. 
In comparison, the simplicity of the Riesz transform achieving the same in just one transform without sampling scale space and without the need for a scale parameter is striking.

\section{Riesz transform neural networks}
In the spirit of structured receptive fields \cite{jacobsen16} and Gaussian derivative networks \cite{lindeberg21}, we use the Riesz transforms of first and second order instead of standard convolutions to define Riesz layers. 
As a result, Riesz layers are scale equivariant in a single forward pass. 
Replacing standard derivatives with the Riesz transform has been motivated by \cite{koethe05}, while using a linear combination of Riesz transforms of several order follows \cite{depeursinge12}. 

\subsection{Riesz layers}
The base layer of the Riesz networks is defined as a linear combination of Riesz transforms of several orders implemented as 1d convolution across feature channels (Fig.~\ref{fig:network-blocks}). Here, we limit ourselves to first and second order Riesz transforms. Thus, the linear combination reads as
\begin{align}
    J_{\mathcal{R}}(f) &= C_0 + \sum_{k=1}^{d}{C_k\cdot \mathcal{R}_k(f)} + \nonumber
     \\
     &+\ \mathop{\sum}_{k,l\in \mathbb{N}_0, k+l=2}{C_{k,l}\cdot \mathcal{R}^{(k,l)}(f)},
    \label{base:layer}
\end{align}
where $\{C_0, C_k \vert k\in\{1,\cdots,d\} \}\cup\{C_{k,l} \vert l,k \in \N_0, l+k=2\}$ are parameters that are learned during training. 

Now we can define the general layer of the network (Fig.~\ref{fig:network-blocks}). Let us assume that the $K$th network layer takes input 
$F^{(K)} = ( F^{(K)}_1,\cdots,  F^{(K)}_{c^{(K)}}) \in \big(L_2(\R^d)\big)^{c^{(K)}}$ 
with $c^{(K)}$ feature channels and has output  
$F^{(K+1)}= (F^{(K+1)}_1,\cdots,F^{(K+1)}_{c^{(K+1)}})\in \big(L_2(\R^d)\big)^{c^{(K+1)}}$  
with $c^{(K+1)}$ channels. Then the output in channel $j \in \{1,\cdots,c^{(K+1)} \}$ is given by
\begin{equation}
    F^{(K+1)}_j = \sum_{i=1}^{c^{(K)}} J_{K}^{(j,i)}(F^{(K)}_i).
    \label{full:layer}
\end{equation}
Here, $J_{K}^{(j,i)}$ is defined in the same way as $J_{\mathcal{R}}(f)$ from equation \eqref{base:layer}, but trainable parameters may vary for different input channels $i$ and output channels $j$, i.e. 
\begin{align}
    J^{(j,i)}_K(f) &= C_0^{(j,i,K)} + \sum_{k=1}^{d}{C_k^{(j,i,K)}\cdot \mathcal{R}_k(f)} + \nonumber \\ &+ \mathop{\sum}_{k,l\in \mathbb{N}_0, k+l=2}{C_{k,l}^{(j,i,K)}\cdot \mathcal{R}^{(k,l)}(f)}.
\end{align}
In practice, the offset parameters $C_0^{(i,j,K)}$, $i=1,\cdots,c^{(K)}$ are replaced with a single parameter defined as $C_0^{(j,K)} := \sum_{i=1}^{c^{(K)}}C_0^{(j,i,K)}$.

\begin{figure*}
    \centering
    \includegraphics[width = 0.49\textwidth]{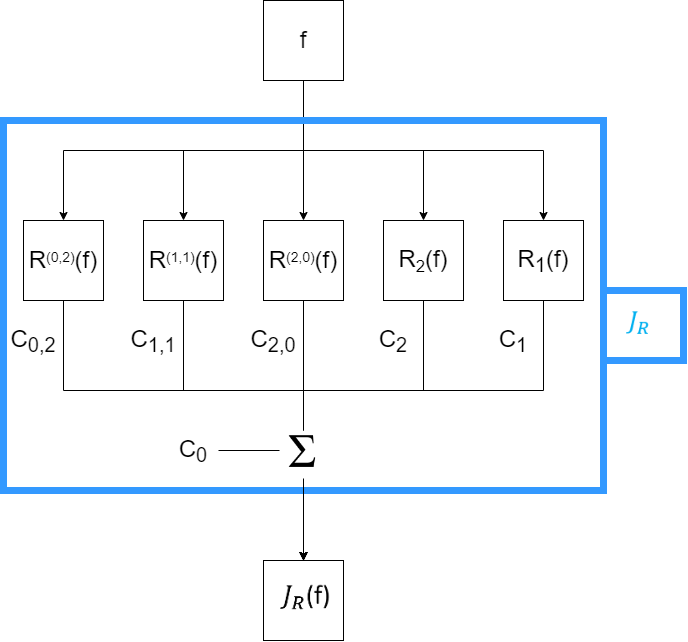}
    \hfill
    \includegraphics[width = 0.4\textwidth]{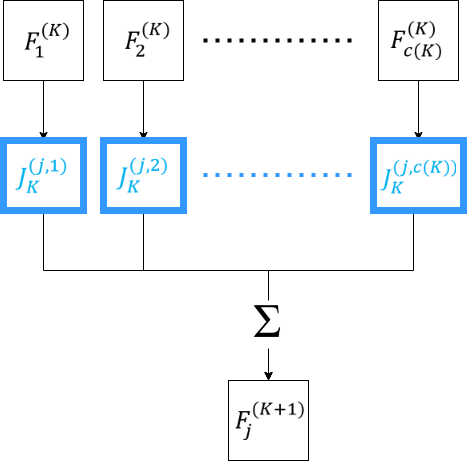}
    \caption{Building blocks of Riesz networks: the base Riesz layer from equation (\ref{base:layer}) (left) and the full Riesz layer from equation (\ref{full:layer}) (right).}
    \label{fig:network-blocks}
\end{figure*}

\subsection{Proof of scale equivariance}

We prove the scale equivariance for $J_{\mathcal{R}}(f)$. That implies scale equivariance for $J^{(j,i)}_K(f)$ and consequently for $F^{(K+1)}_j$ for arbitrary layers of the network. By construction (see Section \ref{sec:NetworkDesign}), this will result in provable scale equivariance for the whole network.
    Formally, we show that $J_{\mathcal{R}}(f)$ from equation (\ref{base:layer}) is scale equivariant, i.e.  
    \begin{equation}
        J_{\mathcal{R}}\Bigg(f(\frac{\cdot}{a})\Bigg)  =  J_{\mathcal{R}}(f) (\frac{\cdot}{a}),
    \end{equation}
    for $f \in L_2(\R^d)$. 
\begin{proof}
For any scaling parameter $a > 0$ and $x\in \R^d$, we have
\begin{eqnarray*}
     J_{\mathcal{R}}\Bigg(f(\frac{\cdot}{a})&\Bigg)& (x) = C_0 + \sum_{k=1}^{d}{C_k\cdot \mathcal{R}_k\Bigg(f(\frac{\cdot}{a})\Bigg)(x)} + 
     \\
     &+&\mathop{\sum}_{k,l\in \mathbb{N}_0, k+l=2}{C_{k,l}\cdot \mathcal{R}^{(k,l)}\Bigg(f(\frac{\cdot}{a})\Bigg)(x)} =
     \\
     &\overset{\mathrm{\eqref{scale:invariance}}}{=}&C_0 + \sum_{k=1}^{d}{C_k\cdot \mathcal{R}_k(f)(\frac{x}{a})} +
     \\
     &+&\mathop{\sum}_{k,l\in \mathbb{N}_0, k+l=2}{C_{k,l}\cdot \mathcal{R}^{(k,l)}(f)(\frac{x}{a})} \\ 
     &=& J_{\mathcal{R}}(f) (\frac{x}{a}). 
\end{eqnarray*}
\end{proof}

\subsection{Network design}
\label{sec:NetworkDesign}
The basic building block of modern CNNs is a sequence of the following operations: batch normalization, spatial convolution (e.g. $3\times3$ or $5\times5$), the Rectified Linear Unit (ReLU) activation function, and Max Pooling. Spatial convolutions have by default a limited size of the receptive field and Max Pooling is a downsampling operation performed on a window of fixed size. For this reason, these two operations are not scale equivariant and consequently CNNs are sensitive to variations in the scale.
Hence, among the classical operations, only batch normalization \cite{ioffe15} and ReLU activation preserve scale equivariance. To build our neural network from scale equivariant transformations, only, we restrict to using batch normalization, ReLUs, and Riesz layers, which serve as a replacement for spatial convolutions. In our setting, Max Pooling can completely be avoided since its main purpose is to combine it with spatial convolutions in cascades to increase the size of the receptive field while reducing the number of parameters.

Generally, a layer consists of the following sequence of transformations: batch normalization, Riesz layer, and ReLU.
Batch normalization improves the training capabilites and avoids overfitting, ReLUs introduce non-linearity, and the Riesz layers extract scale equivariant spatial features.
For every layer, the number of feature channels has to be selected. Hence, our network with $K \in \N$ layers can be simply defined by a $(K+2)$-tuple specifying the channel sizes\footnote{Channel dimension $c(0)$ denotes the dimension of input image, e.g. for gray value images $c(0)=1$. Channel dimension $c(K+1)$ denotes the dimension of the final output of the network. For the crack segmentation, the output map is the binary image, e.g. $c(K+1)=1$.} e.g. $(c^{(0)}, c^{(1)}, \cdots c^{(K)},c^{(K+1)})$. The final layer is defined as a linear combination of the features from the previous layer followed by a sigmoid function yielding the desired probability map as output.

The four layer Riesz network we apply here can be schematically written as $1 \to 16 \to 32 \to 40 \to 48 \to 1$ and has $(1\cdot 5\cdot16+16)+(16\cdot 5\cdot 32+32)+(32\cdot 5\cdot 40+40)+(40\cdot 5\cdot 48+48)+(48\cdot1+1)=18\,825$ 
trainable parameters.

\section{Experiments and applications}

In this section we evaluate the four layer Riesz network defined above on the task of  segmenting cracks in 2d slices from CT images of concrete. 
Particular emphasis is put on the network's ability to segment multiscale cracks and to generalize to crack scales unseen during training. 
To quantify these properties, we use images with simulated cracks. Being accompanied by an unambiguous ground truth, they allow for an objective evaluation of the segmentation results.

Additionally, in Appendix \ref{secA1} scale equivariance of the Riesz network is experimentally validated on the MNIST Large Scale data set \cite{jansson22}.

\paragraph{Data generation:}
Cracks are generated by the fractional Brownian motion (Experiment 1) or minimal surfaces induced by the facet system of a Voronoi tessellation (Experiment 2). Dilated cracks are then integrated into CT images of concrete without cracks. As pores and cracks are both air-filled, their gray value distribution should be similar. Hence, the gray value distribution of crack pixels is estimated from the gray value distribution observed in air pores. 
The crack thickness is kept fixed (Experiment 1) or varies (Experiment 2) depending on the objective of the experiment. As a result, realistic semi-synthetic images can be generated (see Fig.~\ref{fig:train-w3}). For more details on the simulation procedure, we refer to \cite{barisin22, jung22a}. Details on number and size of the images can be found below.
Finally, we show applicability of the Riesz network for real data containing cracks generated by tensile and pull-out
tests.

\begin{figure*}[h]
    \centering
    \includegraphics[width = 0.25\textwidth]{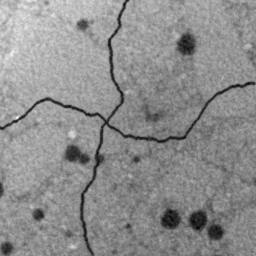}
    \includegraphics[width = 0.25\textwidth]{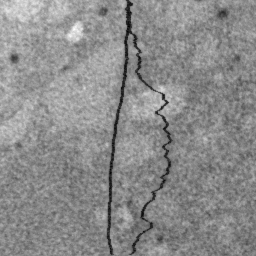}
    \includegraphics[width =    0.25\textwidth]{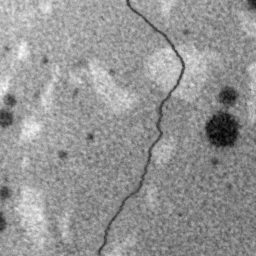}
    
    \includegraphics[width =    0.125\textwidth]{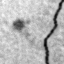}
    \includegraphics[width =    0.125\textwidth]{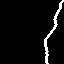}
    \includegraphics[width =    0.125\textwidth]{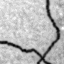}
    \includegraphics[width =    0.125\textwidth]{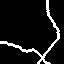}
    \includegraphics[width =    0.125\textwidth]{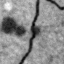}
    \includegraphics[width =    0.125\textwidth]{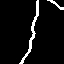}
    \caption{Cracks of width 3 used for training: before (first row) and after cropping 
    (second row). Image sizes are $256\times 256$ (first row) and $64 \times 64$ (second row).}
    \label{fig:train-w3}
\end{figure*}

\paragraph{Quality metrics:} 
As metrics for evaluation of the segmentation results we use precision (P), recall (R), F1-score (or dice coefficient, Dice), and Intersection over Union (IoU). 
The first three quality metrics are based on true positives \textit{tp} -- the number of pixels correctly predicted as crack, true negatives \textit{tn} -- the number of pixels correctly predicted as background, false positives \textit{fp} -- the number of pixels wrongly predicted as crack, and false negatives \textit{fn} -- the number of pixels falsely predicted as background. Precision, recall, and dice coefficient are then defined via
$$P = tp/(tp+fp),\quad R = tp/(tp+fn),$$
$$\text{Dice} = 2PR/(P + R).$$

IoU compares union and intersection of the foregrounds $X$ and $Y$ in the segmented image and the corresponding ground truth, respectively. That is
$$ IoU(X,Y) = \frac{|X\cap Y|}{|X \cup Y|}.$$
All these metrics have values in the range $[0,1]$ with values closer to $1$ indicating a better performance.

\paragraph{Training parameters:} 
If not specified otherwise, all models are trained on cracks of fixed width of $3$ pixels. Cracks for the training are generated in the same way as for Experiment 1 on $256\times 256$ sections of CT images of concrete. Then, 16 images of size $64\times 64$ are cropped without overlap from each of the generated images. In this way, images without cracks are present in the training set.
After data augmentation by flipping and rotation, the training set consists of 1\,947 images of cracks. Some examples are shown in Fig.~\ref{fig:train-w3}. For validation, another set of images with cracks of width $3$ is used. The validation data set's size is a third of the size of the training set. 

All models are trained for $50$ epochs with initial learning rate $0.001$ which is halved every $20$ epochs. ADAM optimization \cite{kingma14} is used, while the cost function is set to binary cross entropy loss. 

Crack pixels are labelled with $1$, while background is labelled with $0$. As there are far more background than crack pixels, we deal with a highly imbalanced data set. Therefore, crack and pore pixels are given a weight of $40$ to compensate for class imbalance and to help distinguishing between these two types of structures which hardly differ in their gray values.

\subsection{Measuring scale equivariance}
Measures for assessing scale equivariance have been introduced in \cite{worral19, sosnovik20}. For an image or feature map $f$, a mapping function $\Phi$ (e.g. a neural network or a subpart of a network), and a scaling function $L_a$ we define
\begin{equation}
   \Delta_a(\Phi) := \frac{|| L_a(\Phi(f)) - \Phi(L_a(f))||_2}{||L_a(\Phi(f)) ||_2}. 
\end{equation}

Ideally, this measure should be $0$ for perfect scale equivariance. In practice, due to scaling and discretization errors we expect it to be positive yet very small. 

To measure scale equvariance of the full Riesz network with randomly initialized weights, we use a data set consisting of $85$ images of size $512\times 512$ pixels with crack width $11$ and use downscaling factors $a\in\{2,4,8,16,32,64\}$. The evaluation was repeated for 20 randomly initialized Riesz networks. The resulting values of $\Delta_a$  are given in 
Fig.~\ref{fig:scale-error}. 

The measure $\Delta_a$ was used to validate the scale equivariance of Deep Scale-spaces (DSS) in \cite{worral19} and scale steerable equivariant networks in \cite{sosnovik20}. 
In both works, a steep increase in 
$\Delta_a$ is observed for downscaling factors larger than $16$, while for very small downscaling factors, $\Delta_a$ is reported to be below $0.01$. In \cite{sosnovik20}, $\Delta_a$ reaches $1$ for downscaling factor $45$.
The application scenario studied here differs from those of \cite{worral19, sosnovik20}. Results are thus not directly comparable but can be considered only as an approximate baseline. For small downscaling factors, we find $\Delta_a$ to be higher than in~\cite{sosnovik20} (up to $0.075$). However, for larger downscaling factors $(a>32)$, $\Delta_a$ increases more slowly e.g. $\Delta_{64} = 0.169$. This proves the resilience of Riesz networks to very high downscaling factors, i.e. large changes in scale.

\begin{figure}
    \centering
    \includegraphics[width = 0.49\textwidth]{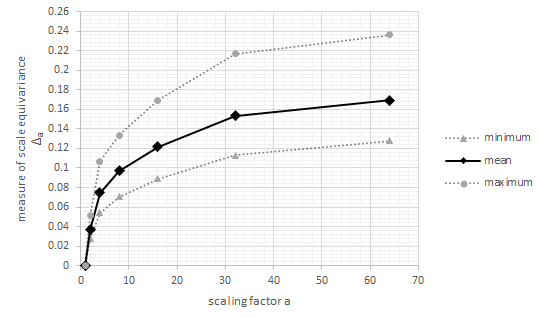}
    \caption{Measure of scale equivariance $\Delta_a$ for the four layer Riesz network with randomly initialized parameters w.r.t. the downsacling factor a. Mean (black), minimum, and maximum (gray) of 20 repetitions. Points on the line correspond to $a~\in~\{1,2,4,8,16,32,64\}$.}
    \label{fig:scale-error}
\end{figure}

\subsection{Experiment 1: Generalization to unseen scales}
Our models are trained on images of fixed crack width $3$. To investigate their behaviour on crack widths outside of the training set, we generate cracks of widths $\{1, 3, 5, 7, 9, 11\}$ pixels in images of size $512\times 512$, see Fig.~\ref{fig:test-fix-width-results}. Each class contains $85$ images. Besides scale generalization properties of the Riesz network, we check how well it generalizes to random variations in crack topology or shapes, too. 
For this experiment we will assume that the fixed width is known. This means that the competing methods will use a single scale which is adjusted to this width.

\subsubsection{Ablation study on the Riesz network}
We investigate how the network parameters and the composition of the training set affect the quality of the results, in order to learn how to design this type of neural networks efficiently.
 \begin{table*}[]
     \centering
     \begin{tabular}{|c|c|c|c|c|c|c|}
     \hline
     \multirow{2}{*}{Method} & \multicolumn{1}{|c|}{w1} & \multicolumn{1}{|c|}{w3} & \multicolumn{1}{|c|}{w5} & \multicolumn{1}{|c|}{w7} & \multicolumn{1}{|c|}{w9} & \multicolumn{1}{|c|}{w11}\\
     \hhline{~|-|-|-|-|-|-|}
     & Dice & Dice & Dice & Dice & Dice & Dice \\
     \hline
     \hline
     baseline & 0.352  & 0.895 & 0.941 & 0.954 & 0.962 & 0.964 \\
     \hline
     \hline
          trainset 489 & \cellcolor{lightgray}0.356 &  0.877 & 0.929 & 0.945 & 0.954  & 0.958 \\
     trainset 975 & \cellcolor{lightgray}0.365 & \cellcolor{gray}0.919 & \cellcolor{lightgray}0.942 & 0.954 & \cellcolor{lightgray}0.964 & \cellcolor{lightgray}0.966\\
     \hline
     width 1 & \cellcolor{gray}0.535 & 0.761 & 0.738 & 0.678 & 0.634 & 0.631 \\
     width 5 & 0.317 & 0.891 & 0.935 & 0.951 & 0.959 & 0.957 \\
     mixed width & 0.297 & 0.865 & 0.905 & 0.935 & 0.954 & 0.962 \\
     \hline

     layer 2 & 0.297 & 0.865 & 0.905 & 0.935 & 0.954 & 0.962 \\
     layer 3 & \cellcolor{lightgray}0.366 & \cellcolor{gray}0.915  & 0.940 & 0.954 & \cellcolor{lightgray}0.966& \cellcolor{lightgray}0.971 \\
     layer 5 & \cellcolor{gray}0.390 & \cellcolor{lightgray}0.914  & \cellcolor{lightgray}0.950  & \cellcolor{lightgray}0.960  & \cellcolor{lightgray}0.969 & \cellcolor{lightgray}0.972 \\
     \hline
    \end{tabular}
     \caption{Experiment 1. Ablation study: scale generalization ability of Riesz networks. Baseline is trained on $1\,947$ images with cracks of width $3$ and has $4$ layers. Cells are colored in lightgray if the metric is better than for the baseline, but not by more than $0.02$. Dark gray color is used for metrics being more than $0.02$ better compared to the baseline.}
     \label{tab:ablation}
 \end{table*}

\paragraph{Size of training set:} 
First, we investigate the robustness of the Riesz network to the size of the training set. The literature \cite{jacobsen16} suggests that neural networks based on \textit{structure receptive fields} are less data hungry, i.e. their performance with respect to the size of the training set is more stable than that of conventional CNNs. Since the Riesz network uses the Riesz transform instead of a Gaussian derivative as in \cite{jacobsen16}, it is expected that the same would hold here, too.

The use of smaller training sets has two main benefits. First, obviously, smaller data sets reduce the effort for data collection, i.e. annotation or simulation. 
Second, smaller data sets reduce the training time for the network if we do not increase the number of epochs during training.

We constrain ourselves to three sizes of training sets: $1\,947$, $975$, and $489$. These numbers refer to the sets after data augmentation by flipping and rotation. 
Hence, the number of original images is three times smaller. 
In all three cases we train the Riesz network for $50$ epochs and with similar batch sizes ($11,13,$ and $11$, respectively).
Results on unseen scales with respect to data set size are shown in Table~\ref{tab:ablation} and Fig.~\ref{fig:training-size-experiment} (left). We observe that the Riesz network trained on the smallest data set is competitive with counterparts trained on larger data sets albeit featuring generally $1-2\%$ lower Dice and IoU. 
\begin{figure*}
    \centering
    \includegraphics[width=0.32\textwidth]{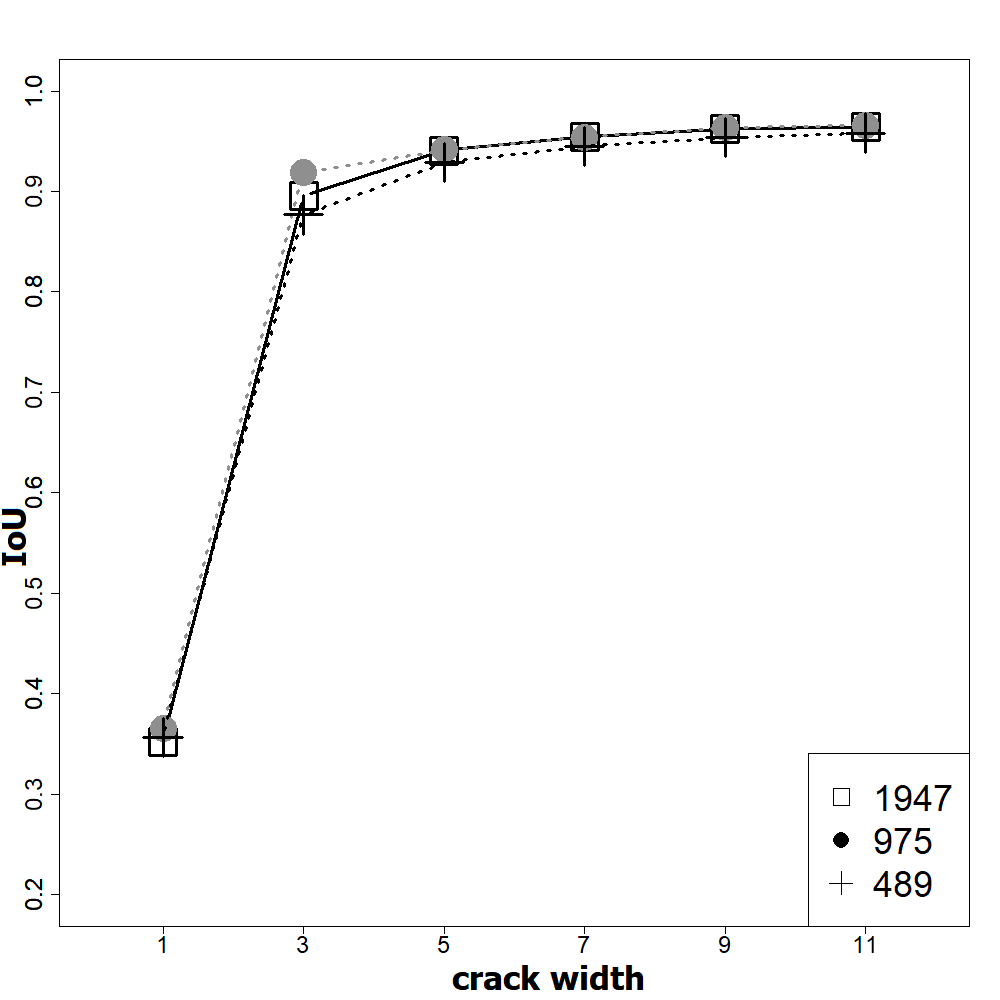}
    \includegraphics[width=0.32\textwidth]{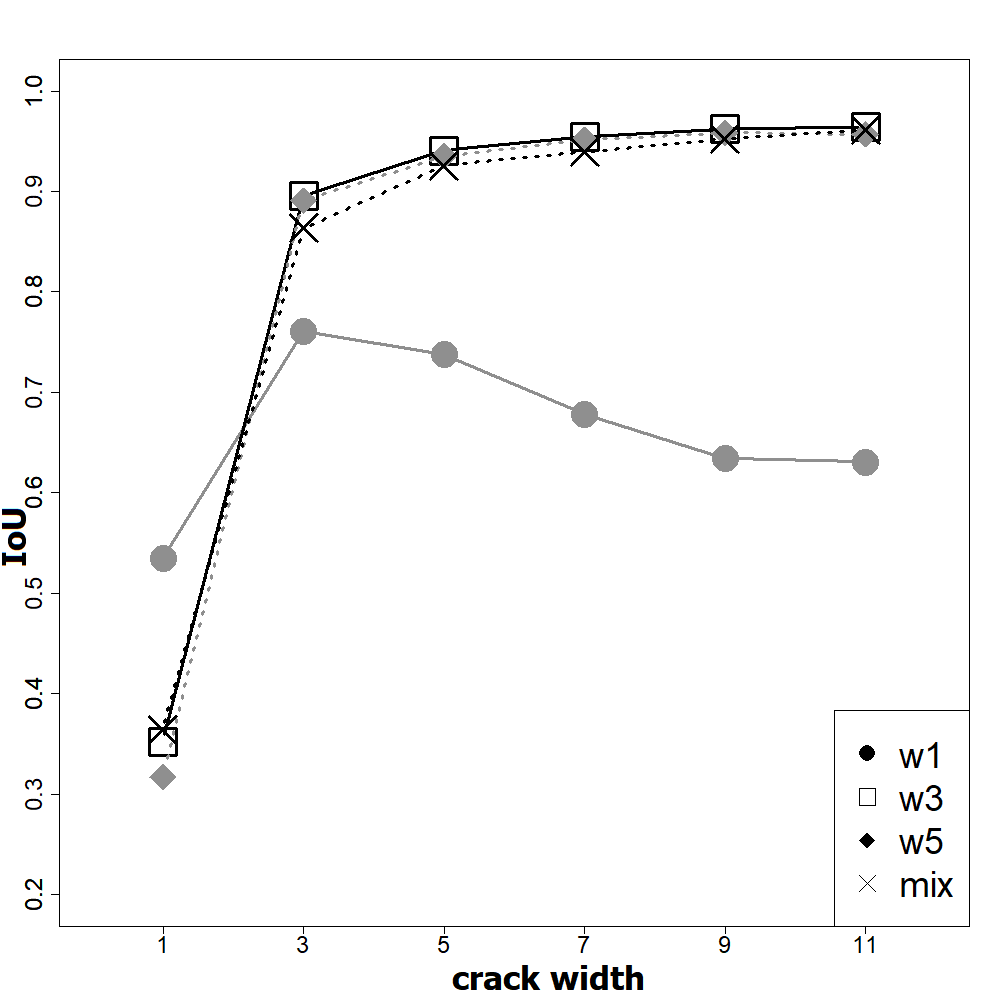}
    \includegraphics[width=0.32\textwidth]{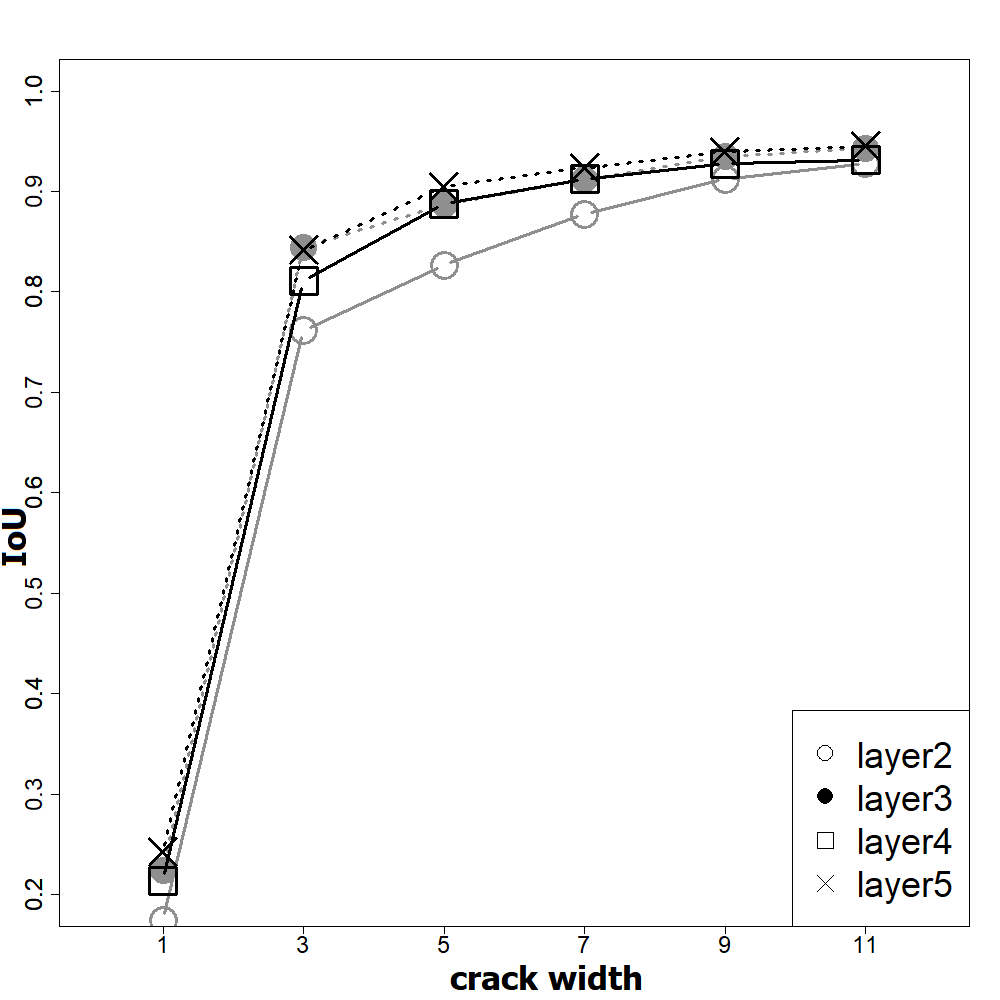}
    \caption{Experiment 1. Effect of the training set size (left), the crack width in the training set (center), and the network depth (right) on generalization to unseen scales. The baseline Riesz network is marked with $1\,947$ (left), w3 (center), and layer 4 (right) and with square symbol $\square$. Quality metric: IoU.}
    \label{fig:training-size-experiment}
    
\end{figure*}

\paragraph{Choice of crack width for training:}
There are two interesting questions with respect to crack width. Which crack width is suitable for training of the Riesz network? Do varying crack thicknesses in the training set improve performance significantly?

To investigate these questions, we choose three training data sets with cracks of fixed widths $1$, $3$, or $5$. A fourth data set combines crack widths $1$, $3$, and $5$. We train the Riesz network with these sets and evaluate its generalization performance across scales. 
Results are summarized in Fig.~\ref{fig:training-size-experiment} (center) and Table~\ref{tab:ablation}. 
Crack widths $3$ and $5$ yield similar results, while crack width $1$ seems not to be suitable, except when trying to segment cracks of width $1$.
Cracks of width $1$ are very thin, subtle, and in some cases even disconnected. Hence, they differ significantly from thicker cracks which are 8-connected and have a better contrast to the concrete background. 
This indicates that very thin cracks should be considered a special case which requires somewhat different treatment. Rather surprisingly, using the mixed training data set does not improve the metrics. Diversity with respect to scale in the training set seems not to be a decisive factor when designing Riesz networks.

\paragraph{Number of layers:}
Finally, we investigate the explanatory power of the Riesz network depending on network depth and the number of parameters. We train four networks with $2-5$ layers and  $2\,721$, $9\,169$, $18\,825$, and $34\,265$ parameters, respectively, on the same data set for $50$ epochs. The network with $5$ layers has structure $16\to32\to40\to48\to64$ and every other network is constructed from this one by removing the required number of layers at the end.
Results are shown in Table~\ref{tab:ablation} and in Fig.~\ref{fig:training-size-experiment} (right). The differences between the networks with $3$, $4$, and $5$ layers are rather subtle. For the Riesz network with only $2$ layers, performance deteriorates considerably ($3-5\%$ in Dice and IoU).\\$ $\\ 
In general, Riesz networks appear to be robust with respect to training set size, depth of network, and number of parameters. Hence, it is not necessary to tune many parameters or to collect thousands of images to achieve good performance, in particular for generalization to unseen scales. For the choice of crack width, $3$ and $5$ seem appropriate while crack width $1$ should be avoided. 

\begin{figure*}
    \centering
    \includegraphics[width = 0.32\textwidth]{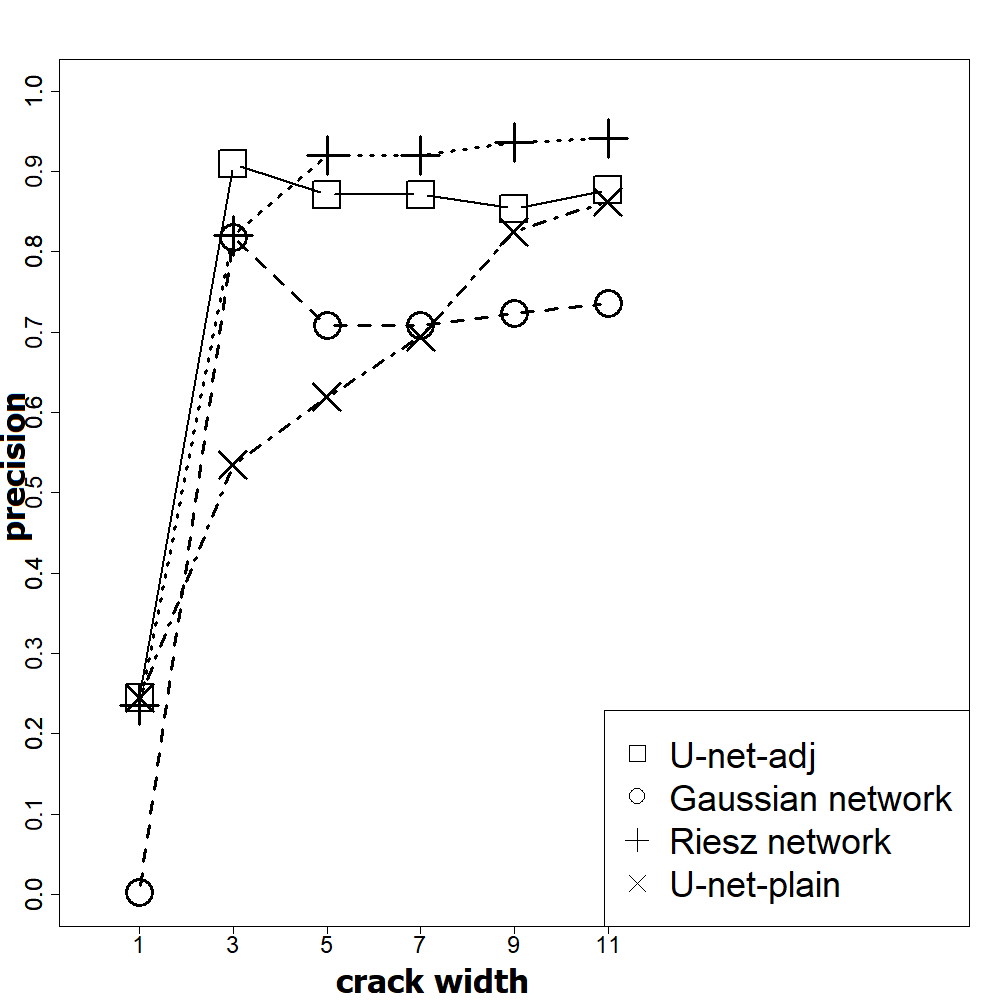}
    \includegraphics[width = 0.32\textwidth]{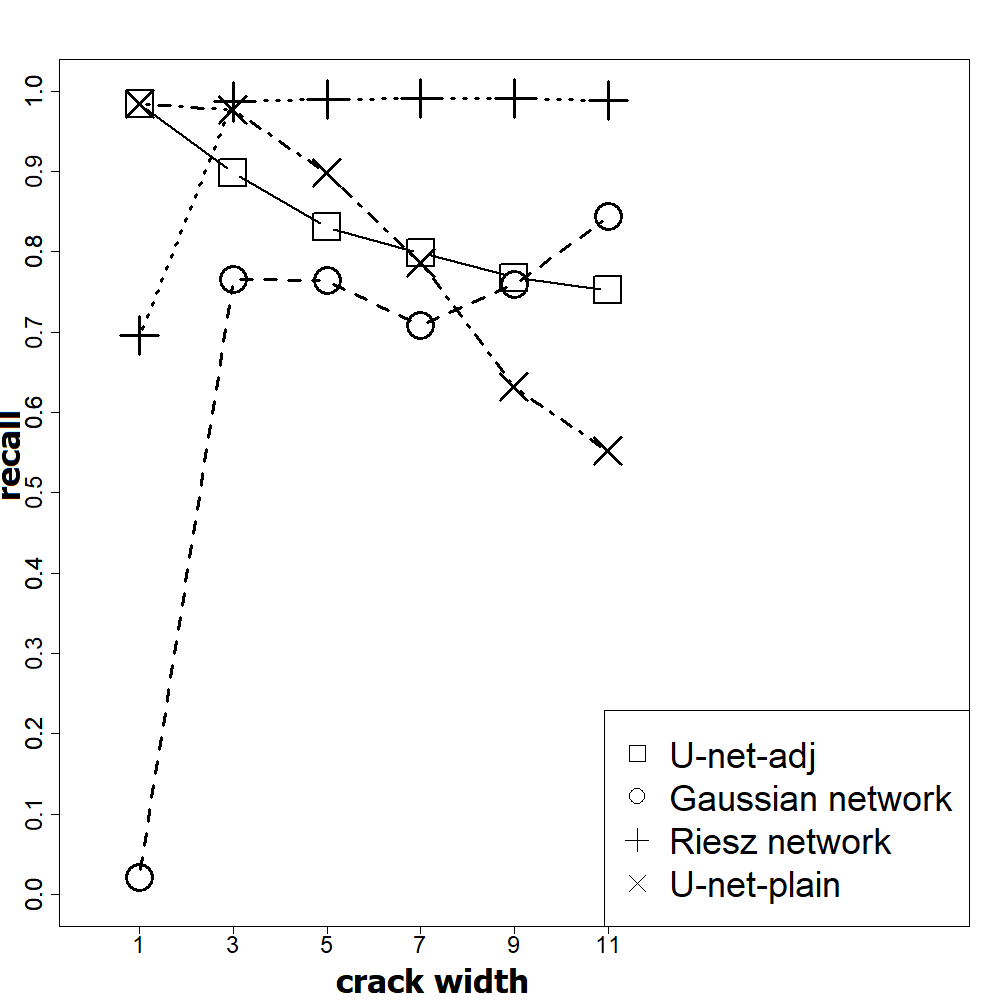}
    \includegraphics[width = 0.32\textwidth]{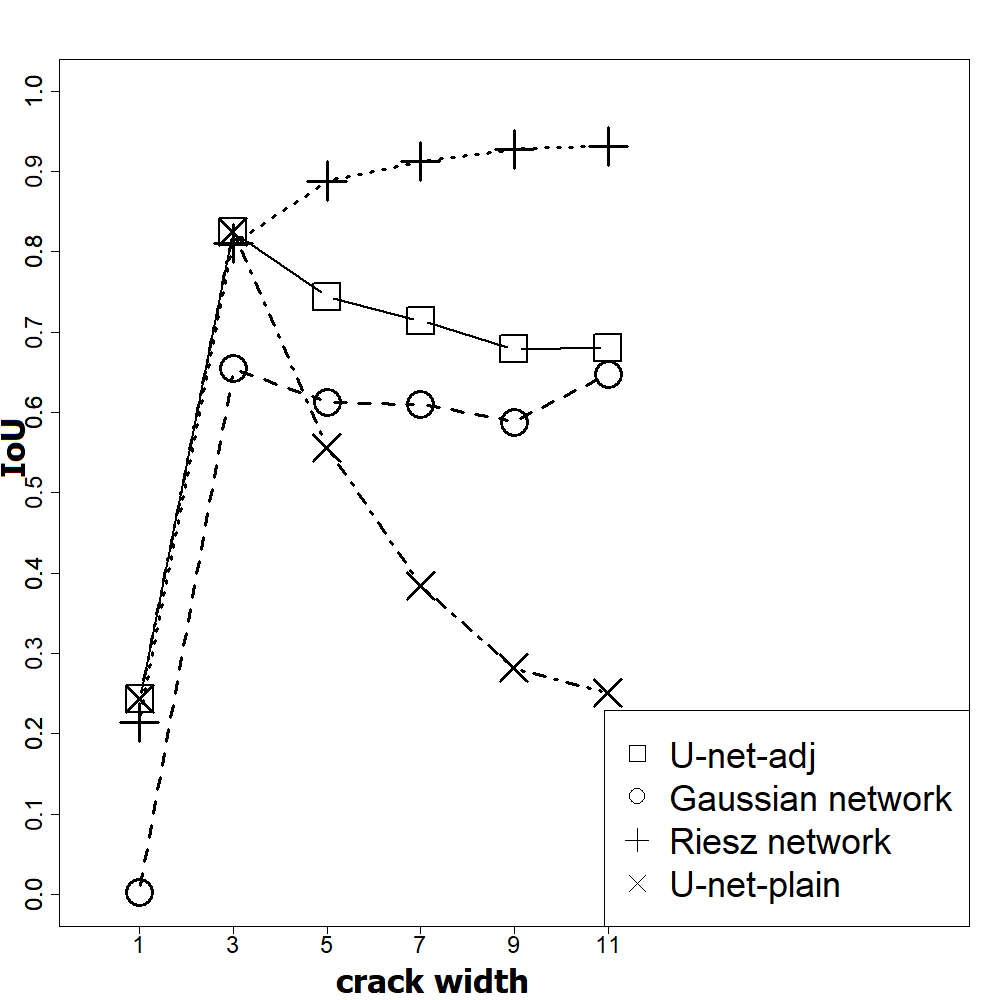}
    \caption{Experiment 1. Comparison of the competing methods. Results of the simulation study with respect to crack width. Training on crack width 3. Quality metrics (from left to right): precision, recall, and IoU.}
    \label{fig:graph-results}
\end{figure*}

\begin{figure*}
    \centering
    \hfill
    \includegraphics[width = 0.23\textwidth]{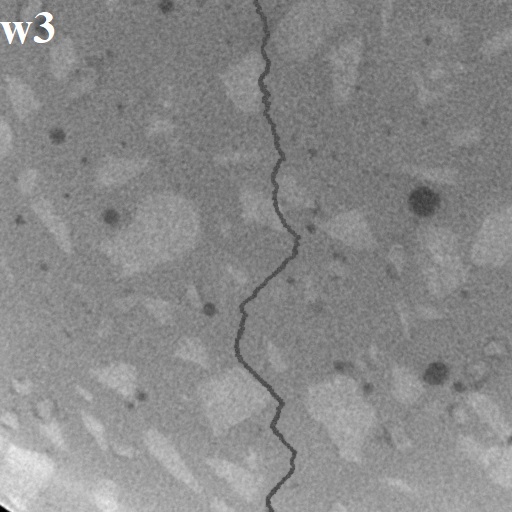}
    \includegraphics[width = 0.23\textwidth]{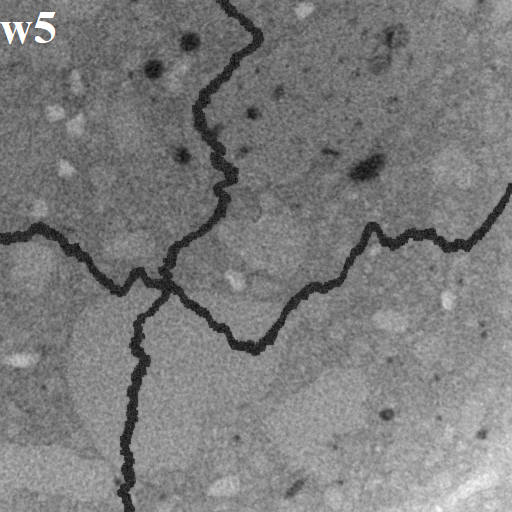}
    \includegraphics[width = 0.23\textwidth]{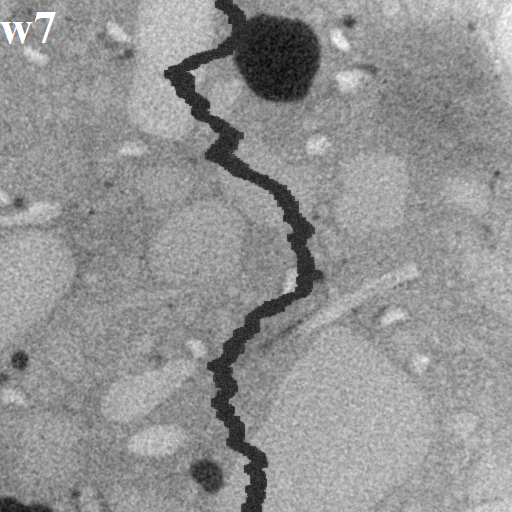}
     \includegraphics[width = 0.23\textwidth]{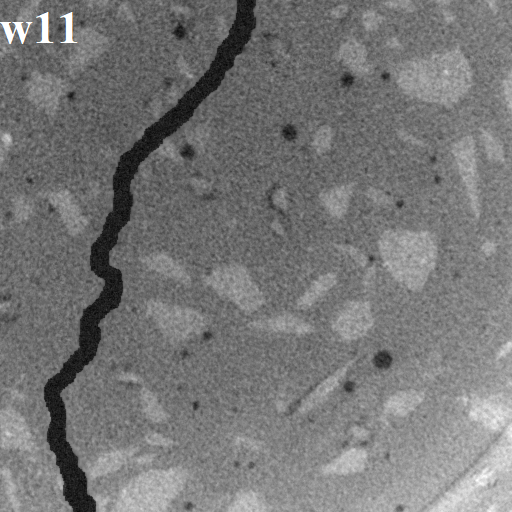}    
     
    \hfill
    \includegraphics[width = 0.23\textwidth]{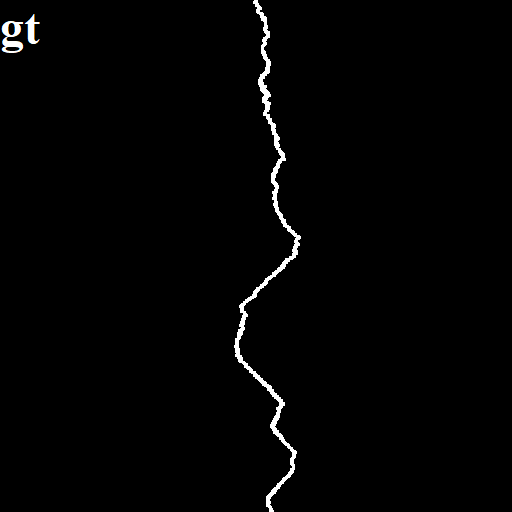}
    \includegraphics[width = 0.23\textwidth]{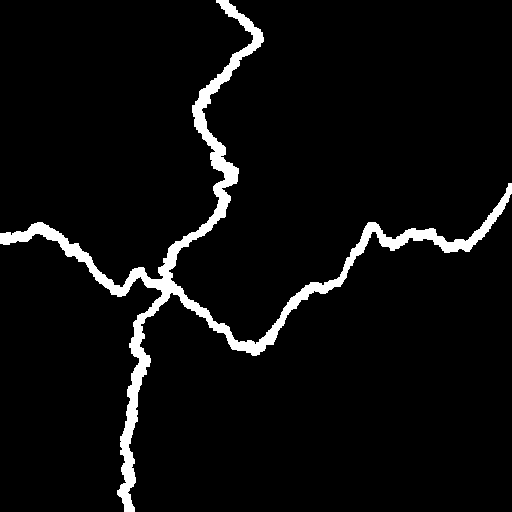}
    \includegraphics[width = 0.23\textwidth]{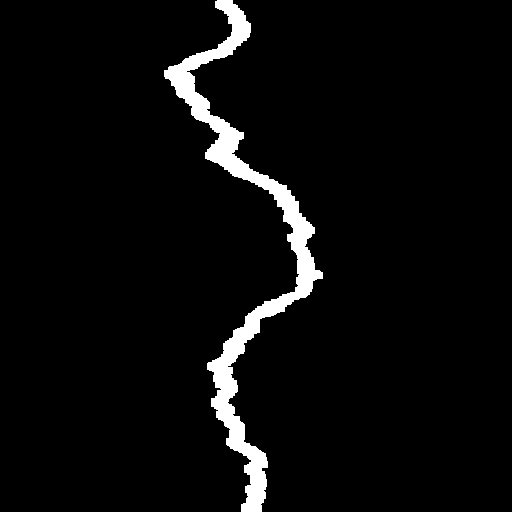}
     \includegraphics[width = 0.23\textwidth]{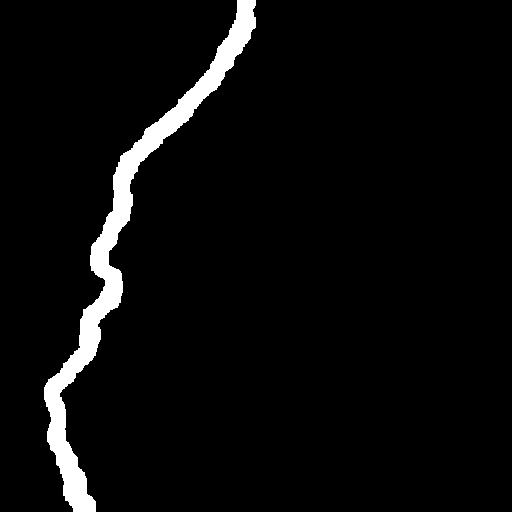}
    
    \hfill
    \includegraphics[width = 0.23\textwidth]{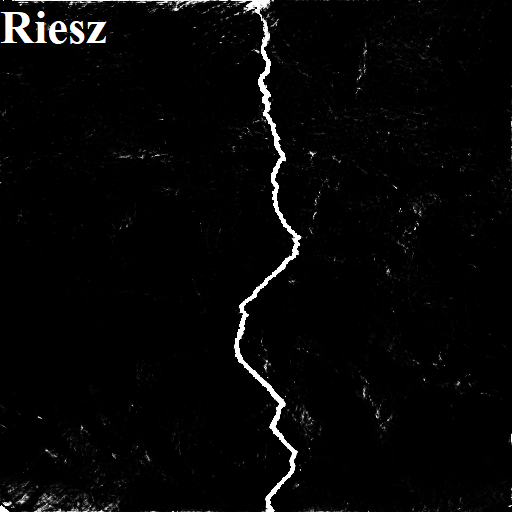}
    \includegraphics[width = 0.23\textwidth]{figures/fresh-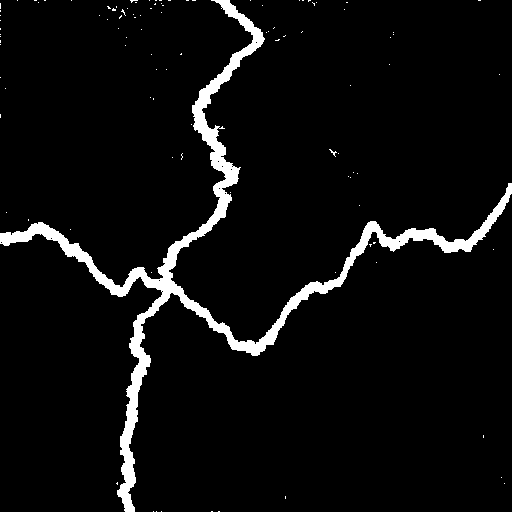}
    \includegraphics[width = 0.23\textwidth]{figures/fresh-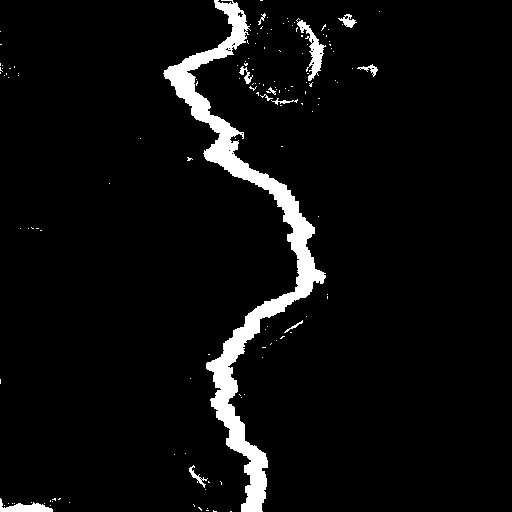}
    \includegraphics[width = 0.23\textwidth]{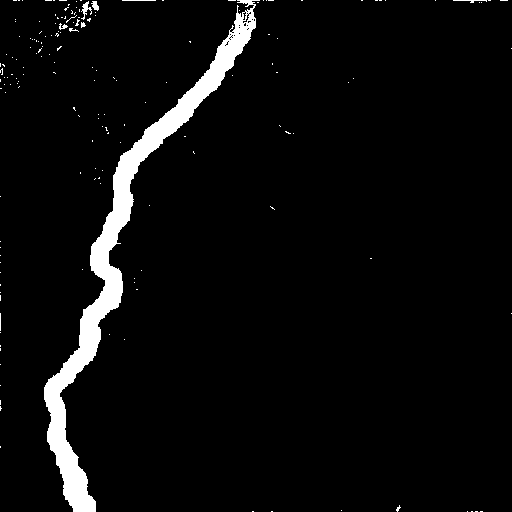}
    
    \hfill
    \includegraphics[width = 0.23\textwidth]{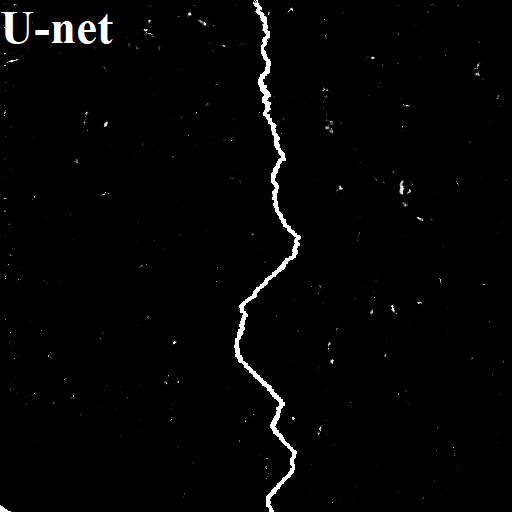}
    \includegraphics[width = 0.23\textwidth]{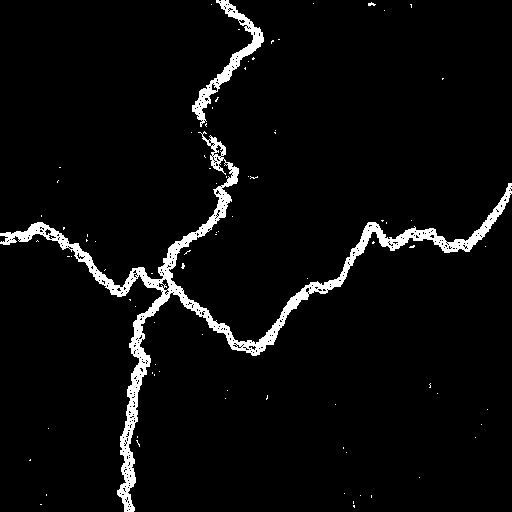}
    \includegraphics[width = 0.23\textwidth]{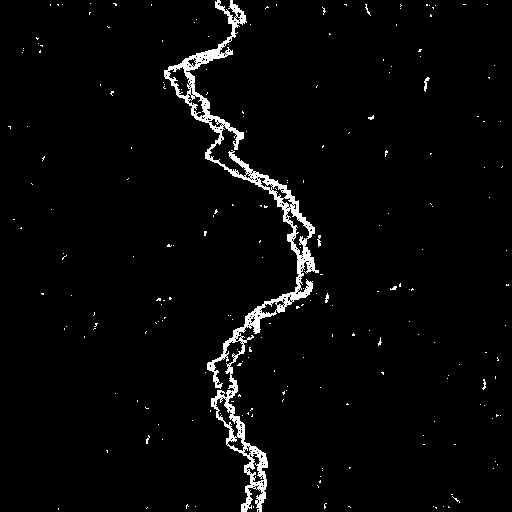}
    \includegraphics[width = 0.23\textwidth]{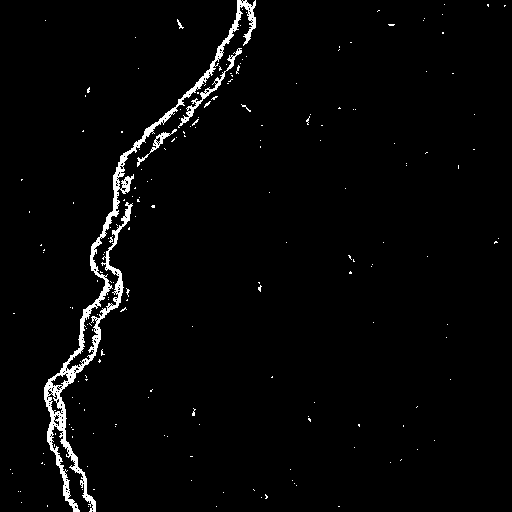}
    
    \hfill
    \includegraphics[width = 0.23\textwidth]{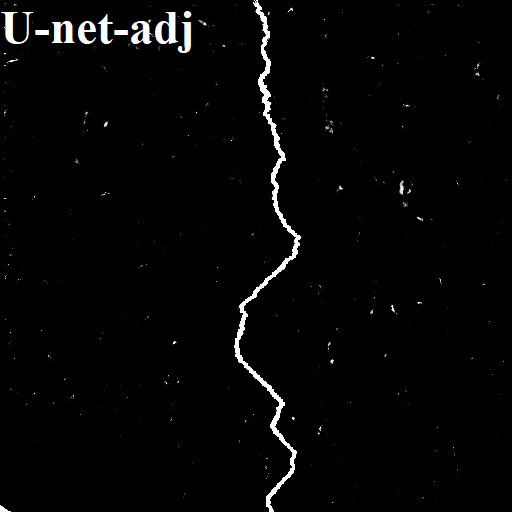}
    \includegraphics[width = 0.23\textwidth]{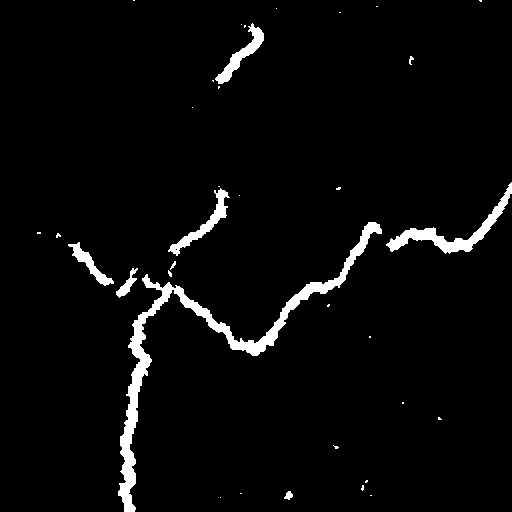}
    \includegraphics[width = 0.23\textwidth]{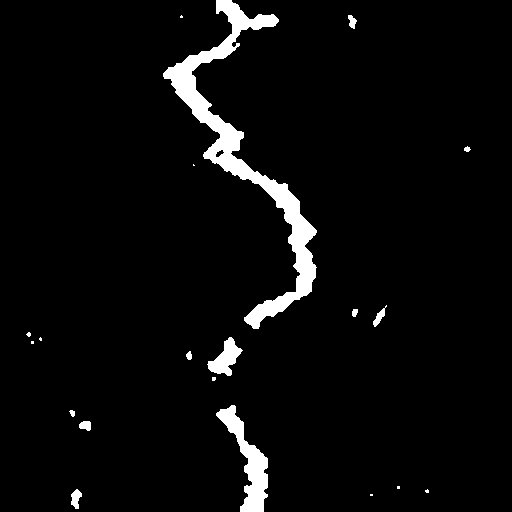}
    \includegraphics[width = 0.23\textwidth]{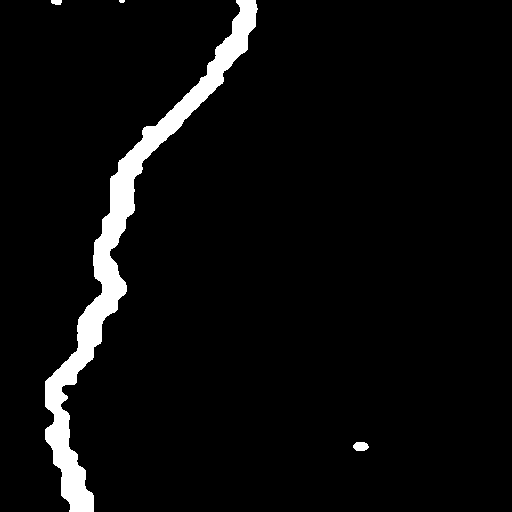}
    \\
    
    \hfill
    \includegraphics[width = 0.23\textwidth]{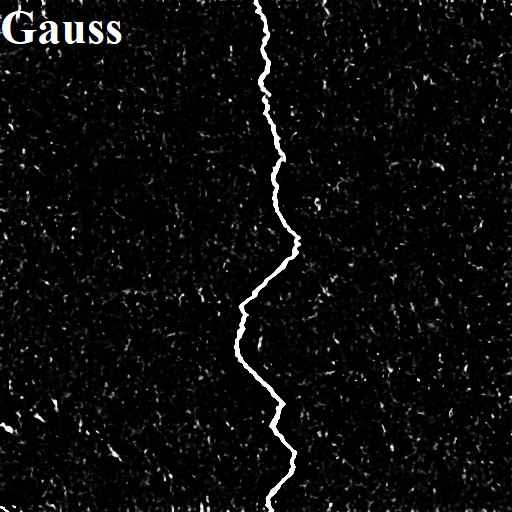}
    \includegraphics[width = 0.23\textwidth]{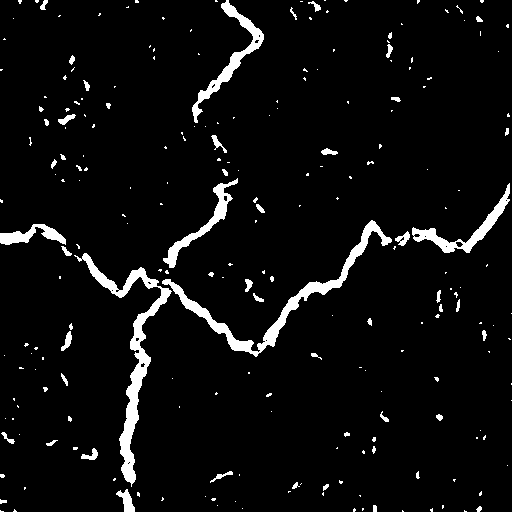}
    \includegraphics[width = 0.23\textwidth]{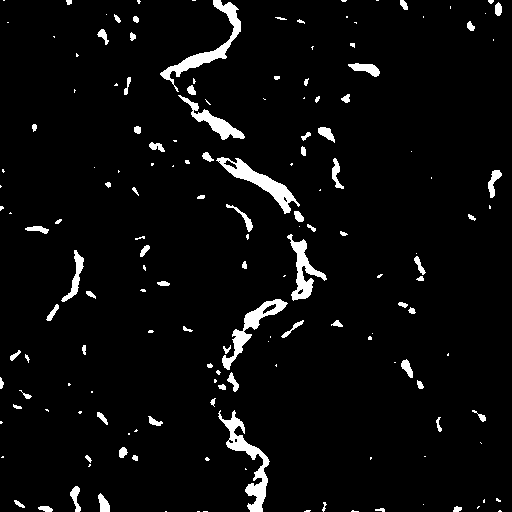}
     \includegraphics[width = 0.23\textwidth]{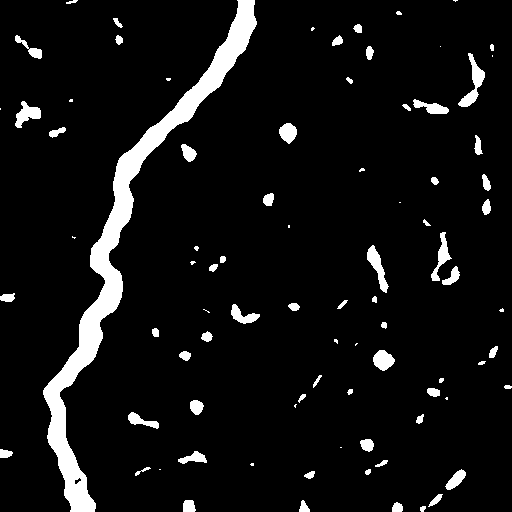}

    \caption{Experiment 1. Columns (from left to right): crack of widths 3, 5, 7, and 11. Rows (from top to bottom): input image, ground truth, Riesz network, plain U-net, U-net with scale adjustment, and Gaussian derivative network. All images have size $512\times 512$ pixels.}
    \label{fig:test-fix-width-results}
\end{figure*}

 \begin{table*}
     \centering
     \begin{tabular}{|c|c|c|c|c|c|c|}
     \hline
    \multirow{2}{*}{Method} & \multicolumn{1}{|c|}{w1} & \multicolumn{1}{|c|}{w3} & \multicolumn{1}{|c|}{w5} & \multicolumn{1}{|c|}{w7} & \multicolumn{1}{|c|}{w9} & \multicolumn{1}{|c|}{w11}\\
      \hhline{~|-|-|-|-|-|-|}
     & Dice & Dice & Dice & Dice & Dice & Dice \\
     \hline
     U-net plain & 0.391 & 0.904  & 0.715  & 0.555  & 0.440  & 0.401 \\
      U-net scale adj. & 0.391 & 0.904 & 0.853 & 0.833 & 0.809& 0.810
      \\
      U-net-mix scale adj. & \textbf{0.420} & \textbf{0.917} & 0.929 & 0.916 & 0.921 & 0.921 \\
      \hline
      Gaussian network & 0.004 & 0.765 & 0.764 & 0.709  & 0.759& 0.843 \\
       \hline
       Riesz network & 0.352 & 0.895 & \textbf{0.941} & \textbf{0.954} & \textbf{0.962} & \textbf{0.964} \\
     \hline
    \end{tabular}
     \caption{Experiment 1. Comparison with competing methods: Dice coefficients for segmentation of cracks of differing width. Training was performed on crack width 3. Best performing method bold. Both Gaussian network and U-net adj. are applied on a single scale that was selected according to the crack width.}
     \label{tab:comparison-fixed-scale}
 \end{table*}
 
\subsubsection{Comparison with competing methods}
\label{sec:competing-methods}

\paragraph{Competing methods:}
The four layer Riesz network is compared to two other methods -- Gaussian derivative networks \cite{lindeberg21}  and 
U-net \cite{brox15} on either rescaled images \cite{jansson22} or an image pyramid \cite{jung22}. 
The Gaussian derivative network uses scale space theory based on the Gaussian kernel and the diffusion equation. Using the $\gamma$-normalized Gaussian derivatives from \cite{lindeberg98},
layers of first and second order Gaussian derivatives are constructed \cite{lindeberg21}.
We shortly state differences between our reimplementation and the original work \cite{lindeberg21}. In order to reduce the computation time, we use a version of the Gaussian network that has a single scale channel corresponding to the training thickness during the training, while additional channels with shared weights are added for the inference, i.e. testing the scale generalization.
This version of the Gaussian network is different from the original one \cite{lindeberg21}, which has more scale channels during training.
Using multiple scale channels in the training step has been found to result in better scale generalization properties if the different scale channels were
allowed to compete against each other during the training stage. However, this increases the computational burden compared to a single channel network.
In this section we use a single channel version of the network but with $\sigma$ adjusted to the crack width.
The sparser scale sampling ratio of 2 used in this reimplementation from Section \ref{sec:multiscale-data} is, however,
expected to lead to lower performance compared to using a scale sampling ratio of
$\sqrt{2}$, as used in the original work.

U-net has around $2.7$ million parameters, while the Gaussian derivative network has the same architecture as the Riesz network and hence the same number of parameters ($18$k).

We design an experiment for detailed analysis and comparison of the ability of the methods to  generalize to scales unseen during training.
In typical applications, the thickness of the cracks would not be known. Here, crack thickness is kept fixed such that the correct scale of cracks is a priori known. This allows for a selection of an optimal scale (or range of scales) such that we have a best case comparison. 
 For the Gaussian derivative network, scale is controlled by the standard deviation parameter $\sigma$ which is set to the half width of the crack.
Here, we have avoided the inference of the Gaussian network with multiple scale channels and have used the assumption that the scale is a priori known. For the U-net, scale is adjusted by downscaling the image to match the crack width used in the training data. 
Here, we restrict the downscaling to discrete factors in the set $\{2,4,8,... \}$ that were determined during validation. 
For widths $1$ and $3$, no downscaling is needed. For width $5$, the images are downscaled by $2$, 
for width $7$ by $4$, and for widths $9$ and $11$ by $8$. For completeness, we include results for the U-net without downscaling denoted by "U-net plain".
Table~\ref{tab:comparison-fixed-scale} yields the prediction quality measured by the Dice coefficient, while the other quality measures are shown in Fig. \ref{fig:graph-results}. Exemplary segmentation results are shown in Fig.~\ref{fig:test-fix-width-results}.

As expected, the performance of the plain U-net decreases with increasing scale. 
Scale adjustment stabilizes U-net's performance but requires manual selection of 
scales. Moreover, the interpolation in upsampling and downsampling might induce additional 
errors. The decrease in performance with growing scale is still apparent ($10-15\%$) but 
significantly reduced compared to the plain U-net $(55\%)$. 
To get more insight into performance and characteristics of the U-net, we add an 
experiment similar to the one from \cite{jansson22}: We train the U-net on crack widths $1$, 
$3$, and $5$ on the same number of images as for one single crack width. This case is referred to 
"U-net-mix scale adj." in Table~\ref{tab:comparison-fixed-scale}. 
Scales are adjusted similarly: w5 and w7 are downscaled by factor $2$, w9 and w11 are downscaled by factor $4$. The results are significantly better than those obtained by the U-net trained on the
single width ($10-15\%$ in Dice and IoU on unseen scales), but still remain worse than the
Riesz network trained on a single scale (around $7\%$ in Dice and IoU on unseen scales).

The Gaussian derivatives network is able to generalize steadily
across the scales (Dice and IoU $74\%$) but nevertheless performs worse than the scale 
adjusted U-net (around $10\%$ in IoU). Moreover, it is very sensitive to noise and typical CT 
imaging artifacts (Fig.~\ref{fig:test-fix-width-results}).

On the other hand, the Riesz network's performance is very steady with growing scale. 
We even observe improving performance in IoU and Dice with increase in crack thickness. 
This is due to pixels at the edge of the crack influencing the performance metrics less and 
less the thicker the crack gets. The Riesz network is unable to precisely localize cracks of 
width $1$ as, due to the partial volume effect, such thin cracks appear to be discontinuous. 
With the exception of the thinnest crack, the Riesz network has Dice coefficients above 
$94\%$ and IoU over $88\%$ for completely unseen scales. This even holds for the cases when the 
crack is more than $3$ times thicker than the one used for training.

\subsection{Experiment 2: Performance on multiscale data}
\label{sec:multiscale-data}

\begin{table*}[]
     \centering
     \hskip-0.75cm
     \begin{tabular}{|c|c c|c c|c c|c c|c c|c c|}
     \hline
     \multirow{2}{*}{Method} & \multicolumn{4}{|c|}{Multiscale cracks} \\
      \hhline{~|-|-|-|-|}
     & Precision & Recall & Dice & IoU \\
     \hline
     U-net, plain  & \underline{0.655} & 0.322 & 0.432 & 0.275 \\
     U-net pyramid 2 & 0.598 & 0.518  & 0.555 & 0.384 \\
     U-net pyramid 3 & 0.553 & 0.623 & \underline{0.586} & \underline{0.414} \\
     U-net pyramid 4 & 0.496 & \underline{0.705} & 0.582 & 0.411 \\
     \hline
     U-net-mix, plain  & 0.471 & 0.288 & 0.358 & 0.218 \\
     U-net-mix pyramid 2 & \underline{0.626} & 0.646  & 0.635 & 0.466\\
     U-net-mix pyramid 3 & 0.624 & 0.804 & 0.703 & 0.542 \\
     U-net-mix pyramid 4 & 0.583 & \underline{0.899} & \underline{0.707} & \underline{0.547} \\
     \hline
     Gaussian network 2 & \underline{0.553} & 0.503 & 0.527 & 0.358 \\
     Gaussian network 3 & 0.418 & 0.735 & \underline{0.533} & \underline{0.364} \\
     Gaussian network 4 & 0.306 & \underline{0.857} & 0.451 & 0.291\\
     \hline
     Riesz network & \textbf{0.901} & \textbf{0.902} & \textbf{0.902} & \textbf{0.821} \\
     \hline
    \end{tabular}
     \caption{Experiment 2. Performance on simulated multiscale cracks. The highest overall value is given in bold. For each competing method, the highest value is underlined. The scale sampling rate for the Gaussian network and U-net is set to $2$, while the number of sampled scales is in the set $\{1,2,3,4 \}$.}
     \label{tab:multiscale}
\end{table*}

Since cracks are naturally multiscale structures, i.e. crack thickness varies as the crack 
propagates, the performance of the considered methods on multiscale data is analyzed as
well. On the one hand, we want to test on data with an underlying ground truth without 
relying on manual annotation prone to errors and subjectivity. On the other hand, the 
experiment should be conducted in a more realistic and less 
controlled setting than the previous one, with cracks as similar as possible to real ones. 

We therefore use again simulated cracks, this time however with varying width. 
The thickness is modeled by an adaptive dilation.
See 
Fig.~\ref{fig:multiscale-test} for realization examples. 
The change rendering our experiment more realistic than the first one is to exploit no prior information about the scale. 
The Riesz network does not have to be adjusted for this experiment while the competing 
methods require scale selection as described in Section~\ref{sec:competing-methods}. 
Without knowing the scale, testing several configurations is the only option.
See Appendix \ref{secB} for examples. Note that in this experiment we used a different crack simulation technique \cite{jung22a} than in Experiment 1. In principle, we cannot claim that either of the two techniques generates more realistic cracks. 
However, this change serves as an additional goodness check for the methods since these simulation techniques can be seen as independent.  

We adjust the U-net as follows: We downscale the image by several factors from 
$\{2,4,8,16... \}$. The forward pass of the U-net is applied to the original and every 
downscaled image. Subsequently, the downscaled images are upscaled back to the original 
size. All predictions are joined by the maximum operator. 
We report results for several downscaling factor combinations specified by a number $N$, 
which is the number of consecutive downscaling factors used, starting at the smallest factor $2$. Similarly as in Experiment 1, we report results of two U-net models: the first model is trained on cracks of width 3 as the other models in the comparison. The second model is trained on cracks with mixed widths. Including more crack widths in the training set has proven to improve the scale generalization ability in Experiment 1. Hence, the second model represents a more realistic setting that would be used in practice where the crack width is typically unknown. We denote the respective networks as "U-net pyramid" $N$ and "U-net-mix pyramid" $N$.

For the Gaussian network, we vary the standard deviation parameter $\sigma$ in the set
$\{1.5,3,6,12\}$. This selection of scales is motivated by the network having been trained 
on crack width $3$ with $\sigma = 1.5$. We start with the original $\sigma$ and double it in each step. Note that in the related study on the scale sampling for scale-channel deep networks \cite{jansson22}, using a scale sampling factor of $\sqrt{2}$ was found to lead to better performance than using a scale sampling factor of 2.
Hence, additional experimentation with hyperparameters of the method might improve the results. We decided to keep the sampling scheme as similar as possible to the one from U-net. The reason is that U-net with downscaling was more extensively tested on the crack segmentation task \cite{jung22,jung23ict}.
As for the U-net, we test several configurations, now specified by the 
number $N$ of consecutive $\sigma$ values used, starting at the smallest 
($1.5$). We denote the respective network "Gaussian network" $N$.

Results are reported in Table~\ref{tab:multiscale} and Fig.~\ref{fig:multiscale-test}. 
We observe a clear weakness of the Riesz network in segmenting thin 
cracks (Fig.~\ref{fig:multiscale-test}, first and last row). Despite of this, the recall 
is still quite high ($90\%$). However, this could be due to thicker cracks - which are handled very well - contributing 
stronger to these statistics as they occupy more pixels. 
Nevertheless, the Riesz network deals with the problem of the wide range scales 
without sampling the scale dimension, 
just with a single forward pass of the network. 

The performance of the U-net improves with including more levels in the pyramid, too. 
However, this applies only up to a certain number of levels after which the additional 
gain becomes minimal. Moreover, applying the U-net on downscaled images seems to induce
oversegmentation of the cracks (Fig.~\ref{fig:multiscale-test}, second and third row). Including a variety of crack widths in the training set improves the overall performance of U-net in all metrics. This confirms the hypothesis that U-net significantly benefits from variations in the training set. However, this model of U-net is still outperformed by the Riesz network trained on a single crack width.
The Gaussian network behaves similarly as the U-net, with slightly worse performance (according to Dice or IoU) but better crack coverage (Recall). As the number of $\sigma$ values grows, the recall increases
but at the same time artifacts accumulate across scales reducing precision. 
The best balance on this data set is found to be three scales. 

\begin{figure*}
    \centering
    \includegraphics[width = 0.24\textwidth]{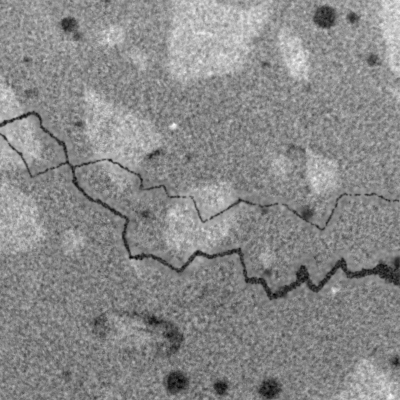}
    \includegraphics[width = 0.24\textwidth]{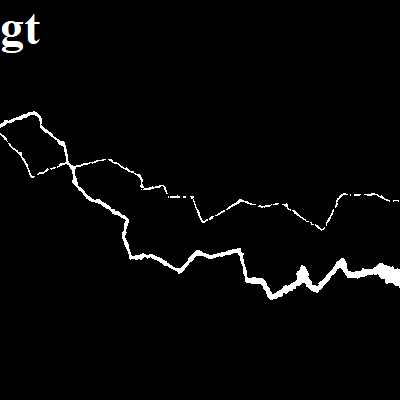}
    \includegraphics[width = 0.24\textwidth]{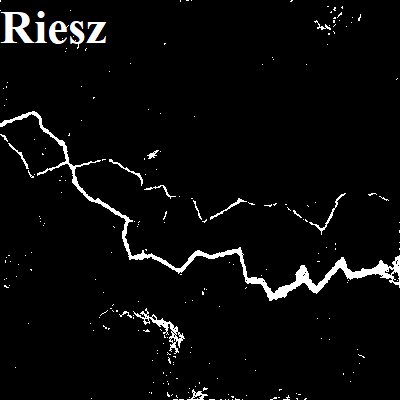}
    \includegraphics[width = 0.24\textwidth]{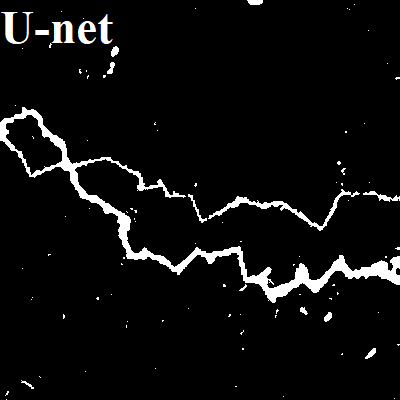}

    \includegraphics[width = 0.24\textwidth]{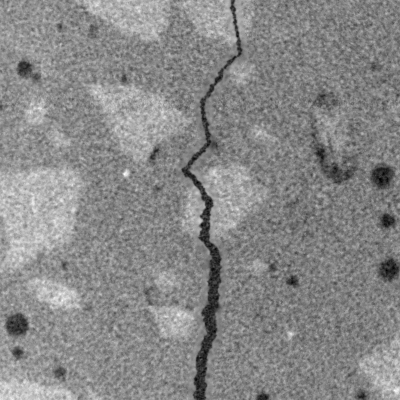}
    \includegraphics[width = 0.24\textwidth]{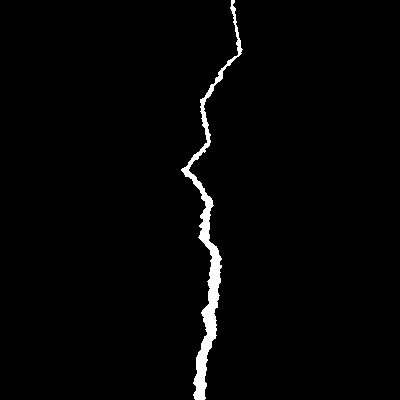}
    \includegraphics[width = 0.24\textwidth]{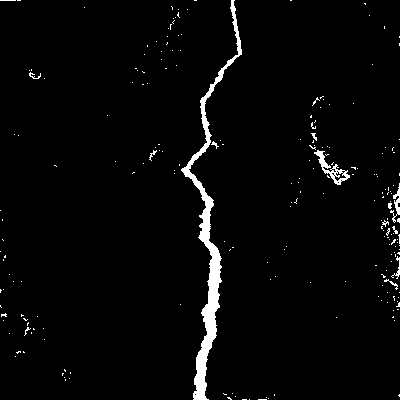}
    \includegraphics[width = 0.24\textwidth]{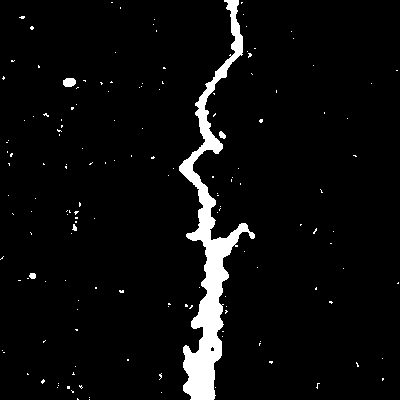}

    \includegraphics[width = 0.24\textwidth]{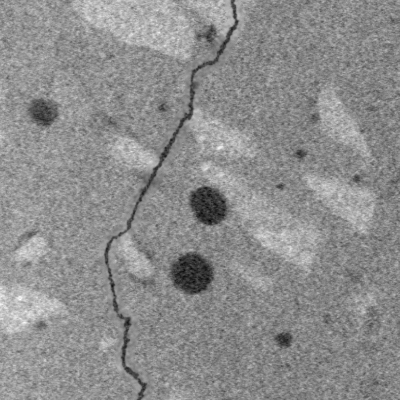}
    \includegraphics[width = 0.24\textwidth]{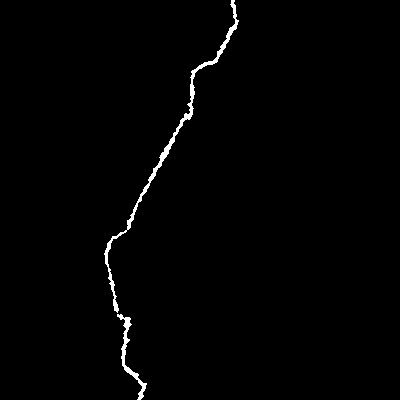}
    \includegraphics[width = 0.24\textwidth]{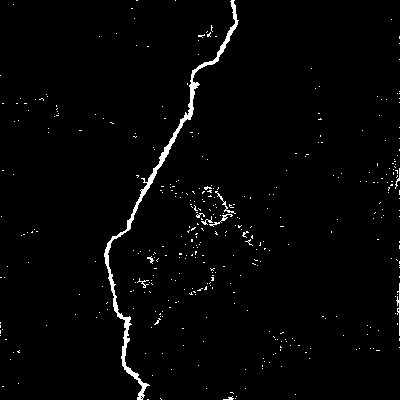}
    \includegraphics[width = 0.24\textwidth]{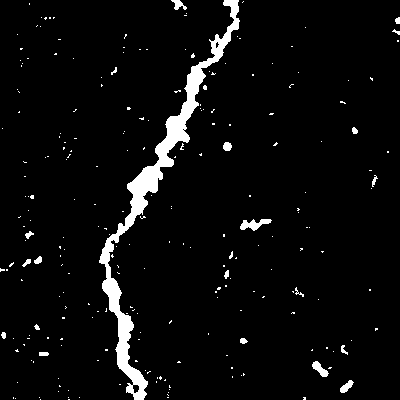}

    \includegraphics[width = 0.24\textwidth]{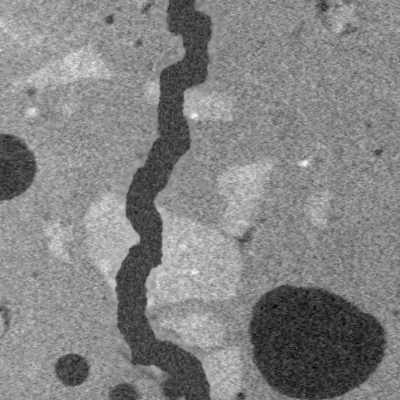}
    \includegraphics[width = 0.24\textwidth]{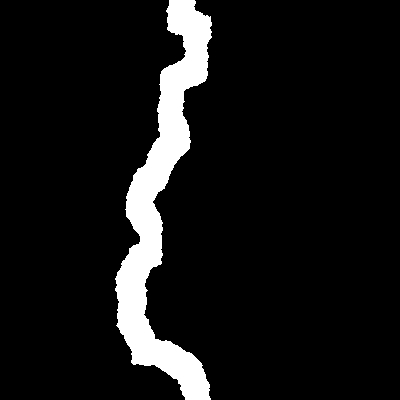}
    \includegraphics[width = 0.24\textwidth]{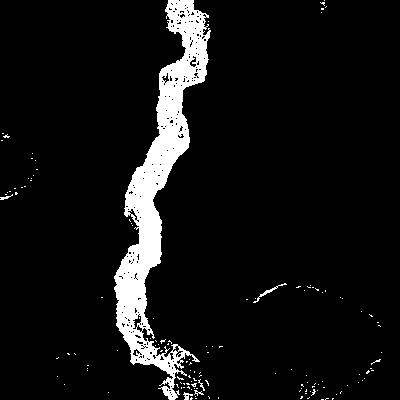}
    \includegraphics[width = 0.24\textwidth]{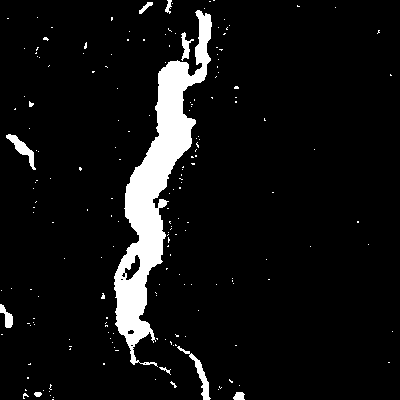}

    \includegraphics[width = 0.24\textwidth]{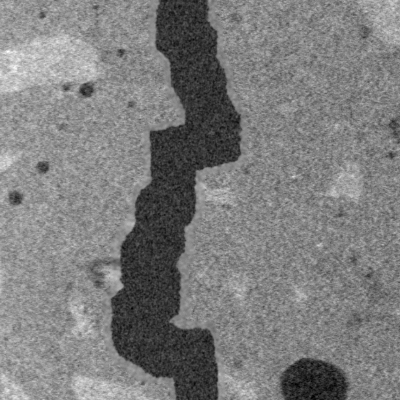}
    \includegraphics[width = 0.24\textwidth]{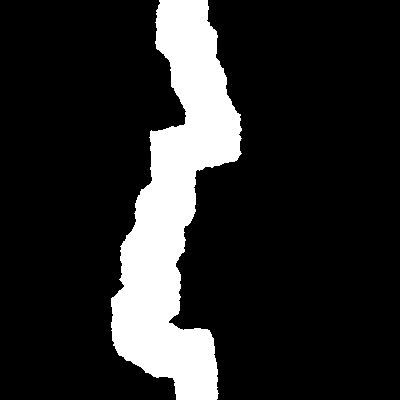}
    \includegraphics[width = 0.24\textwidth]{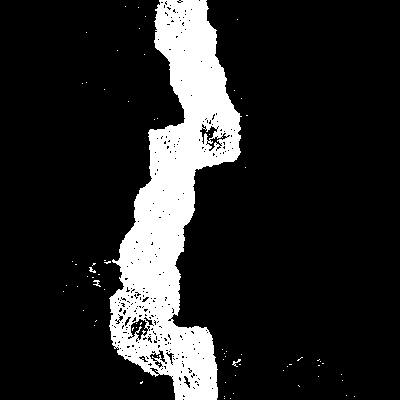}
    \includegraphics[width = 0.24\textwidth]{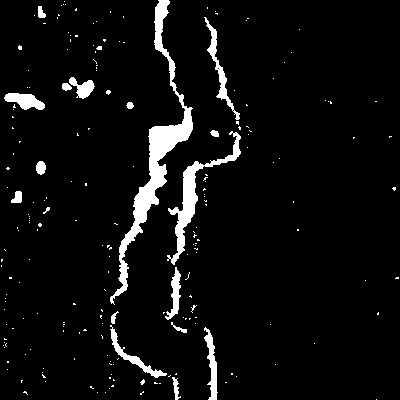}
    \\
    \includegraphics[width = 0.24\textwidth]{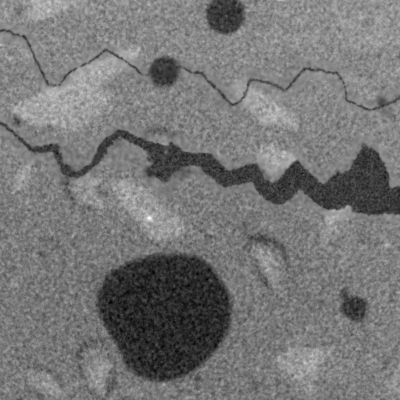}
    \includegraphics[width = 0.24\textwidth]{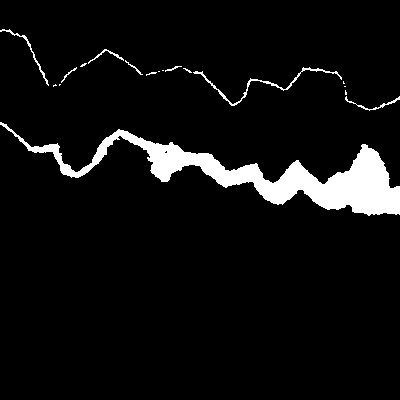}
    \includegraphics[width = 0.24\textwidth]{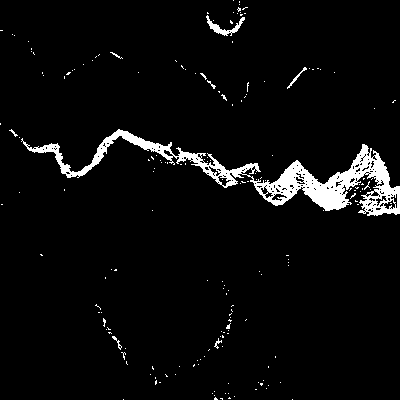}
    \includegraphics[width = 0.24\textwidth]{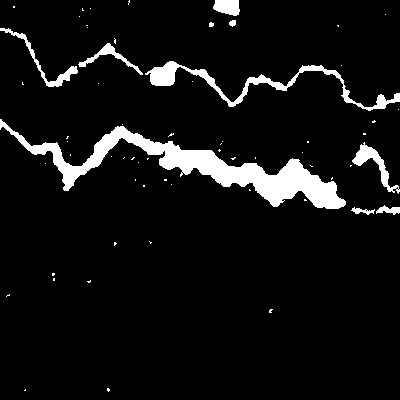}
    \caption{Experiment 2. Cracks with varying width. 
From left to right: input image, results of the Riesz network and the U-net with 4 pyramid levels. Image size $400 \times 400$ pixels.}
    \label{fig:multiscale-test}
\end{figure*}

\begin{figure*}
    \centering
    \includegraphics[width = 0.32\textwidth]{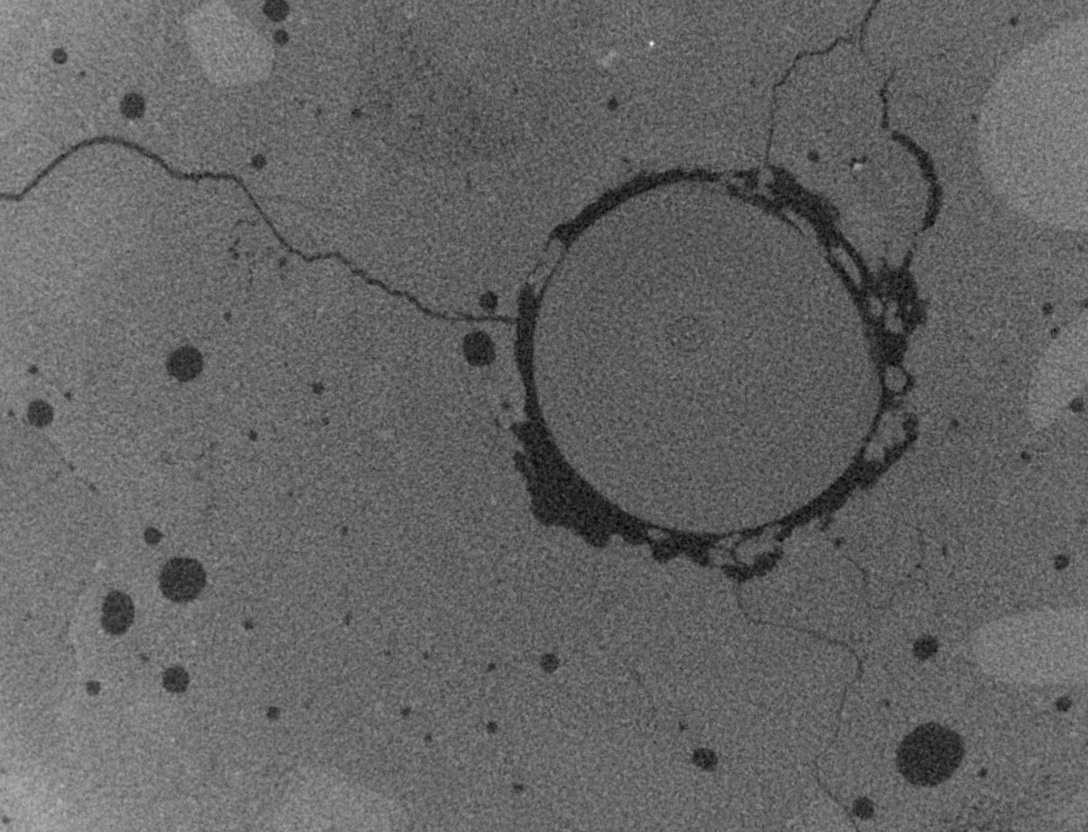}
    \includegraphics[width = 0.32\textwidth]{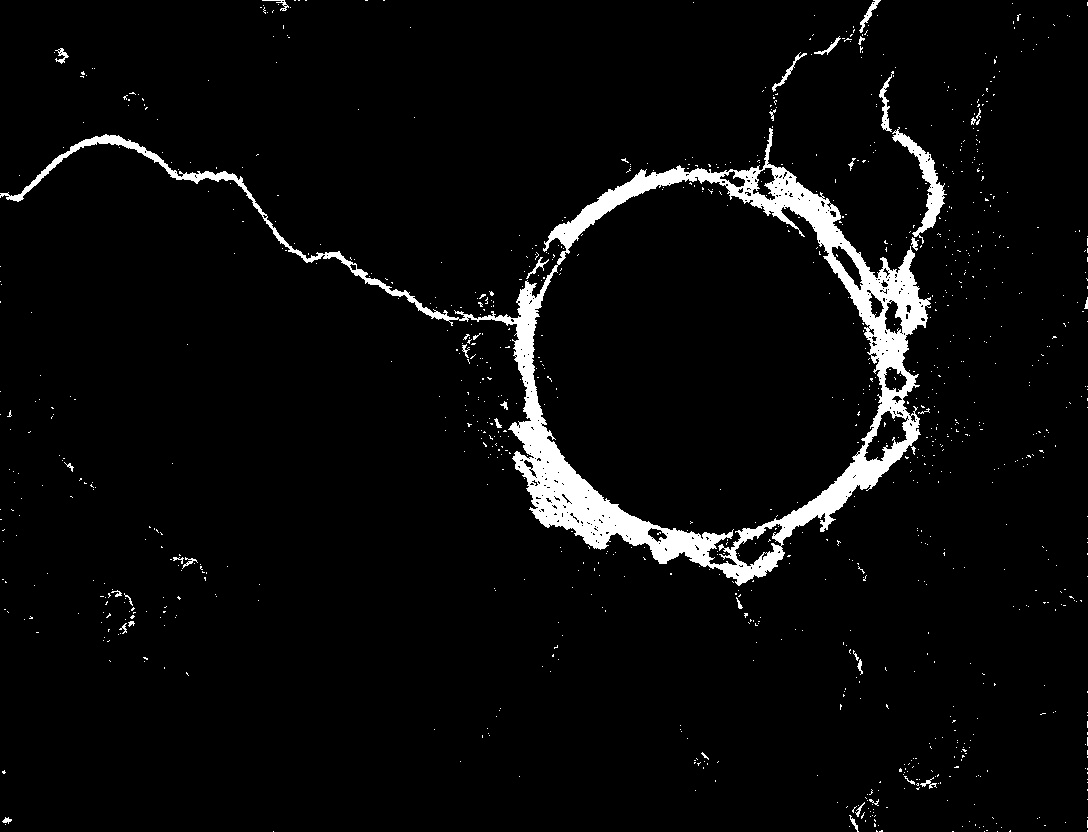}
    \includegraphics[width = 0.32\textwidth]{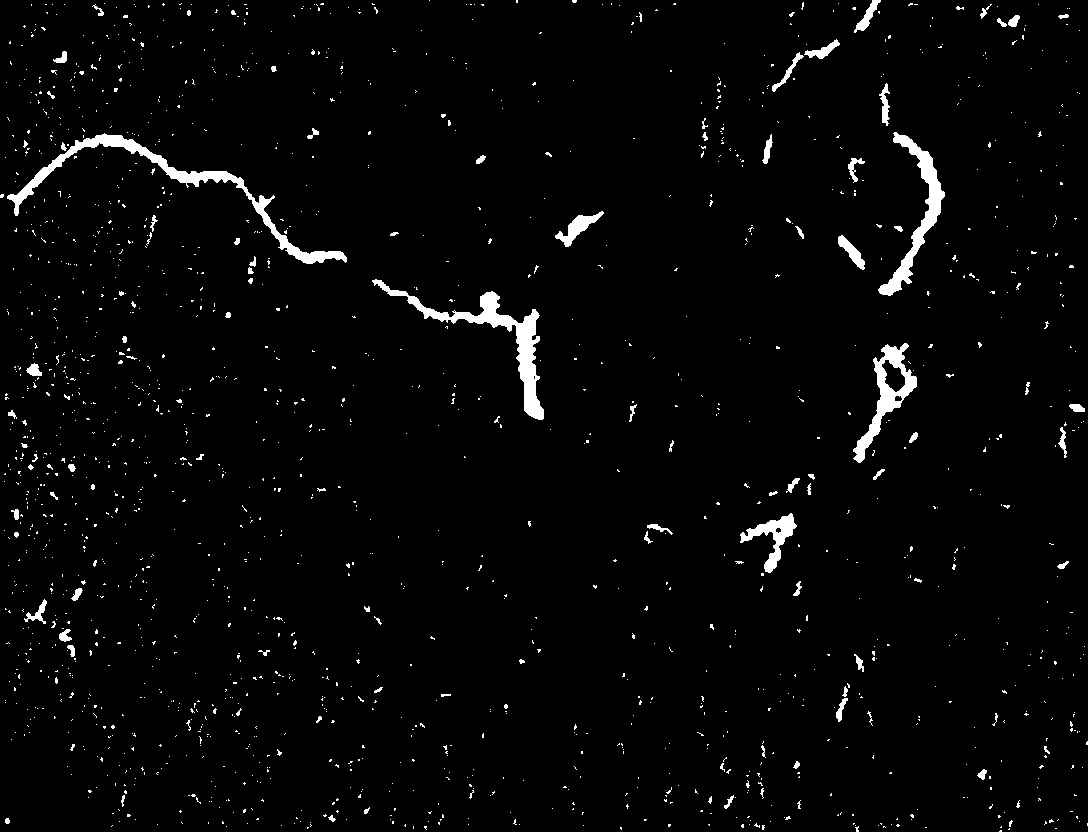}
    
    \includegraphics[width = 0.32\textwidth]{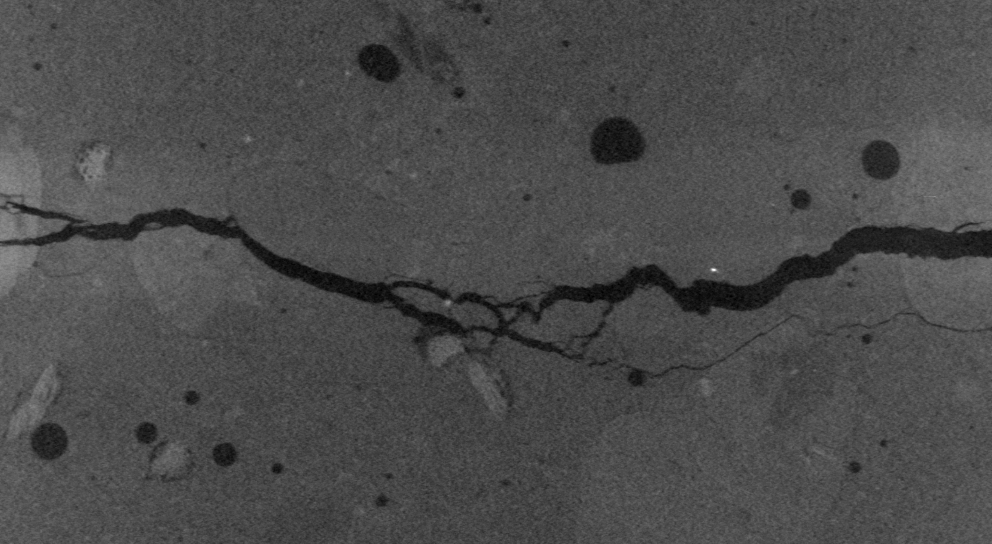}
    \includegraphics[width = 0.32\textwidth]{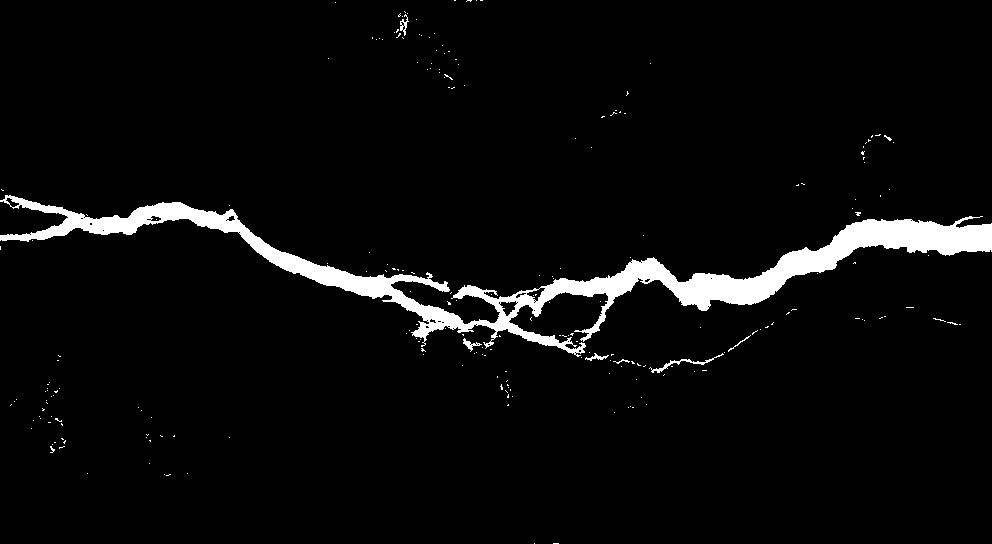}
    \includegraphics[width = 0.32\textwidth]{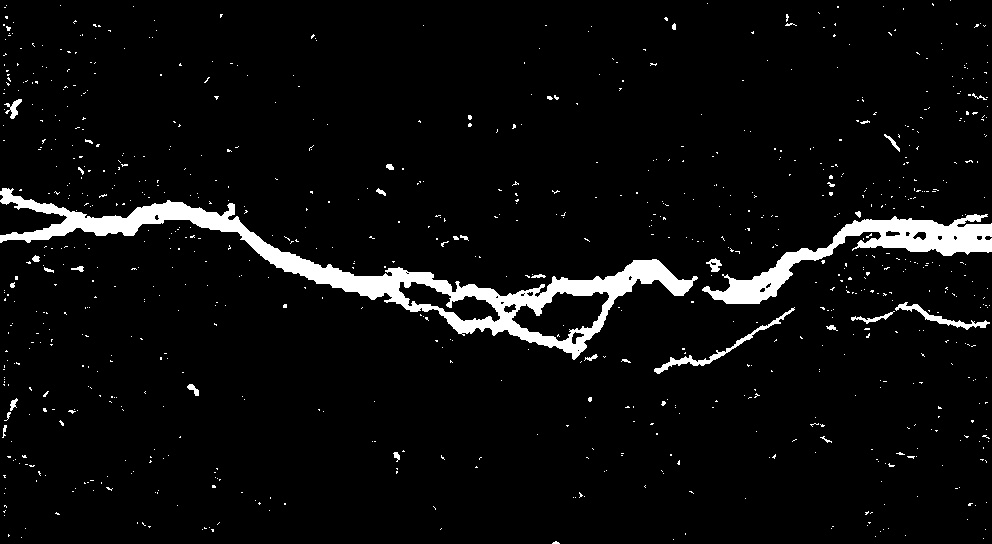}
    \caption{Experiment 3. Real cracks in concrete: slice from input CT image, results of the Riesz network and of U-net with 2 pyramid levels. 
    Image sizes are $832 \times 1\,088$ (1st row) and $544 \times 992$ (2nd row).}
    \label{fig:real-data}
\end{figure*}

\subsection{Experiment 3: Application to cracks in CT images of concrete}

Finally, we check the methods' performance on real data: 
cracks in concrete samples generated by tensile and pull-out tests. In these examples, the crack thickness 
varies from $1$ or $2$ pixels to more than $20$ pixels (Fig.~\ref{fig:real-data}). 
This 
motivates the need for methods that automatically generalize to completely unseen scales. 
Here, we can assess the segmentation results qualitatively, only, as no ground truth is
available. Manual segmentation of cracks in high resolution images is time consuming and 
prone to individual biases.
Additional experiments on real cracks in the different types of concrete are shown in Appendix \ref{secC}.

The first sample (Fig.~\ref{fig:real-data}, first row) is a concrete cylinder with a glass fiber reinforced composite bar
embedded along the center line. A force is applied to this bar to pull it out of the 
sample and thus initiate cracking. 
Cracking starts around the bar and branches in three 
directions: left, right diagonal, and down (very subtle, thin crack). Crack thicknesses and
thus scales vary depending 
on the crack branch. 
As before, our Riesz network is able to handle all but the finest crack thicknesses 
efficiently in a single forward pass without specifying the scale range. The U-net on the 
image pyramid requires a selection of 
downsampling steps (Appendix \ref{secB}), accumulates artifacts from all levels of the pyramid, and slightly 
oversegments thin cracks (left branch).

The second sample (Fig.~\ref{fig:real-data}, second row) features a horizontal crack induced by a tensile test. Here we observe 
permanently changing scales, similar to our simulated multiscale data. The crack thickness 
varies from a few to more than $20$ pixels. Once more, the Riesz network handles the scale variation well and segments almost all cracks with minimal artifacts. 
In this example, U-net covers the cracks well, too, even the very subtle ones. 
However, it accumulates more false positives in the areas of concrete without any cracks than the Riesz network.

\section{Conclusion}
In this paper we introduced a new type of scale invariant neural network based on the Riesz transform as filter basis instead of standard convolutions. 
Our Riesz neural network is scale invariant in one forward pass without specifying 
scales or discretizing and sampling the scale dimension. 
Its ability to generalize to scales differing from those trained on is tested and 
validated in segmenting cracks in 2d slices from CT images of concrete. 
Usefulness 
of the method become manifest in the fact that only one fixed scale is needed for training, while preserving generalization to completely unseen scales. This reduces the effort for data collection, generation or simulation. 
Furthermore, our network has relatively few parameters (around $18$k) which reduces the danger of overfitting.

Experiments on simulated yet realistic multiscale cracks as well as on real cracks 
corroborate the Riesz network's potential. Compared to other deep learning methods 
that can generalize to unseen scales, the Riesz network yields improved, more robust, and more stable results.

A detailed ablation study on the network parameters reveals several interesting features:
This type of networks requires relatively few data to generalize well. The 
Riesz network proves to perform well on a data set of approximately $200$ images before 
augmentation. This is particularly useful for deep learning tasks where data acquisition is exceptionally complex or expensive. 
The performance based on the depth of the network and the number of parameters has been analyzed. Only three layers of the
network suffice to achieve good performance on cracks in 2d slices of CT images. Furthermore, the choice of crack thickness in the training set is found to be not 
decisive for the performance. Training sets with crack widths $3$ and $5$ yield very similar results.

The two main weaknesses of our approach in the crack segmentation task are 
undersegmentation of thin cracks and edge effects around pores. 
In CT images, thin cracks appear brighter than thicker cracks due to the partial volume effect reducing the contrast between the crack and concrete. For the same reason thin cracks look discontinued. 
Thin cracks might therefore require special treatment. 
In some situations, pore edge regions get erroneously segmented as crack. 
These can however be removed by a post-processing step and are no serious problem.

To unlock the full potential of the Riesz transform, validation on other types of 
problems is needed. 
Furthermore, scaling the Riesz network to larger, wider, and deeper models remains an open topic.
Our study as well as previous ones \cite{jacobsen16, luan2018gabor, penaud2022fully} imply that small models based on linear combination of the convolutions with fixed filters could yields results comparable to those of large CNN models. 
However, in order to state this reliably, training convergence, expressiveness, and run-times of these types of networks have to be compared systematically to those of CNNs.

In the future, the method should be applied in 3d since CT data 
is originally 3d. In this case, memory issues might occur during discretization of the Riesz 
kernel in frequency space.

An interesting topic for further research is to join translation and scale invariance  with rotation invariance to design a new generation of neural networks with encoded basic computer vision properties \cite{sifre13}. This type of neural network could be very efficient because it would have even less parameters and hence would require less training data, too.

\bmhead{Acknowledgments}
We thank Christian Jung (RPTU) for generating the multiscale crack images.



\section*{Appendix}

\appendix

\section{Experiment on MNIST Large~Scale Dataset}
\label{secA1}
We test the Riesz networks on a classification task on the  MNIST Large Scale \cite{jansson22} to test wider applicability of Riesz networks outside of crack segmentation task. This data set was derived from the MNIST data set \cite{lecun98} and it consists of images of digits between 0 and 9 belonging to one of ten classes (Fig. \ref{fig:mnist-classes2}) which are rescaled to a wide range of scales to test scale generalization abilities of neural networks (Fig. \ref{fig:mnist-scales}).

 \begin{figure*}[h]
    \centering
    \includegraphics[width = 0.18 \textwidth]{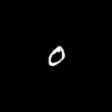}
    \includegraphics[width = 0.18 \textwidth]{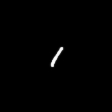}
    \includegraphics[width = 0.18 \textwidth]{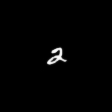}
    \includegraphics[width = 0.18 \textwidth]{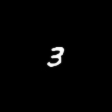}
    \includegraphics[width = 0.18 \textwidth]{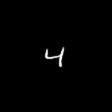}
    \includegraphics[width = 0.18 \textwidth]{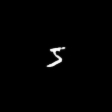}
    \includegraphics[width = 0.18 \textwidth]{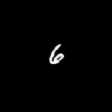}
    \includegraphics[width = 0.18 \textwidth]{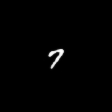}
    \includegraphics[width = 0.18 \textwidth]{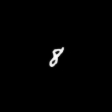}
    \includegraphics[width = 0.18 \textwidth]{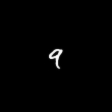}
    \caption{10 classes in MNIST Large Scale data set. All images have size $112 \times 112$.}
    \label{fig:mnist-classes2}
\end{figure*}

\begin{figure*}[h]
    \centering
    \includegraphics[width = 0.18 \textwidth]{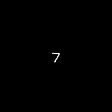}
    \includegraphics[width = 0.18 \textwidth]{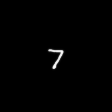}
    \includegraphics[width = 0.18 \textwidth]{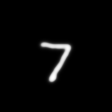}
    \includegraphics[width = 0.18 \textwidth]{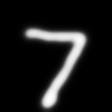}
    \includegraphics[width = 0.18 \textwidth]{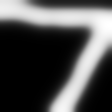}
    \caption{Variation of scales in MNIST Large Scale data set (from left to right): scales 0.5, 1, 2, 4 and 8. All images have size $112 \times 112$.}
    \label{fig:mnist-scales}
\end{figure*}

Our Riesz network has the channel structure 12-16-24-32-80-10 with the softmax function at the end. In total, it has 20,882 parameters. Following \cite{lindeberg21}, only the central pixel in the image is used for classification. We use the standard CNN described in \cite{jansson22} but without any scale adjustments as a baseline to illustrate limited scale generalization property. This CNN has the channel structure 16-16-32-32-100-10 with the softmax function at the end and in total 574,278 parameters.
The training set has 50,000 images of the single scale $1$. We used a validation set of 1,000 images. The test set consists of scales ranging in $\left[0.5,8\right]$ with 10,000 images per scale. All images have size $112\times 112$.
Models are trained using the ADAM optimizer \cite{kingma14} with default parameters for 20 epochs with learning rate 0.001 which is halved every 3 epochs. Cross-entropy is used as loss function.
\begin{figure}[h]
    \centering
    \includegraphics[width = 0.49 \textwidth]{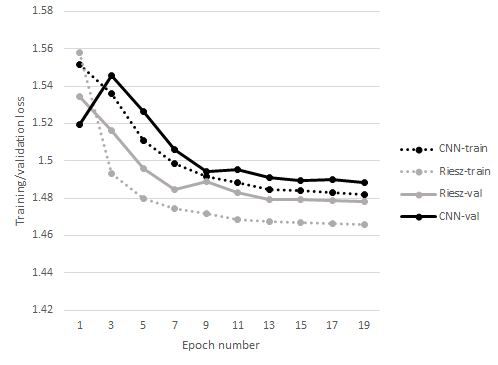}
    \caption{Train and validation loss for Riesz network and CNN (as a baseline).}
    \label{fig:mnist-scale-training}
\end{figure}

\begin{table*}[]
    \begin{tabular}{|c|c|c|c|c|c|c|c|c|c|} 
    \hline
          \multicolumn{2}{|c|}{scale} & 0.5 & 0.595 & 0.707 & 0.841 & 1 & 1.189 & 1.414 & 1.682 
         \\
         \hline
         \multicolumn{2}{|c|}{CNN} & 40.74 & 64.49 & 88.35 & 96.87 & 97.77 & 96.08 & 80.06 & 38.68   
         \\
         \hline
         \multicolumn{2}{|c|}{Riesz} & 96.34 & 97.59 & 98.06 & 98.54 &  98.58 & 98.50 & 98.45  & 98.40 
         \\
         \multicolumn{2}{|c|}{Riesz-pad20} & 96.33 & 97.57 & 98.07 & 98.48 & 98.63 & 98.54 & 98.49 & 98.46 
         \\
         \multicolumn{2}{|c|}{Riesz-pad40} & 96.34 & 97.55 & 98.07 & 98.47 & 98.63  & 98.58 & 98.53  & 98.44 
         \\
         \hline
         \multicolumn{2}{|c|}{FovAvg 17ch tr1 \cite{jansson22}} & 98.58 & 99.05 & \textbf{99.33} & \textbf{99.39} & \textbf{99.40} & \textbf{99.39} & \textbf{99.38} & \textbf{99.36} 
         \\
         \multicolumn{2}{|c|}{FovMax 17ch tr1 \cite{jansson22}} & \textbf{98.71} & \textbf{99.07} & 99.27 & 99.34 & 99.37 & 99.35 & 99.36 & 99.34 
         \\
         \hline
        \multicolumn{10}{c}{} \\
        \hline
         scale & 2 & 2.378 & 2.828 & 3.364 & 4 & 4.757 & 5.657 & 6.727 & 8
         \\
         \hline
         CNN & 25.90 & 24.91 & 23.64 & 21.34 & 19.91 & 18.87 & 18.04 & 15.64 & 11.79 
         \\
         \hline
         Riesz & 98.39 & 98.24 & 98.01 & 97.51 & 96.42 & 93.5 & 81.58 & 67.66 & 51.82 
         \\
         Riesz-pad20 & 98.39 & 98.35 & 98.33 & 98.16 & 97.78 &  97.08 & 95.48 & 91.10 & 79.78 
         \\
         Riesz-pad40 &  98.46 &  98.39 & 98.34 & 98.29 & 98.16 & 97.80  & 96.82 &  \textbf{93.75} &   \textbf{83.6} 
         \\
         \hline
         FovAvg 17ch tr1 \cite{jansson22} & \textbf{99.35} & 99.31 & 99.22 & 99.12 & 98.94 & 98.47 & 96.20 & 89.17 & 71.31
         \\
         FovMax 17ch tr1 \cite{jansson22}  & 99.33 & \textbf{99.35} & \textbf{99.34} & \textbf{99.35} & \textbf{99.34} & \textbf{99.27} & \textbf{97.88} & 92.76 & 79.23 
         \\
    \hline
    \end{tabular}
    \caption{Classification accuracy (in \%) of  MNIST Large Scale data set. Best performing method bold.} 
    \label{tab:mnist-scale-generalization}
\end{table*}

Fig. \ref{fig:mnist-scale-training} shows validation and training loss during 20 epochs. Interestingly, the Riesz network converges faster and even its validation loss remains lower than the training loss of CNN.
Accuracies for the different scales are shown in 
Table~\ref{tab:mnist-scale-generalization}.

The Riesz network shows stable accuracy for scales in the range $\left[0.5,4 \right]$. The CNN, which has way more degrees of freedom, is only competitive for scales close to the training scale. 
Results for 
two scale adjusted versions of the CNN as reported in \cite{jansson22} are also given in Table \ref{tab:mnist-scale-generalization}. Their performance is slightly superior to the Riesz network (around $ 1-2\%$). However, it is important to note that this approach uses (max or average) pooling over $17$ scales.

Further works considering the MNIST Large Scale data set are \cite{sangalli21,lindeberg21}. Unfortunately, no numeric values of the accuracies are provided, so we can compare the results only qualitatively. The Riesz network's accuracy varies less on a larger range of scales than those of the scale-equivariant networks on Gaussian or morphological scale spaces from \cite{sangalli21} that were trained on scale $2$. The Gaussian derivative network \cite{lindeberg21} trained on scale 1 yields results in a range between $98\%$ and $99\%$ for medium scales $\left[0.7,4.7\right]$ using pooling over $8$ scales. The Riesz network yields similar values but without the need for scale selection. 

On the smallest scale of 0.5, the Riesz network seems to give a better result than \cite{lindeberg21}, while it is outperformed on the largest scales.
The reason for the latter is that digits start to reach the boundary of the image. To reduce that effect, we pad the images by 20 and 40 pixels with the minimal gray value.
Indeed, this improves the accuracy significantly for larger scales (Table~\ref{tab:mnist-scale-generalization}), while it remains equal for the rest of the scales. For example, for scale $8$, accuracy increases from $51.8\%$ to $79.8\%$ (padding 20) and $83.6\%$ (padding 40). This is a better accuracy than that reported in \cite{lindeberg21} and \cite{jansson22} for models trained on scale 1.


\section{Experiments on scale selection for competing methods related to Riesz network}
\label{secB}


The largest benefit of the Riesz network is avoiding the sampling of the scale dimension. 
Here, we give more detailed insight into scale sampling in practice for competing methods: U-net applied on rescaled images and Gaussian derivative networks. 
We show how segmentation results change as we add additional scales to the output. 
As we add new scales, cracks that belong (or are close) to the added scales get segmented. However, additional noise gets segmented, too. These noise pixels that are misclassified as cracks originate from two sources: interpolation error and high frequency noise.
For simulated data this is shown in 
Fig. \ref{fig:multiscale-test-appendix5} and Fig. \ref{fig:multiscale-test-appendix6}. 
For real cracks see Fig.~\ref{fig:real-data-appendix}.

The main drawback is that one needs to select the range of scales on which to apply these methods. Since the scale dimension in the images is bounded from above by the size of the  view window, when having images of different sizes scale sampling needs to be adjusted or recalibrated. It is not trivial how to achieve this in a general manner. In contrast, the Riesz transform enables simultaneous, continuous, and equal treatment of all scales automatically adapting to the image size.

\begin{figure*}[h]
    \centering
    \includegraphics[width = 0.24\textwidth]{figures/fresh-figures/multiscale/input-multiscale-61.jpg}
    \includegraphics[width = 0.24\textwidth]{figures/fresh-figures/multiscale/gt-multiscale-61.jpg}
    \\
    \includegraphics[width = 0.24\textwidth]{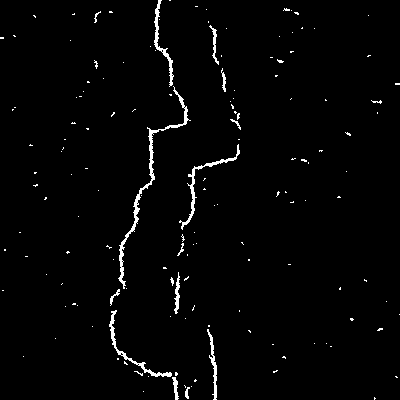}
    \includegraphics[width = 0.24\textwidth]{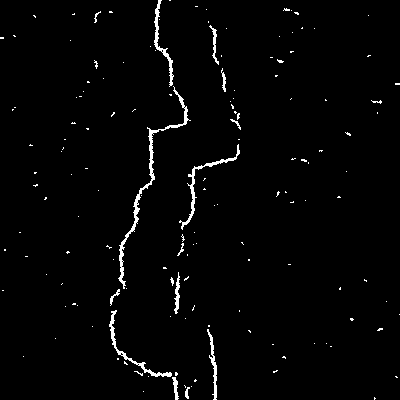}
    \includegraphics[width = 0.24\textwidth]{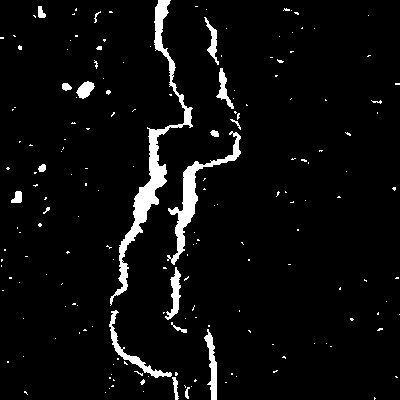}
    \includegraphics[width = 0.24\textwidth]{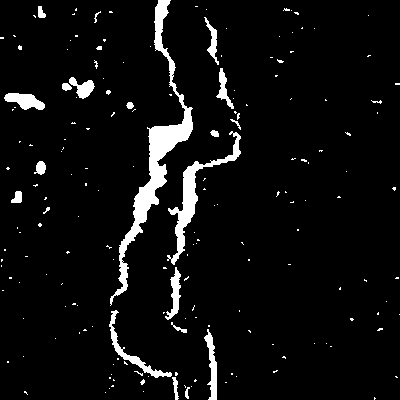}
    
    \includegraphics[width = 0.24\textwidth]{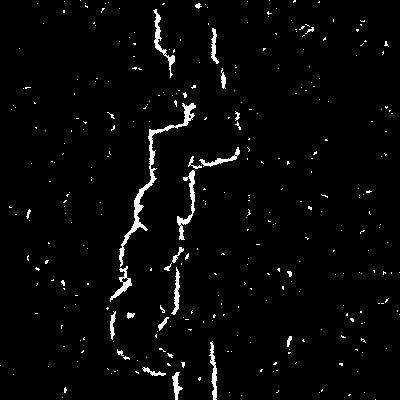}
    \includegraphics[width = 0.24\textwidth]{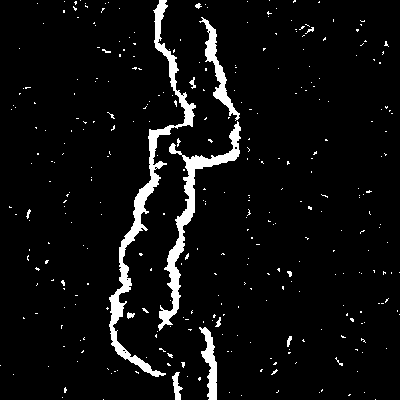}
    \includegraphics[width = 0.24\textwidth]{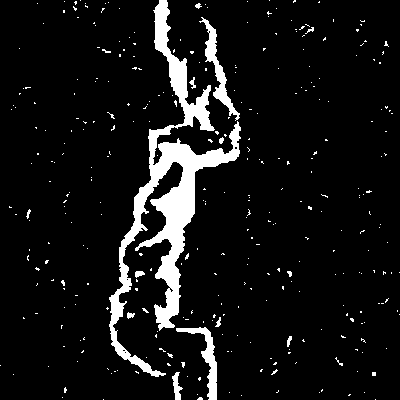}
    \includegraphics[width = 0.24\textwidth]{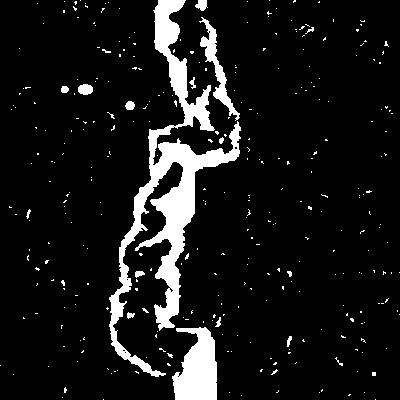}
    
    \includegraphics[width = 0.24\textwidth]{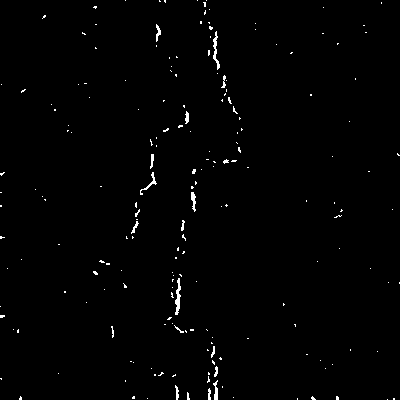}
    \includegraphics[width = 0.24\textwidth]{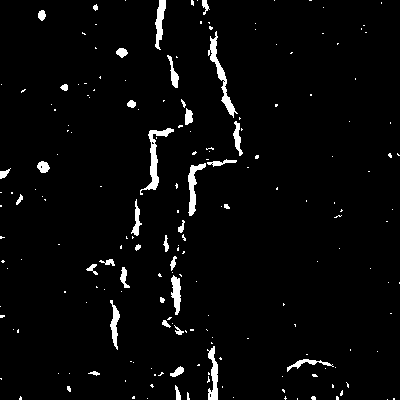}
    \includegraphics[width = 0.24\textwidth]{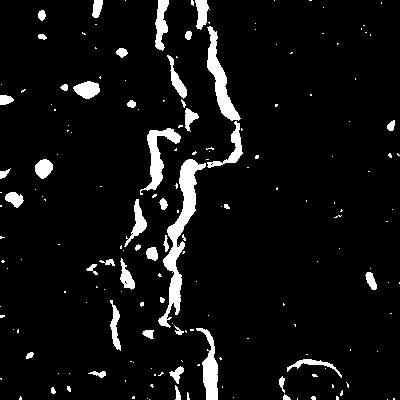}
    \includegraphics[width = 0.24\textwidth]{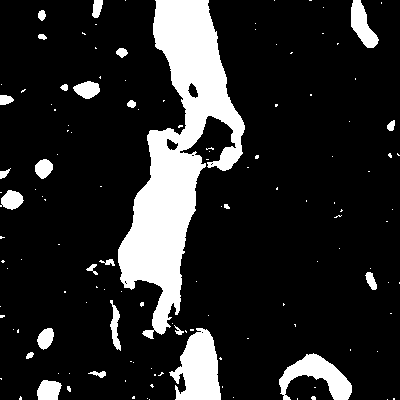}
    \caption{Experiment 2. Cracks with varying width.
    First row: input image and ground truth image. Second row: U-net applied to several levels of pyramids $\{1,2,3,4\}$ (from left to right). Third row: U-net-mix applied to several levels of pyramids $\{1,2,3,4\}$ (from left to right).
    Fourth row: Gaussian derivative networks aggregated on growing subsets of scale set $\{1.5, 3, 6, 12\}$ (from left to right). Image size $400 \times 400$ pixels.}
    \label{fig:multiscale-test-appendix5}
    \end{figure*}

    \begin{figure*}[h]
    \centering
    \includegraphics[width = 0.24\textwidth]{figures/fresh-figures/multiscale/input-multiscale-82.jpg}
    \includegraphics[width = 0.24\textwidth]{figures/fresh-figures/multiscale/gt-multiscale-82.png}
    \\
    \includegraphics[width = 0.24\textwidth]{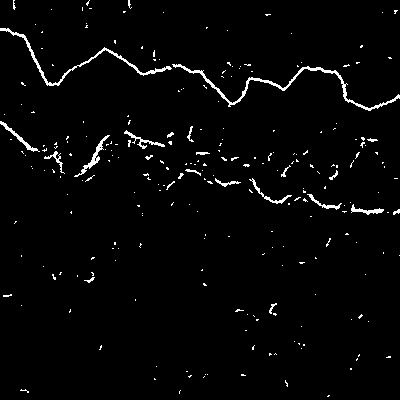}
    \includegraphics[width = 0.24\textwidth]{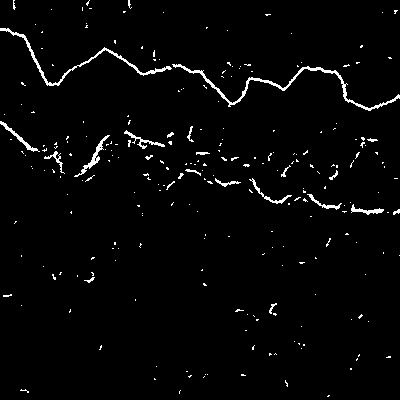}
    \includegraphics[width = 0.24\textwidth]{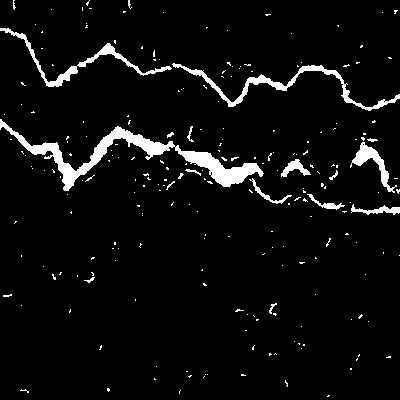}
    \includegraphics[width = 0.24\textwidth]{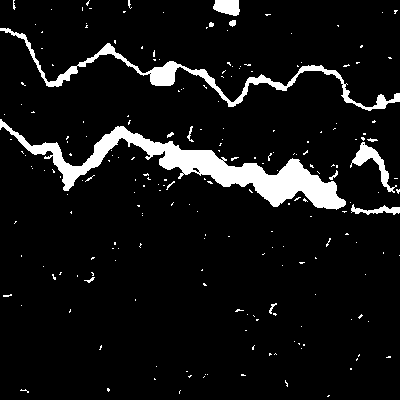}
    
    \includegraphics[width = 0.24\textwidth]{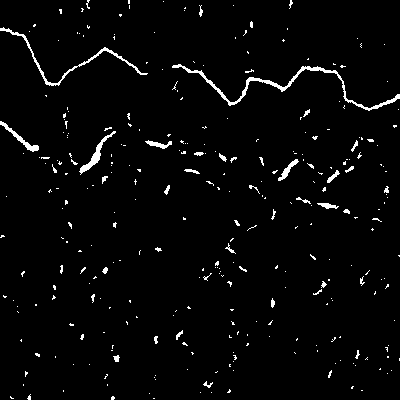}
    \includegraphics[width = 0.24\textwidth]{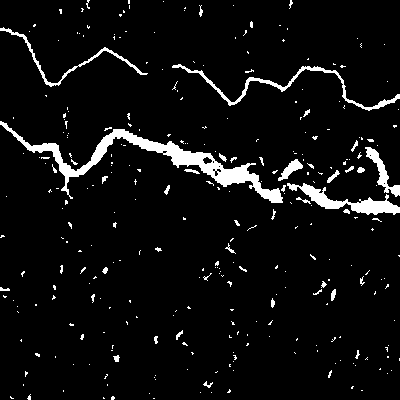}
    \includegraphics[width = 0.24\textwidth]{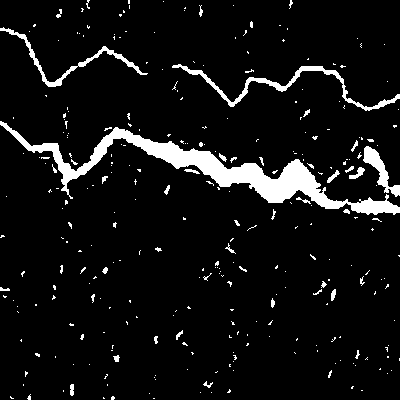}
    \includegraphics[width = 0.24\textwidth]{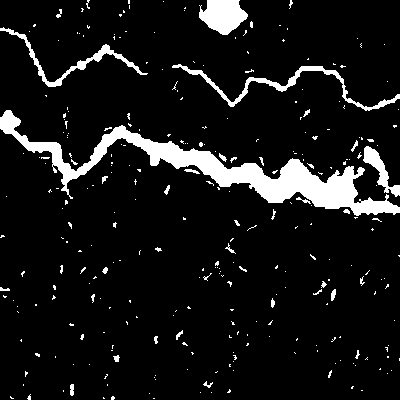}
    
    \includegraphics[width = 0.24\textwidth]{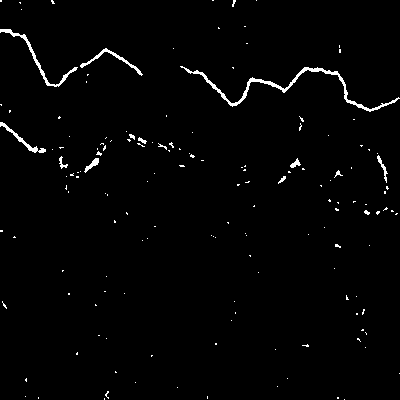}
    \includegraphics[width = 0.24\textwidth]{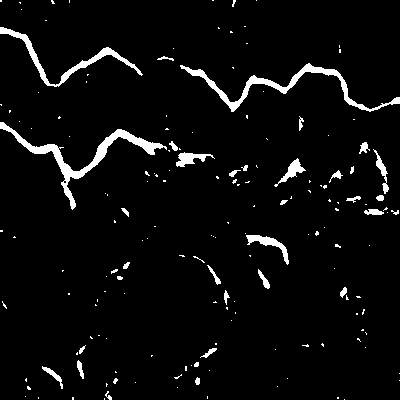}
    \includegraphics[width = 0.24\textwidth]{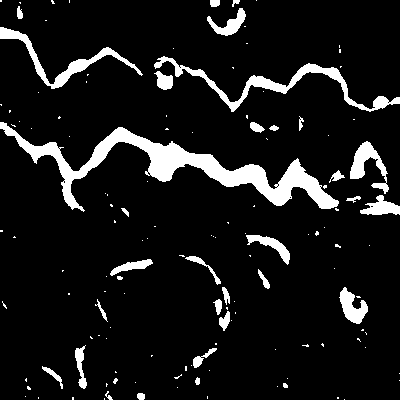}
    \includegraphics[width = 0.24\textwidth]{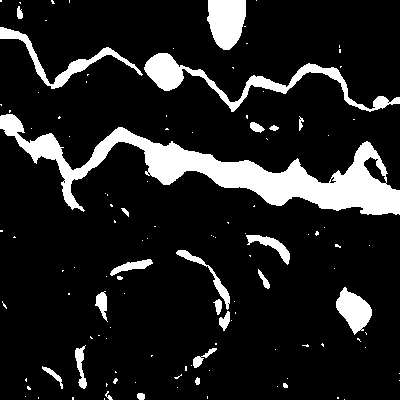}
    \caption{Experiment 2. Cracks with varying width.
    First row: input image and ground truth image. Second row: U-net applied to several levels of pyramids $\{1,2,3,4\}$ (from left to right). Third row: U-net-mix applied to several levels of pyramids $\{1,2,3,4\}$ (from left to right).
    Fourth row: Gaussian derivative networks aggregated on growing subsets of scale set $\{1.5, 3, 6, 12\}$ (from left to right). Image size $400 \times 400$ pixels.}
    \label{fig:multiscale-test-appendix6}
\end{figure*}

\begin{figure*}[h]
    \centering
    \includegraphics[width = 0.24\textwidth]{figures/real/real-ex2-crop.jpg}
    \includegraphics[width = 0.24\textwidth]{figures/fresh-figures/real/riesz-ex2-real.jpg}
    \\
    \includegraphics[width = 0.24\textwidth]{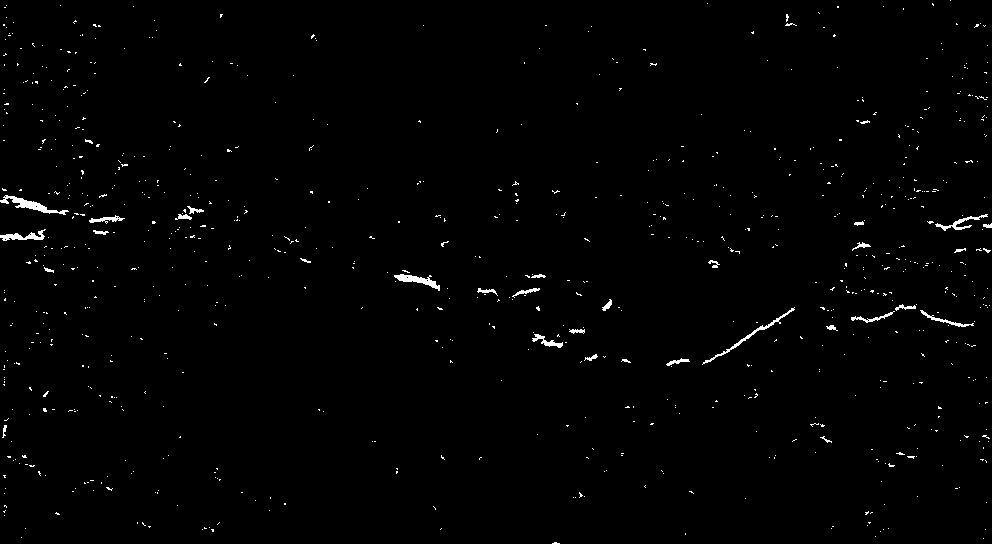}
    \includegraphics[width = 0.24\textwidth]{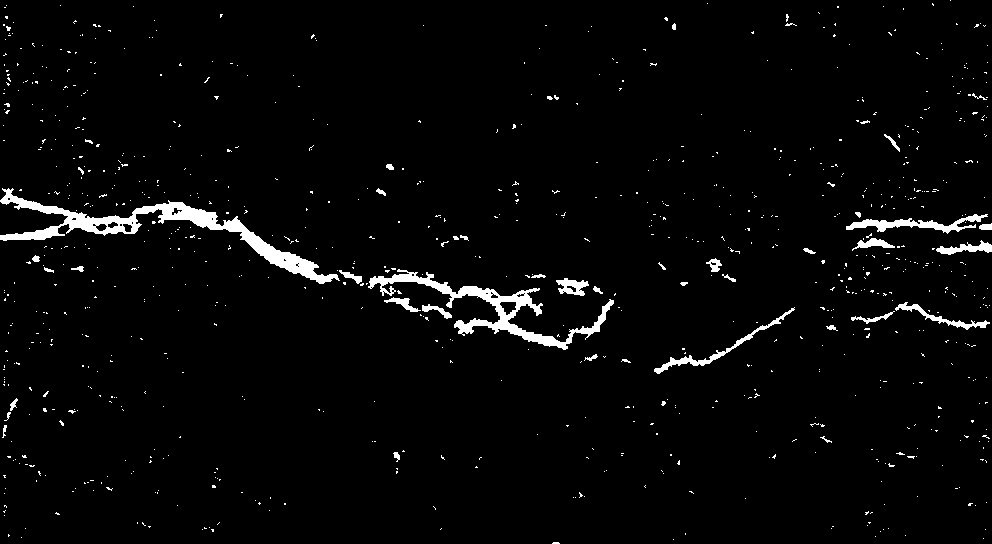}
    \includegraphics[width = 0.24\textwidth]{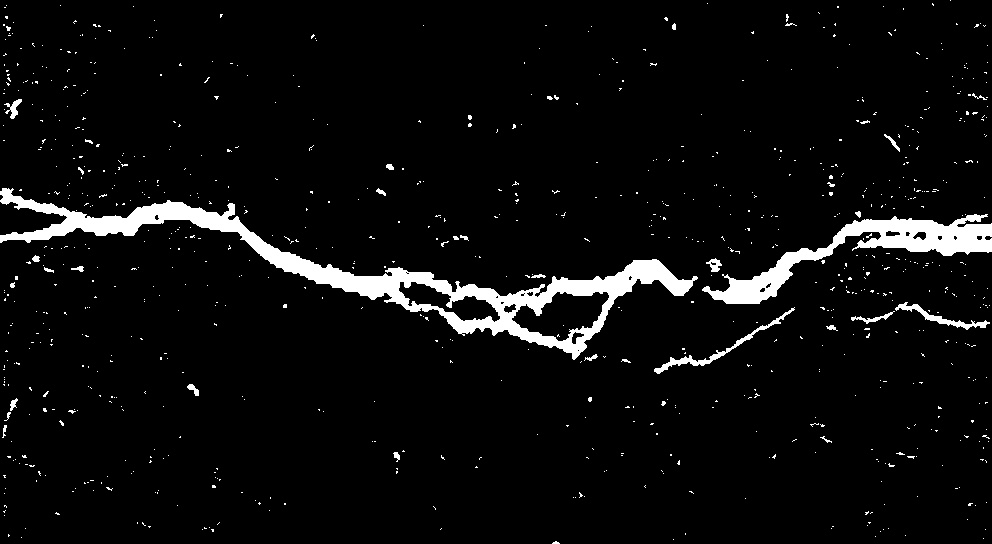}
    \includegraphics[width = 0.24\textwidth]{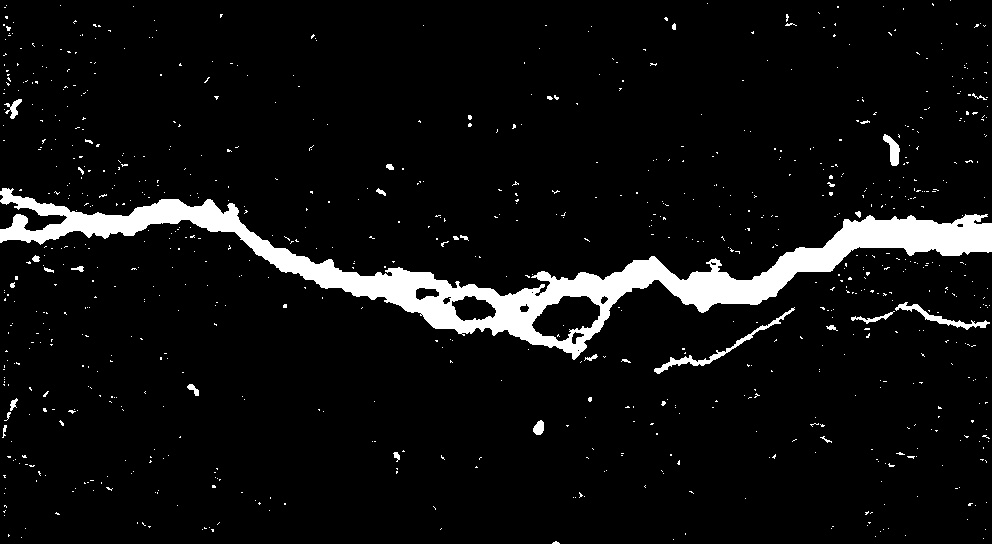}
    \\
    \includegraphics[width = 0.24\textwidth]{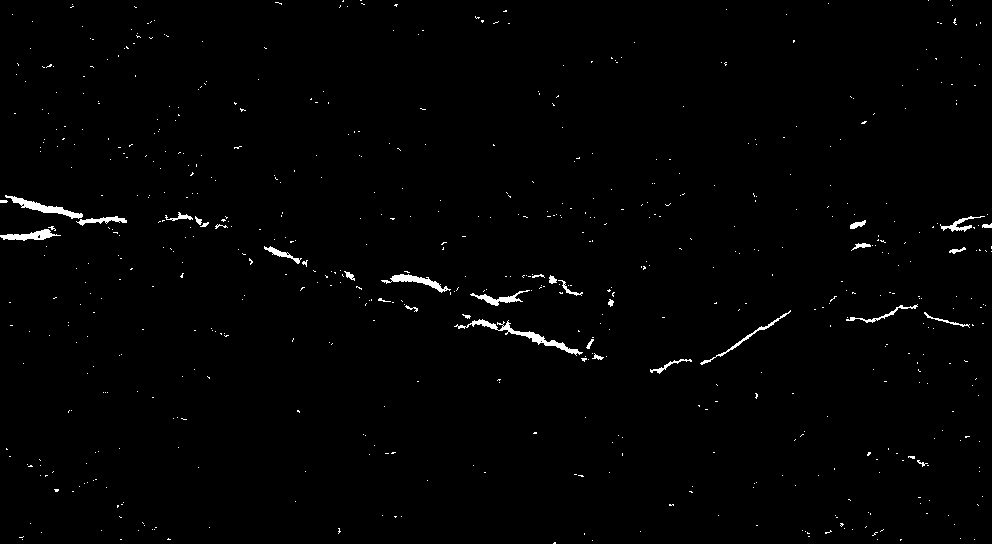}
    \includegraphics[width = 0.24\textwidth]{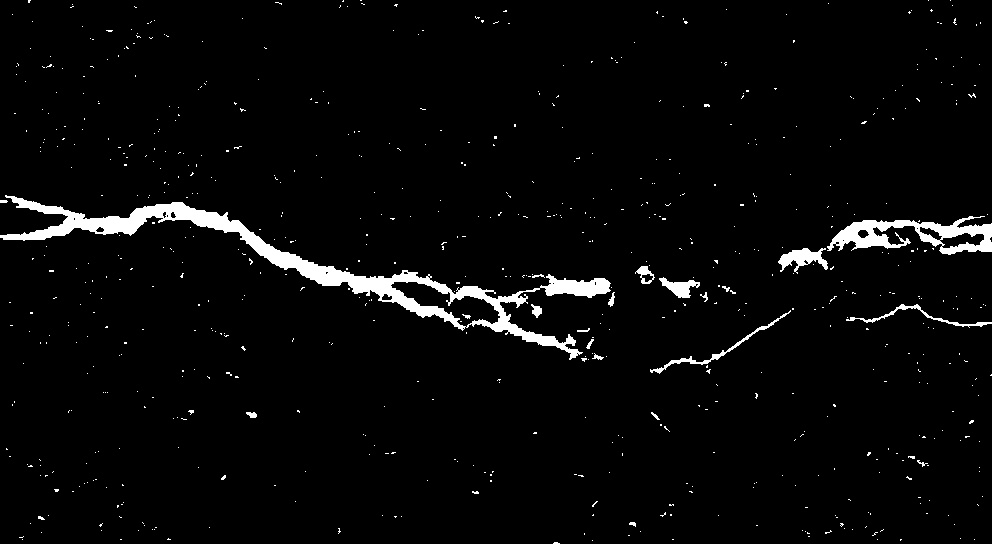}
    \includegraphics[width = 0.24\textwidth]{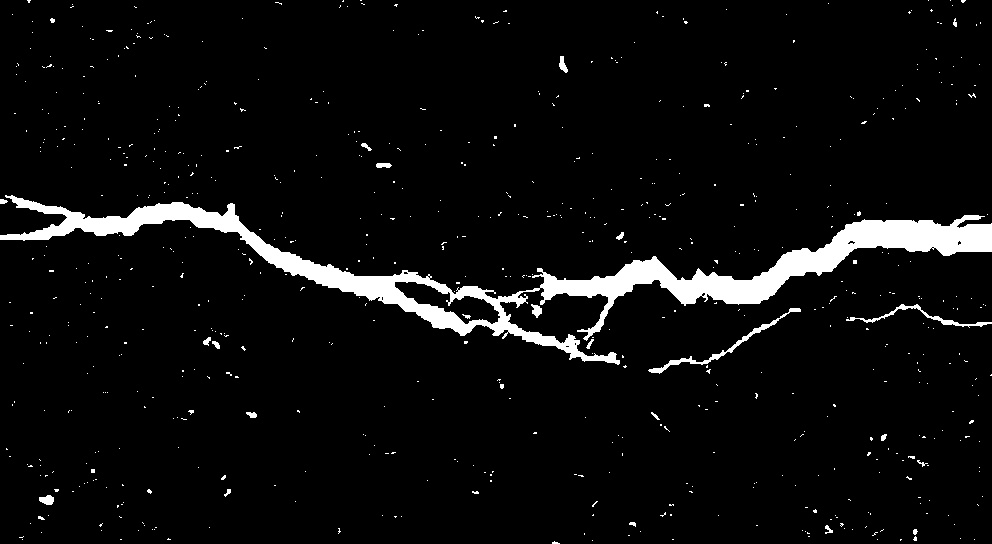}
    \includegraphics[width = 0.242\textwidth]{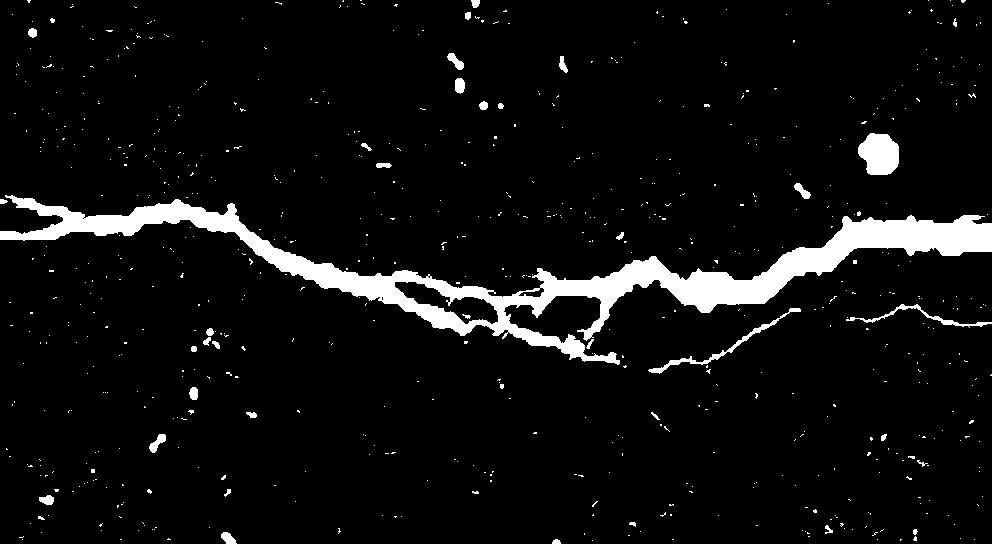}
    \caption{Experiment 3. Real cracks in concrete: slice from input CT image, results of the Riesz network and of U-net and U-net-mix with ranging pyramid levels (from 1 to 4). 
    Image sizes are $832 \times 1\,088$ (1st row) and $544 \times 992$ (4th row).}
    \label{fig:real-data-appendix}
\end{figure*}

\section{Experiments on different types of concrete: fiber reinforced concrete}
\label{secC}

It is a well-known weakness of concrete that it has low tensile strength, i.e. under high tensile force it fails abruptly and explosively. 
For that reason, reinforcement material is mixed with the cement paste creating a composite material.
Most common reinforcements are steel rebars.
Nowadays, fibers have become widely used as reinforcement in concrete creating a new class of reinforced concrete materials, e.g. ultra high performance fiber-reinforced concrete \cite{maryamh2022influence, maryamh2021influence,hauch2022predicting}.
A variety of materials can be used as fiber material, including glass, carbon, and basalt.
Since all of these materials have different mechanical properties, the properties of fiber reinforced concrete are connected to the properties of the concrete mixture, including the fiber material. 
Hence, a lot of effort has recently been invested in the investigation of fiber reinforced concrete samples with various material configurations.
In the context of CT imaging, different materials mean different energy absorption properties, i.e. fibers can appear both brighter or darker than concrete, which can result in very different images.
In the context of crack segmentation, this means that our methods should be able to efficiently handle these variations. This section compares the performance of the Riesz network, U-net, and U-net-mix from the previous sections on three different fiber reinforced concrete images.
We comment on possible post-processing steps to improve results and discuss the robustness of the methods in the context of fiber reinforced concrete. 

Fig. \ref{fig:crack-reinforced1} shows a sample of high performance concrete (HPC) with polypropylene fibers as reinforcement. See \cite{jung23ict} for more details on sample and crack initiation. In this image, fibers are long and appear dark and hence interfere with the crack in the center.
All three methods are able to extract the central and dominant crack in the middle. The Riesz network is not able to segment the thin crack on the left from the main crack, contrary to both U-nets. However, both U-nets accumulate a much larger amount of misclassified noise compared to the Riesz network. 

Fig. \ref{fig:crack-reinforced2} features a sample reinforced with steel fibers. For more details see \cite{kronenberger2018,schuler2020richtungsanalyse}. In this image, fibers appear bright and create uneven illumination effects. We use a simple pre-processing step to understand if we can reduce this effect and improve the performance of the methods.
Simple morphological openings with square structuring elements of half-sizes $2$ and $5$ are used for that purpose.
As the size of the structuring element increases, segmentation results improve for all three methods. While the Riesz network struggles with low contrast cracks on the right, both types of U-net segment falsely many non-crack voxels. 

The CT image from Fig. \ref{fig:crack-reinforced3} originates from ultra high performance concrete reinforced with steel fibers \cite{maryamh2021influence}. 
Again, fibers turn out to be bright structures in the images. 
These extremely highly X-ray absorbing fibers affect the gray value dynamics of the CT images.
Morphological openings with square structuring elements of half-sizes $2$ and $5$ are applied to reduce this effect. As we increase the size, crack segmentation improves for the Riesz networks. Both types of U-net segment large amounts of noise, even with opening as a pre-processing step, rendering them ineffective for this sample.

\begin{figure*}
    \centering
    \includegraphics[width = 0.4 \textwidth]{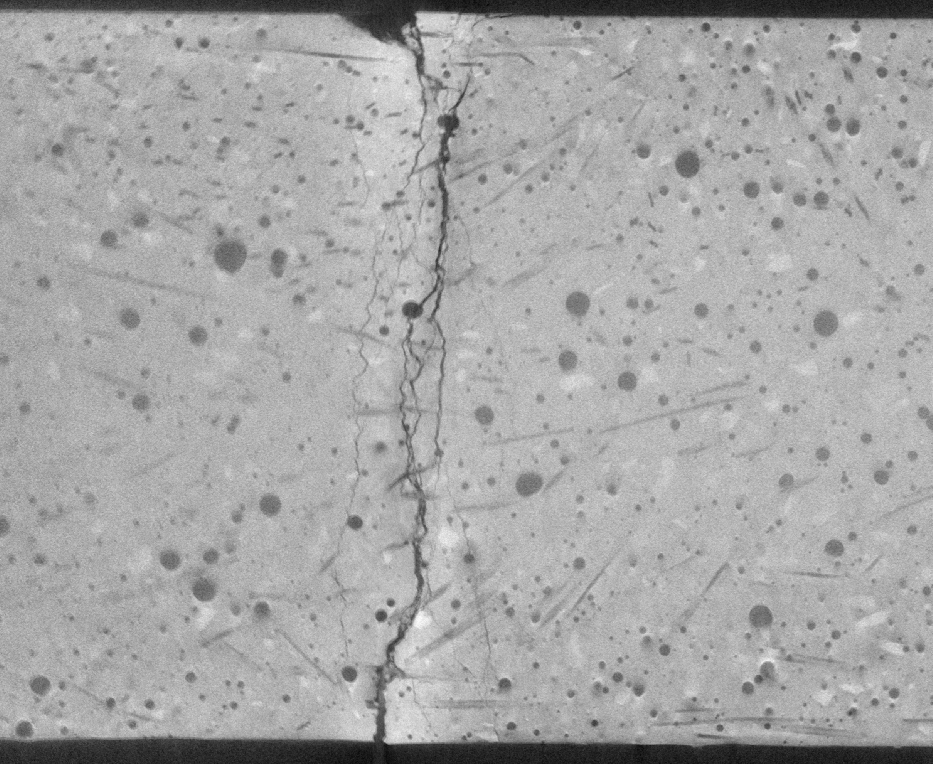}
    \includegraphics[width = 0.4 \textwidth]{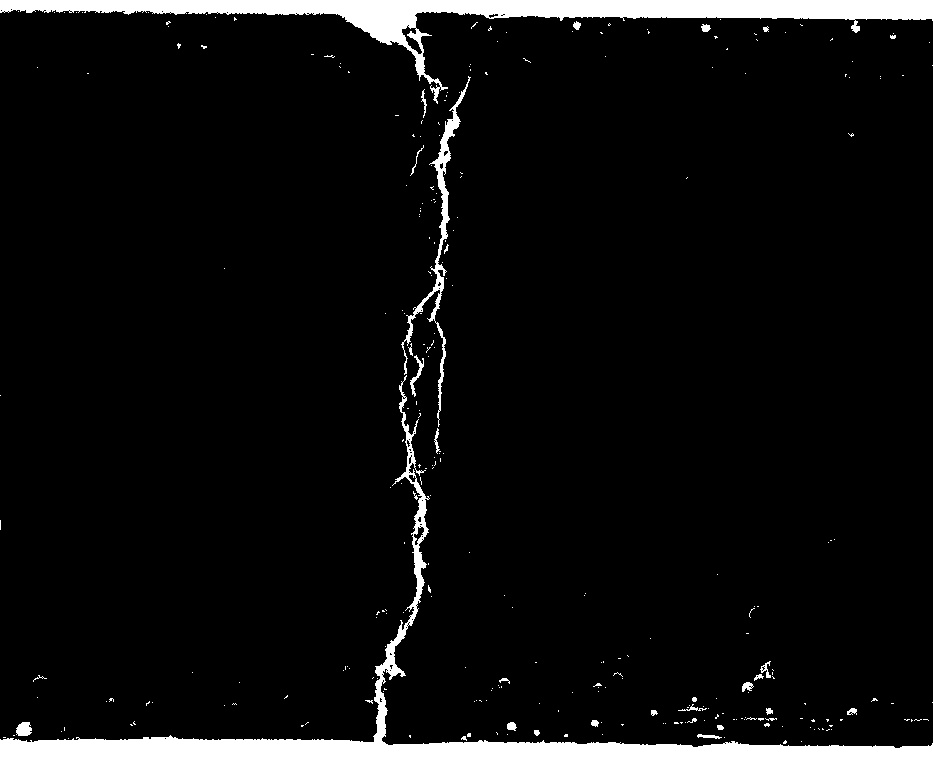}
    \includegraphics[width = 0.4 \textwidth]{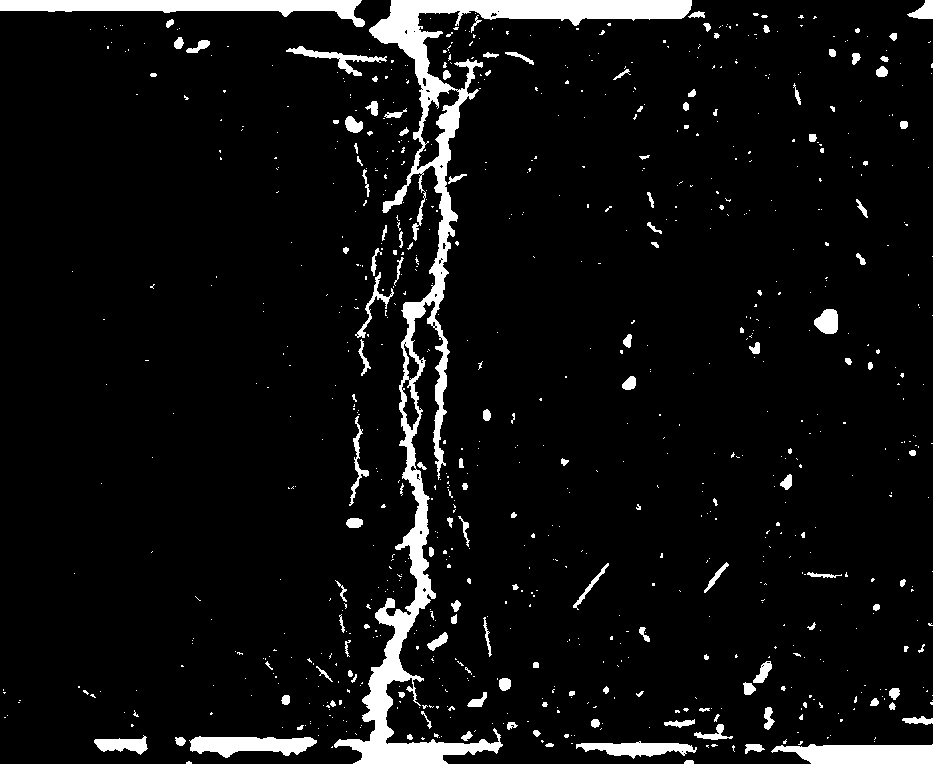}
    \includegraphics[width = 0.4 \textwidth]{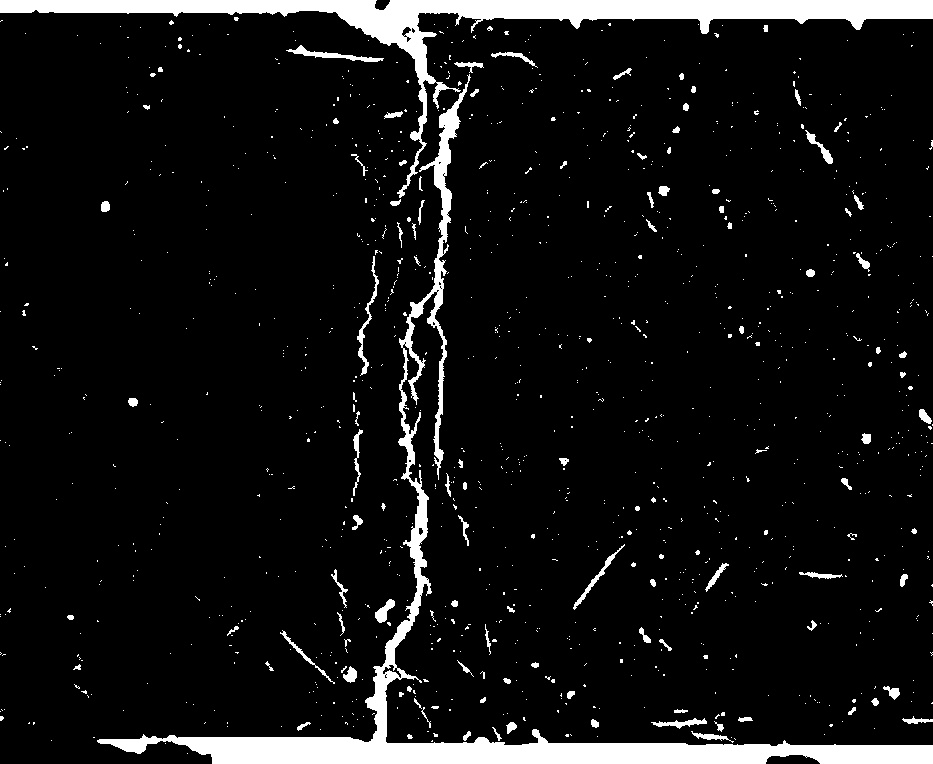}
    \caption{Cracks in high-performance concrete with polypropylene fibers. First row: input image (left), segmentation results from the Riesz network (right). Second row: U-net (left) and U-net mix (right). Image size is $933 \times 764.$}
    \label{fig:crack-reinforced1}
\end{figure*}

\begin{figure*}
    \centering
    \includegraphics[width = 0.32\textwidth]{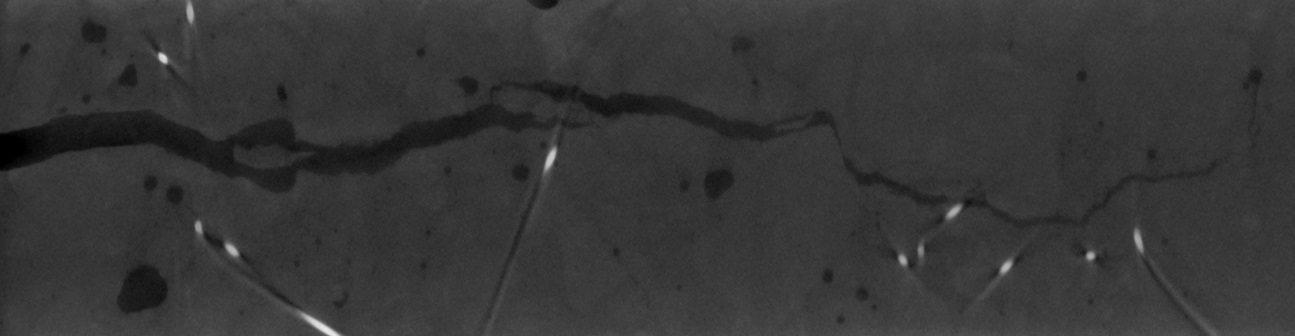}
    \includegraphics[width = 0.32 \textwidth]{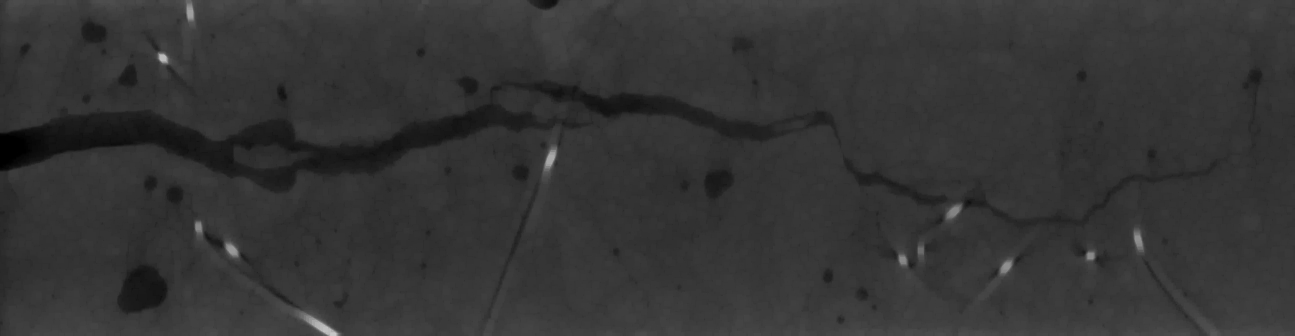}
     \includegraphics[width = 0.32 \textwidth]{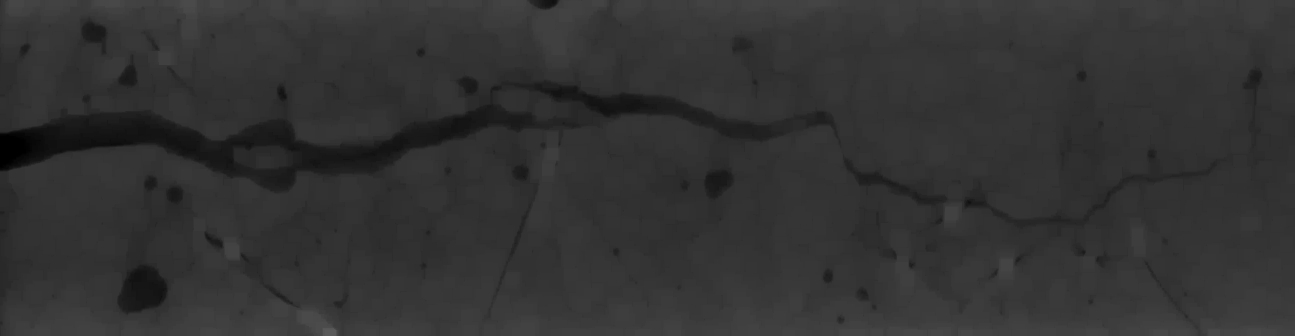}
     \\
    \includegraphics[width = 0.32\textwidth]{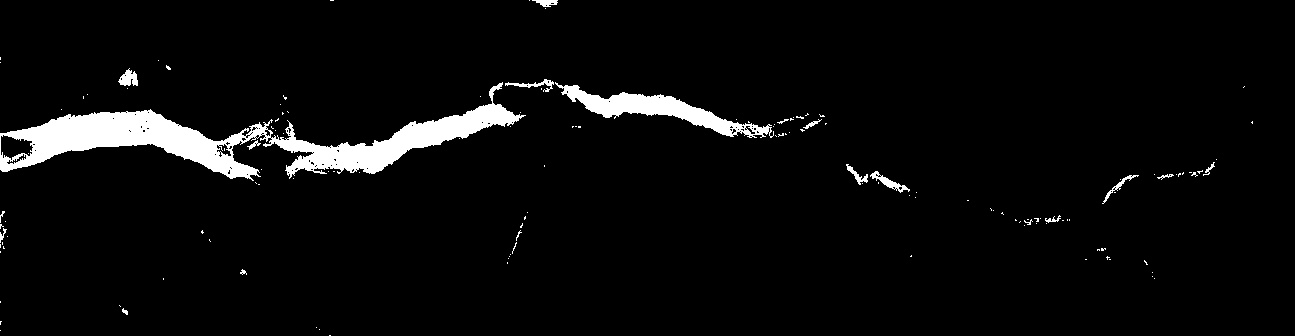}
    \includegraphics[width = 0.32 \textwidth]{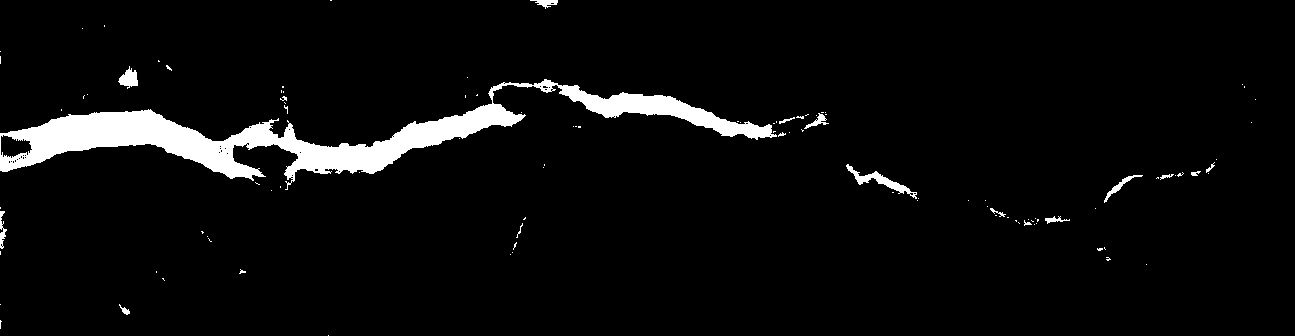}
    \includegraphics[width = 0.32 \textwidth]{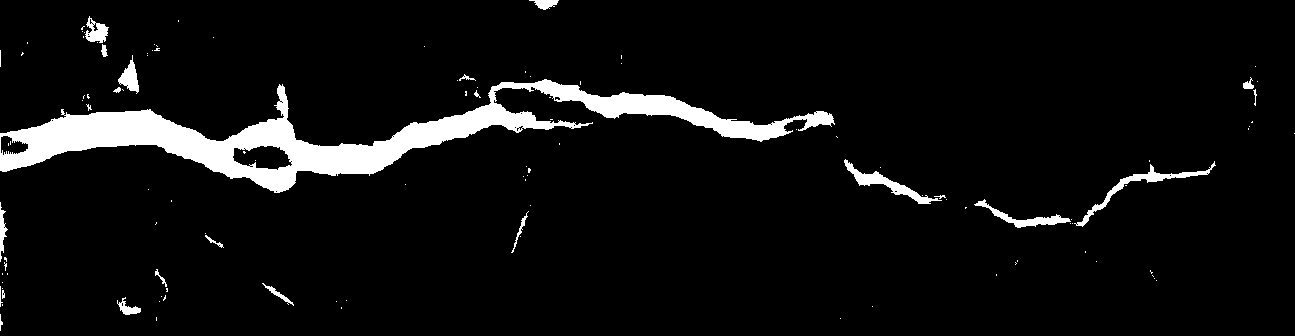}
    \\
    \includegraphics[width = 0.32 \textwidth]{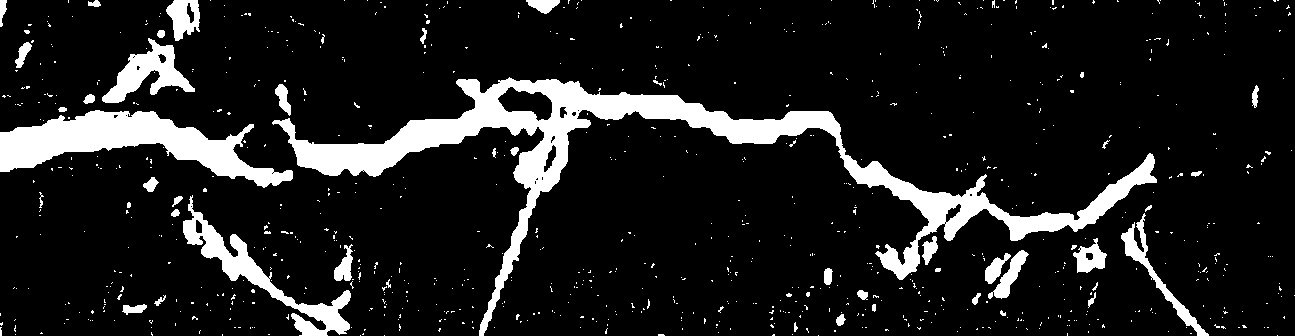}
    \includegraphics[width = 0.32 \textwidth]{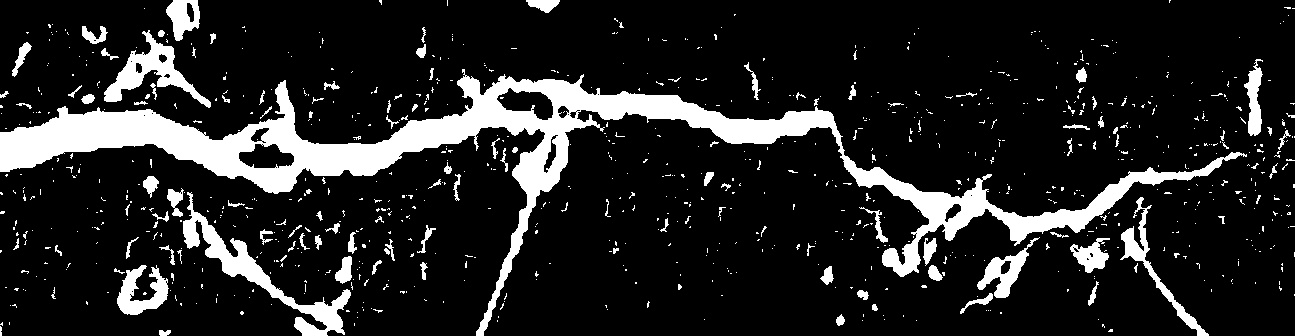}
    \includegraphics[width = 0.32 \textwidth]{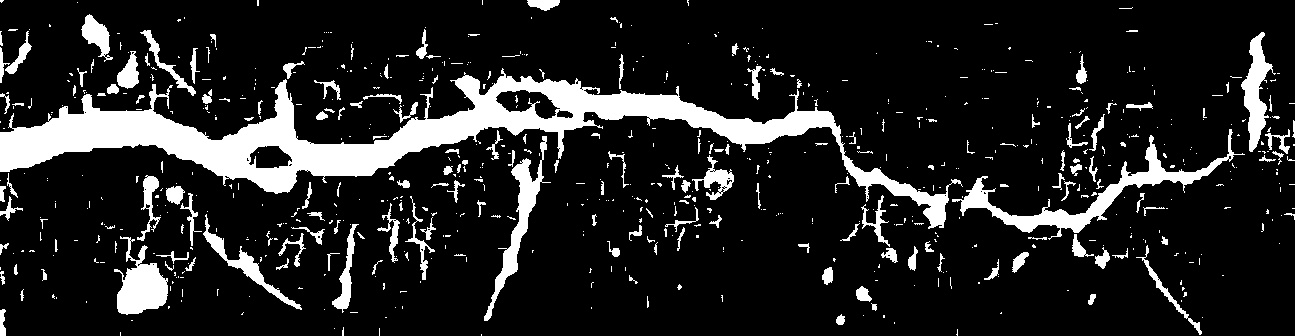}
    \\
    \includegraphics[width = 0.32\textwidth]{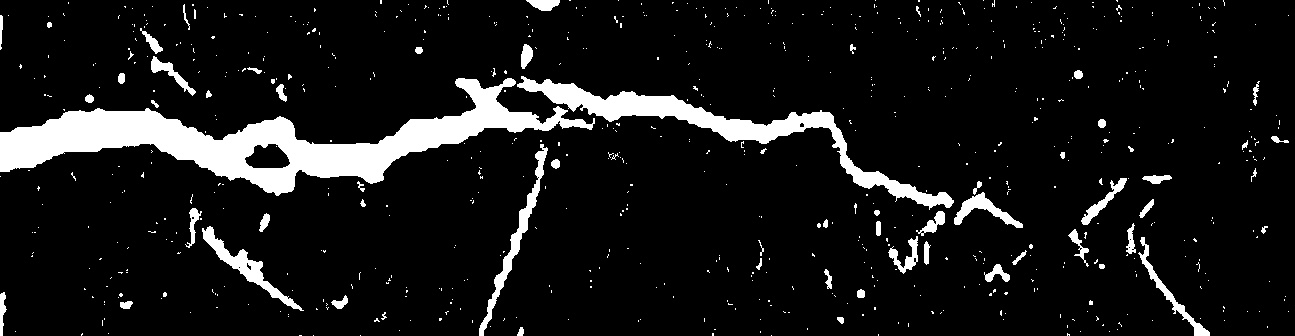}
    \includegraphics[width = 0.32 \textwidth]{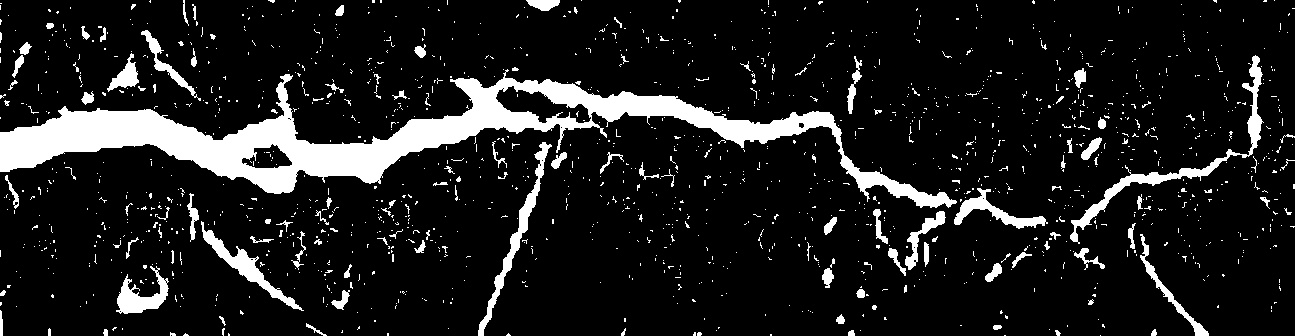}
    \includegraphics[width = 0.32\textwidth]{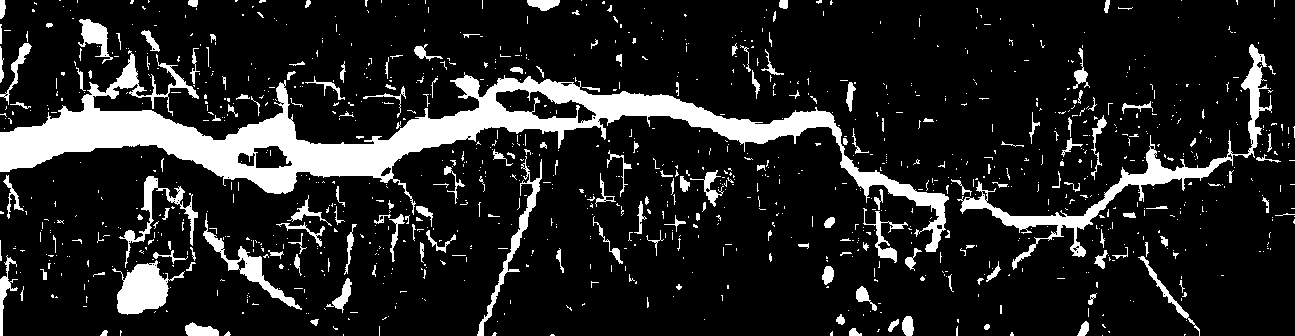}
    \caption{Cracks in concrete with steel fibers. Rows: input image, segmentation results from the Riesz network, U-net and U-net mix, respectively. Columns: original images, images after applying square closing of half-size $2$, and images after applying square opening of half-size $5$. Image size is $1\,295 \times 336$.}
    \label{fig:crack-reinforced2}
\end{figure*}

\begin{figure*}[h]
    \centering
    \includegraphics[width = 0.32\textwidth]{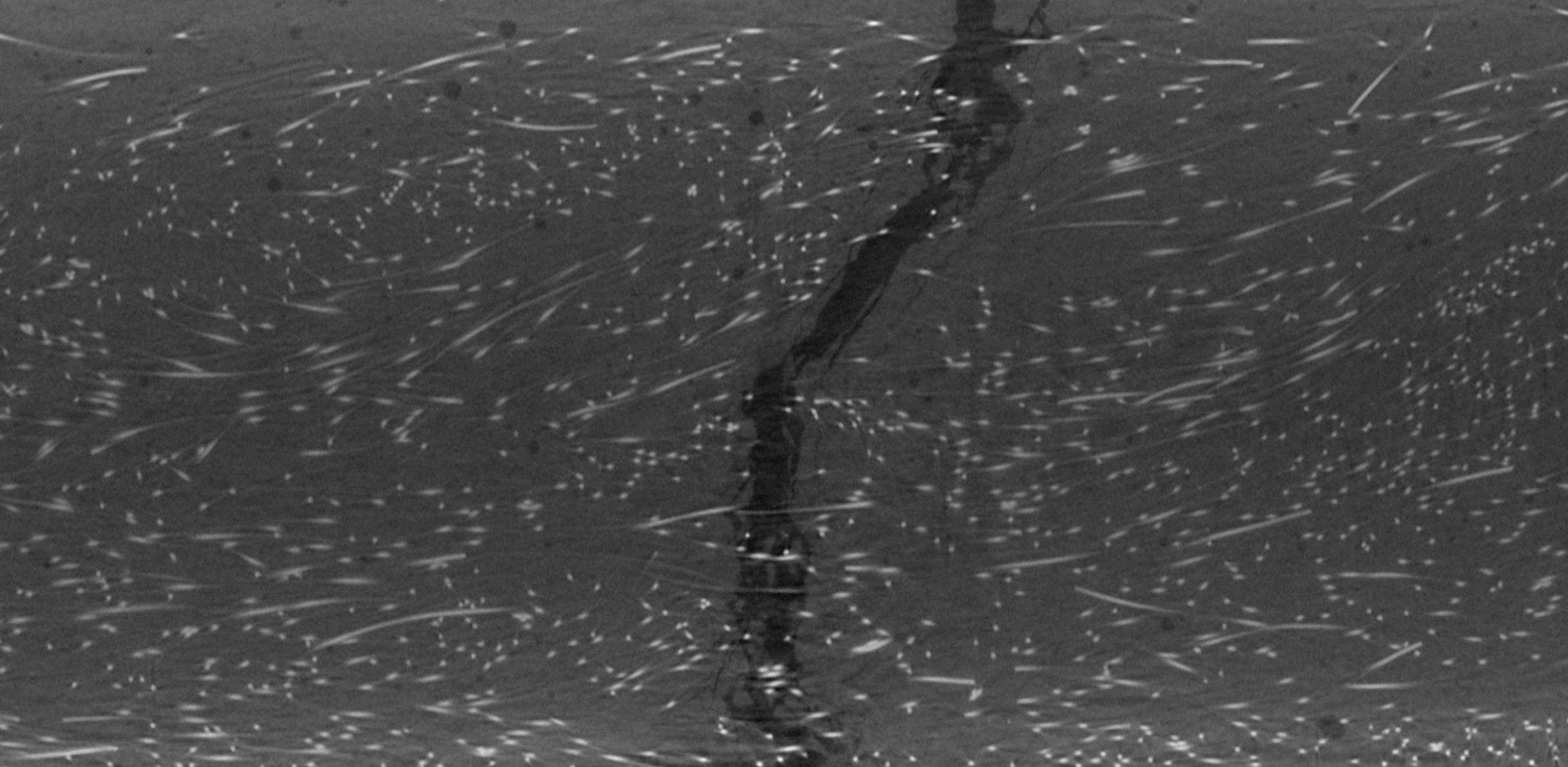}
    \includegraphics[width = 0.32 \textwidth]{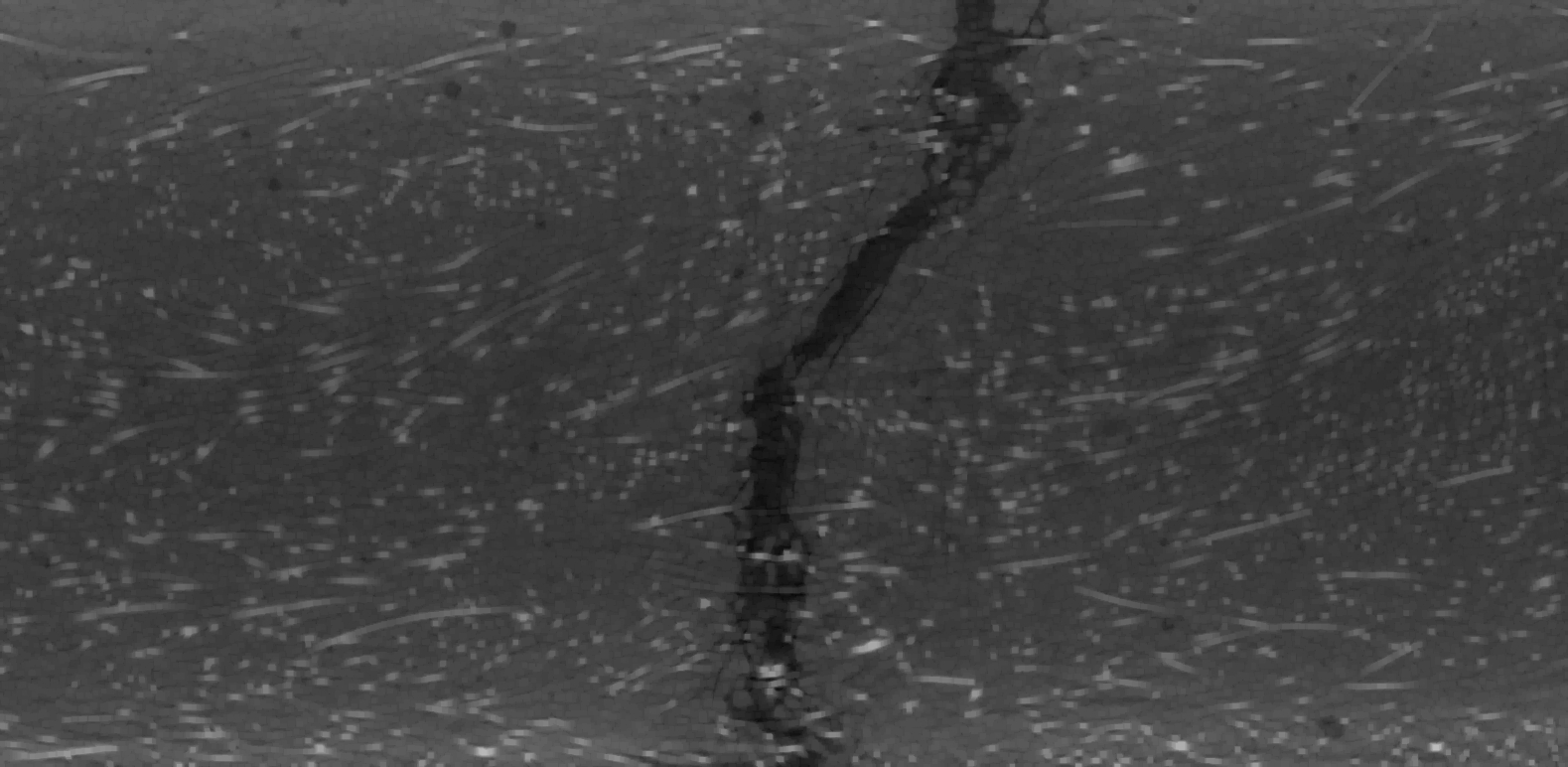}
    \includegraphics[width = 0.32 \textwidth]{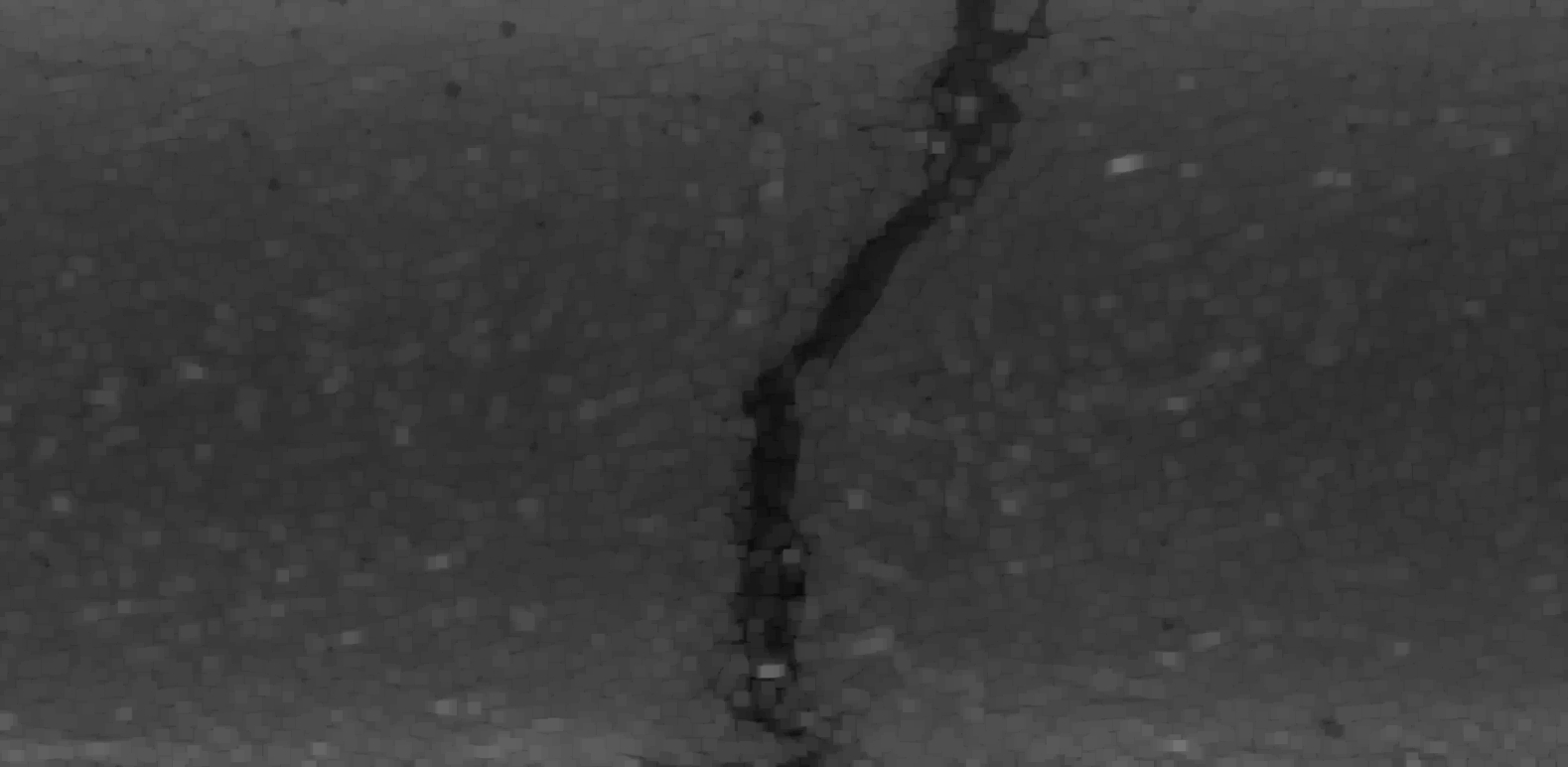}
    \\
    \includegraphics[width = 0.32 \textwidth]{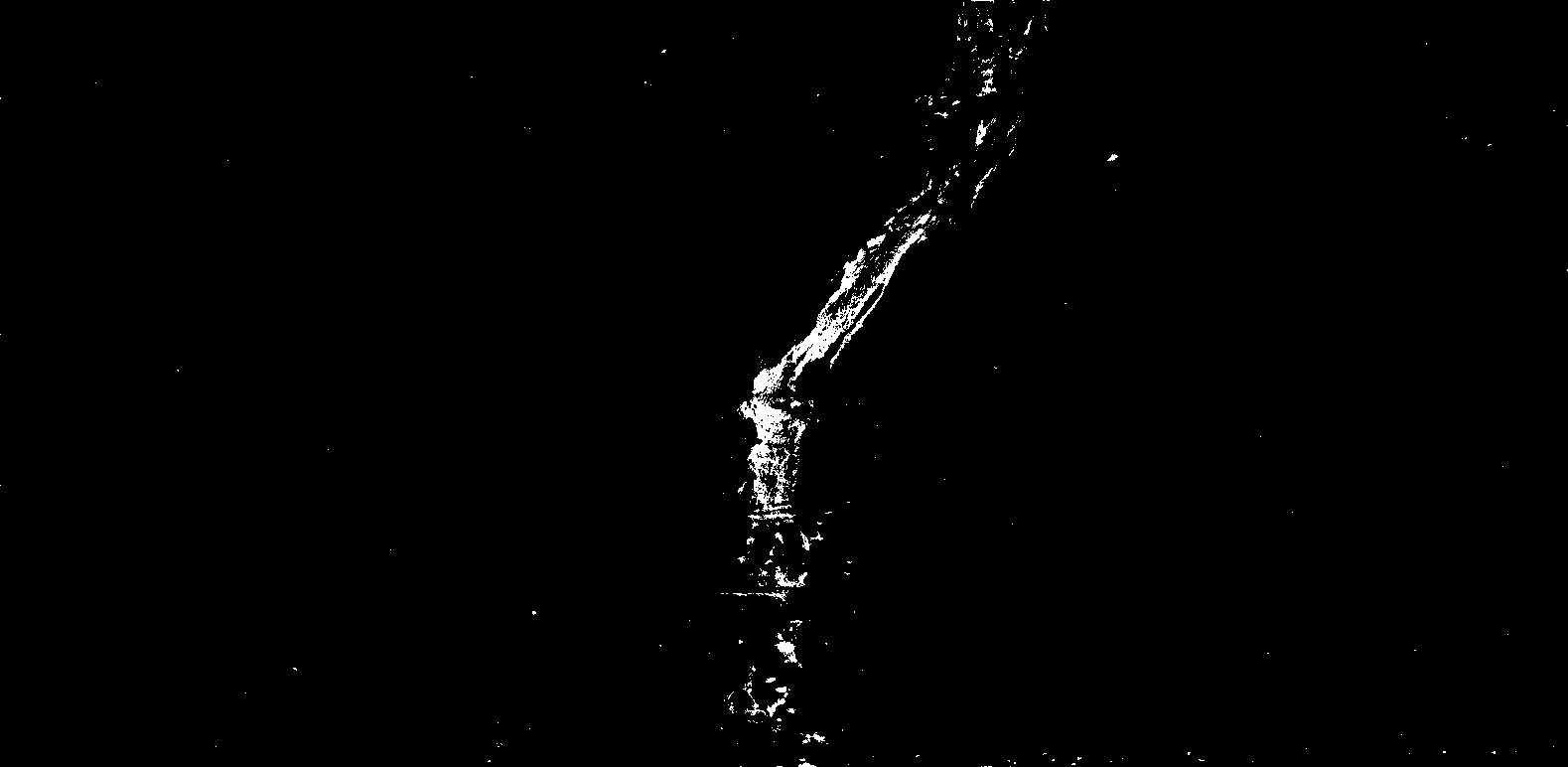}
    \includegraphics[width = 0.32 \textwidth]{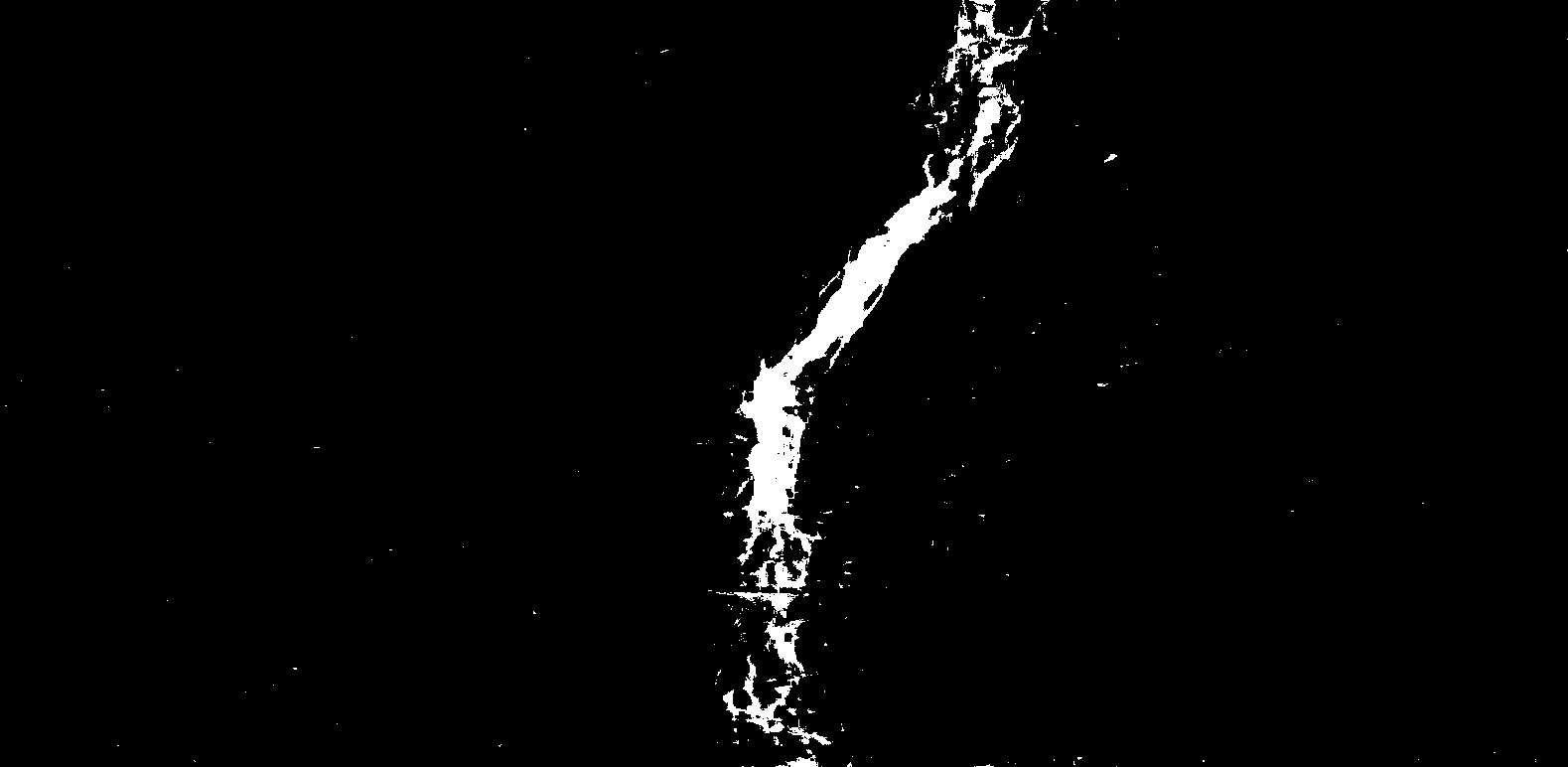}
    \includegraphics[width = 0.32 \textwidth]{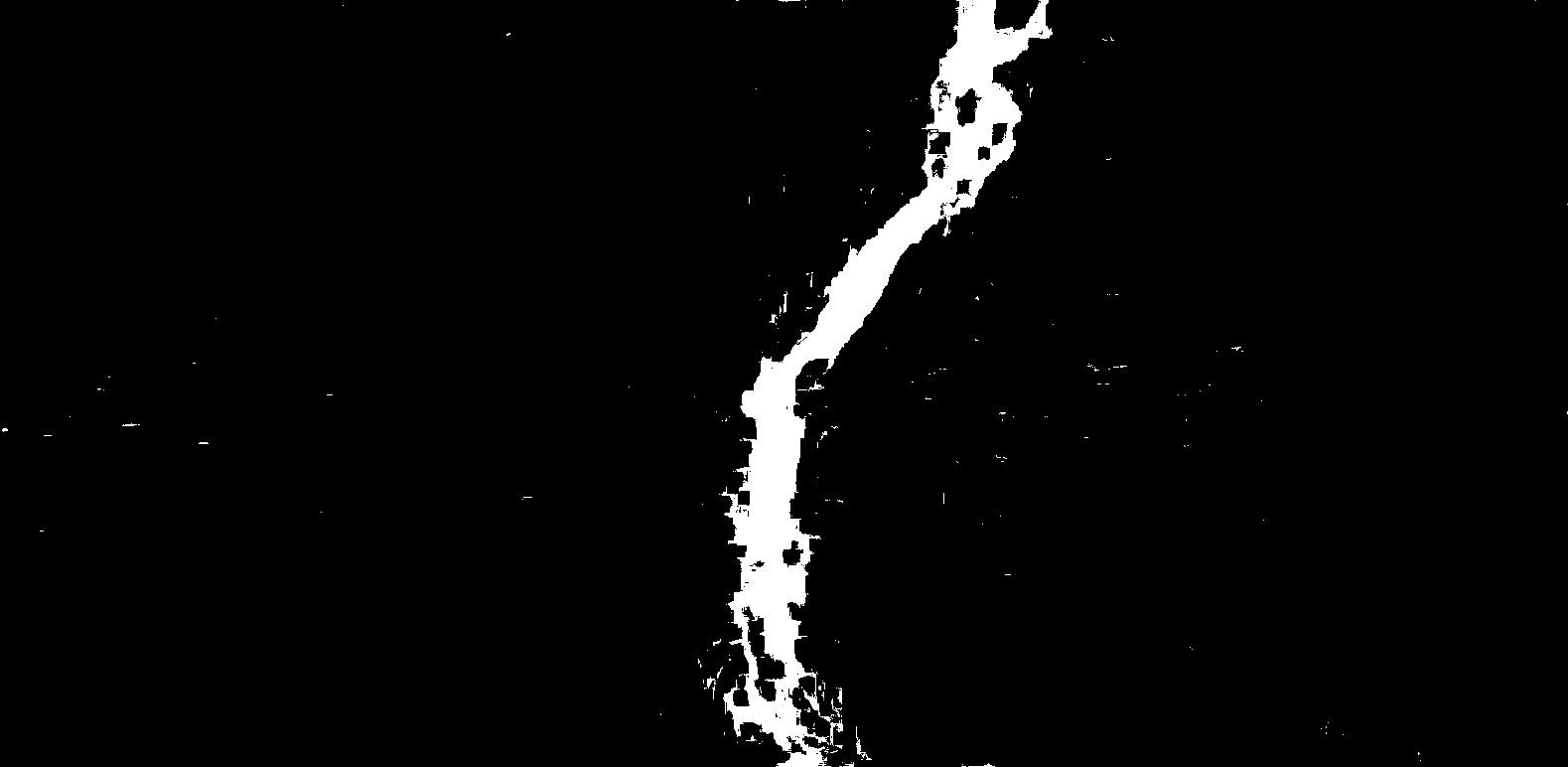}
    \\
    \includegraphics[width = 0.32 \textwidth]{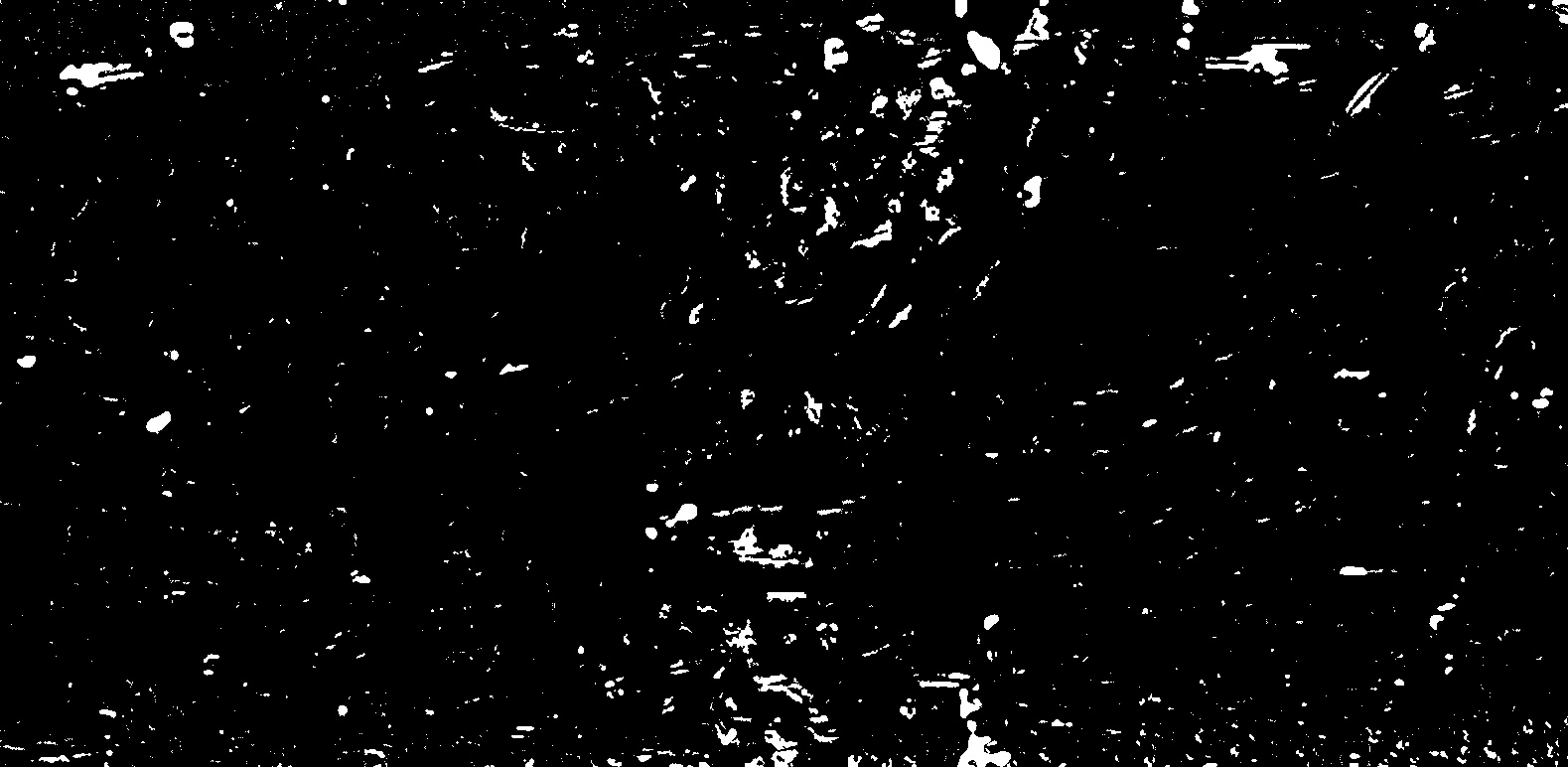}
    \includegraphics[width = 0.32 \textwidth]{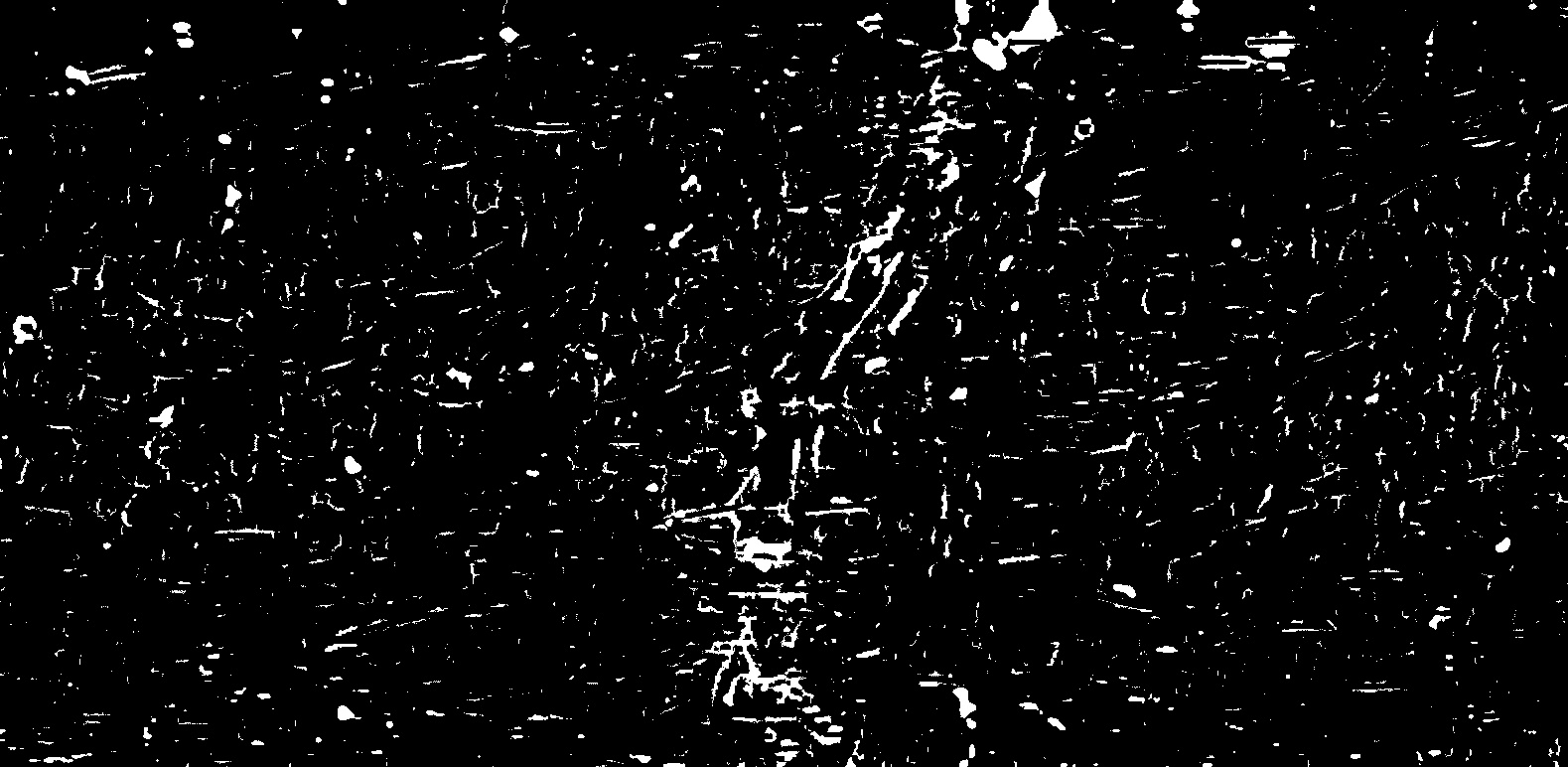}
    \includegraphics[width = 0.32 \textwidth]{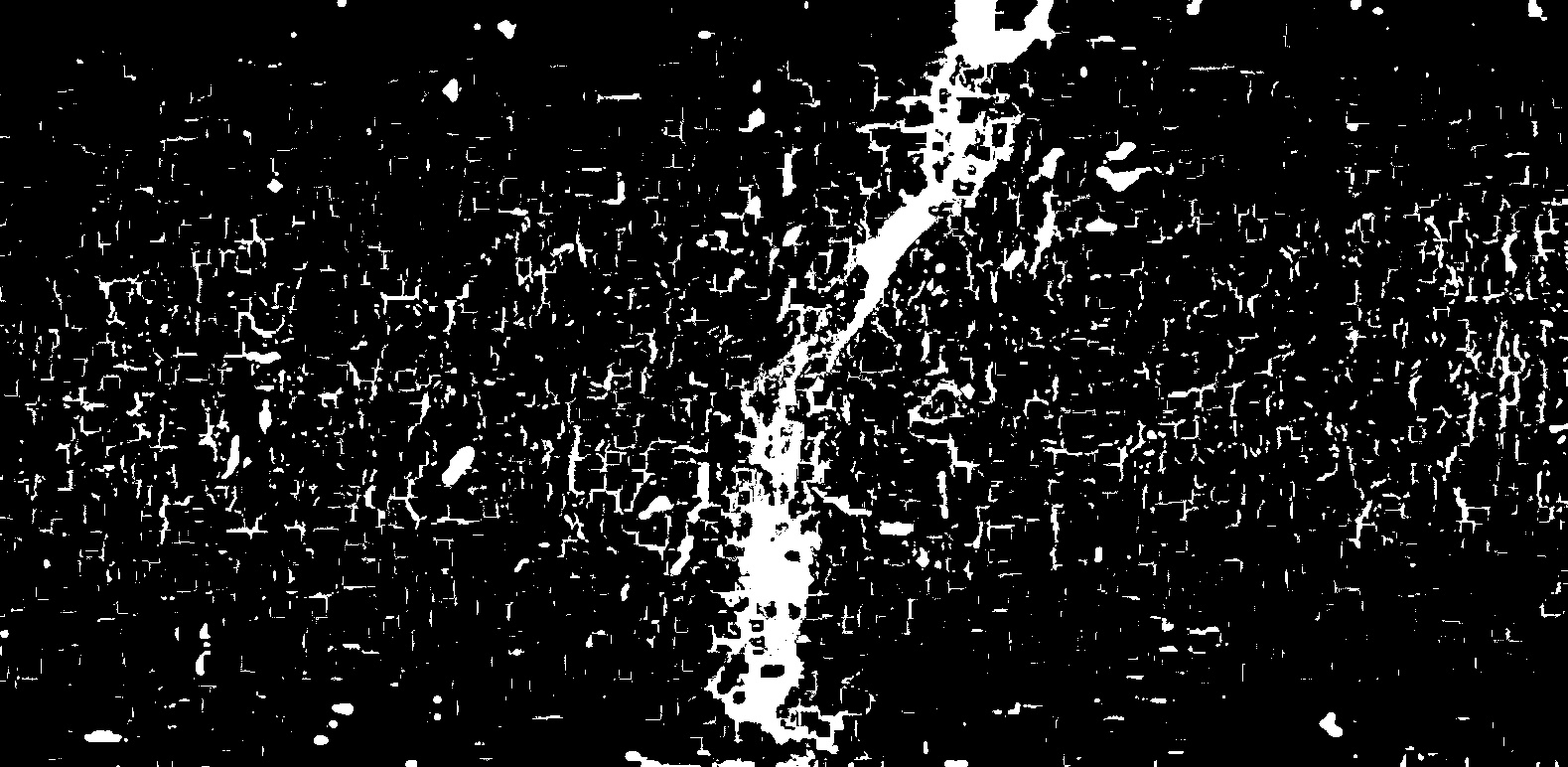}
    \\
    \includegraphics[width = 0.32 \textwidth]{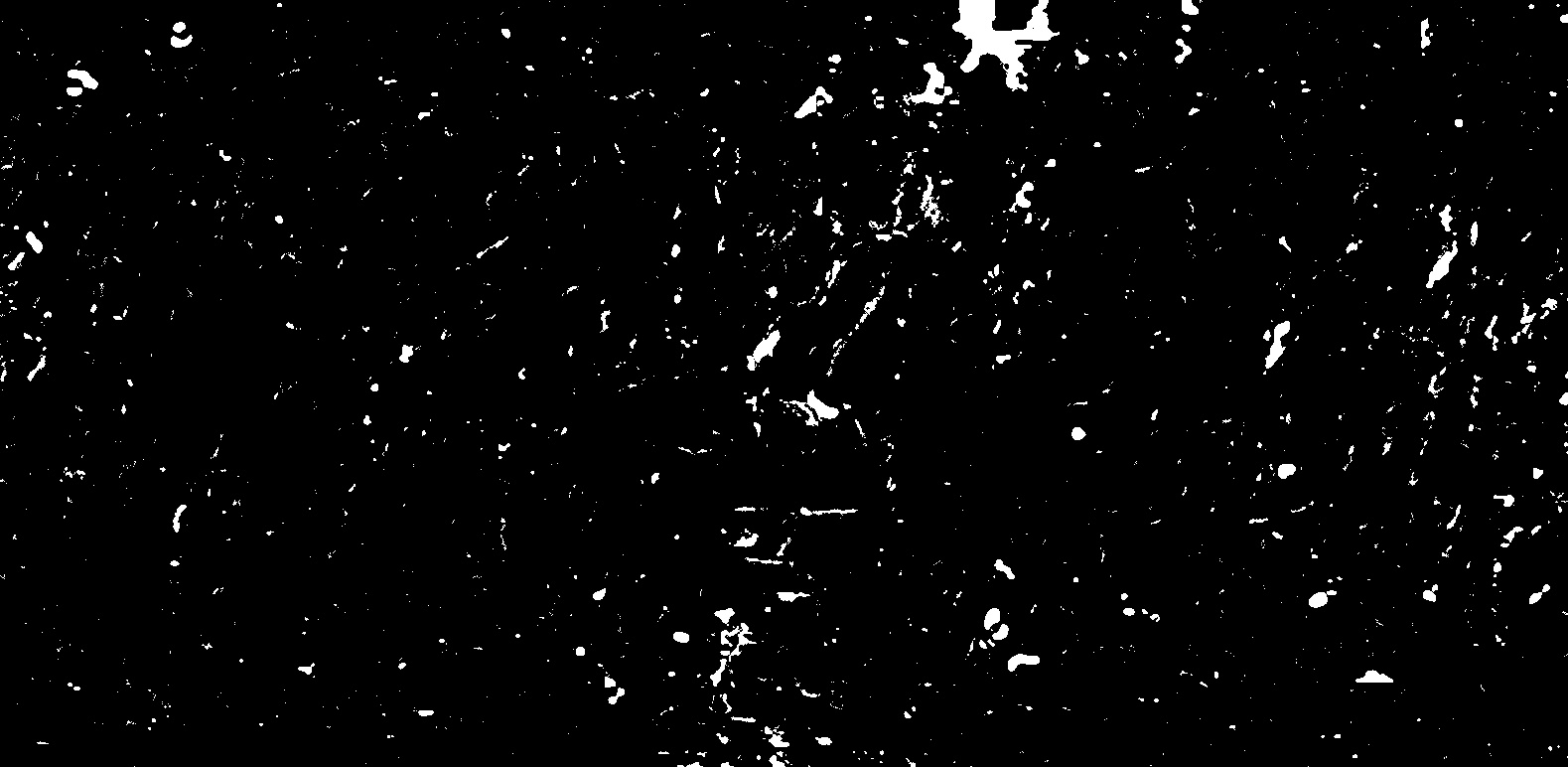}
    \includegraphics[width = 0.32 \textwidth]{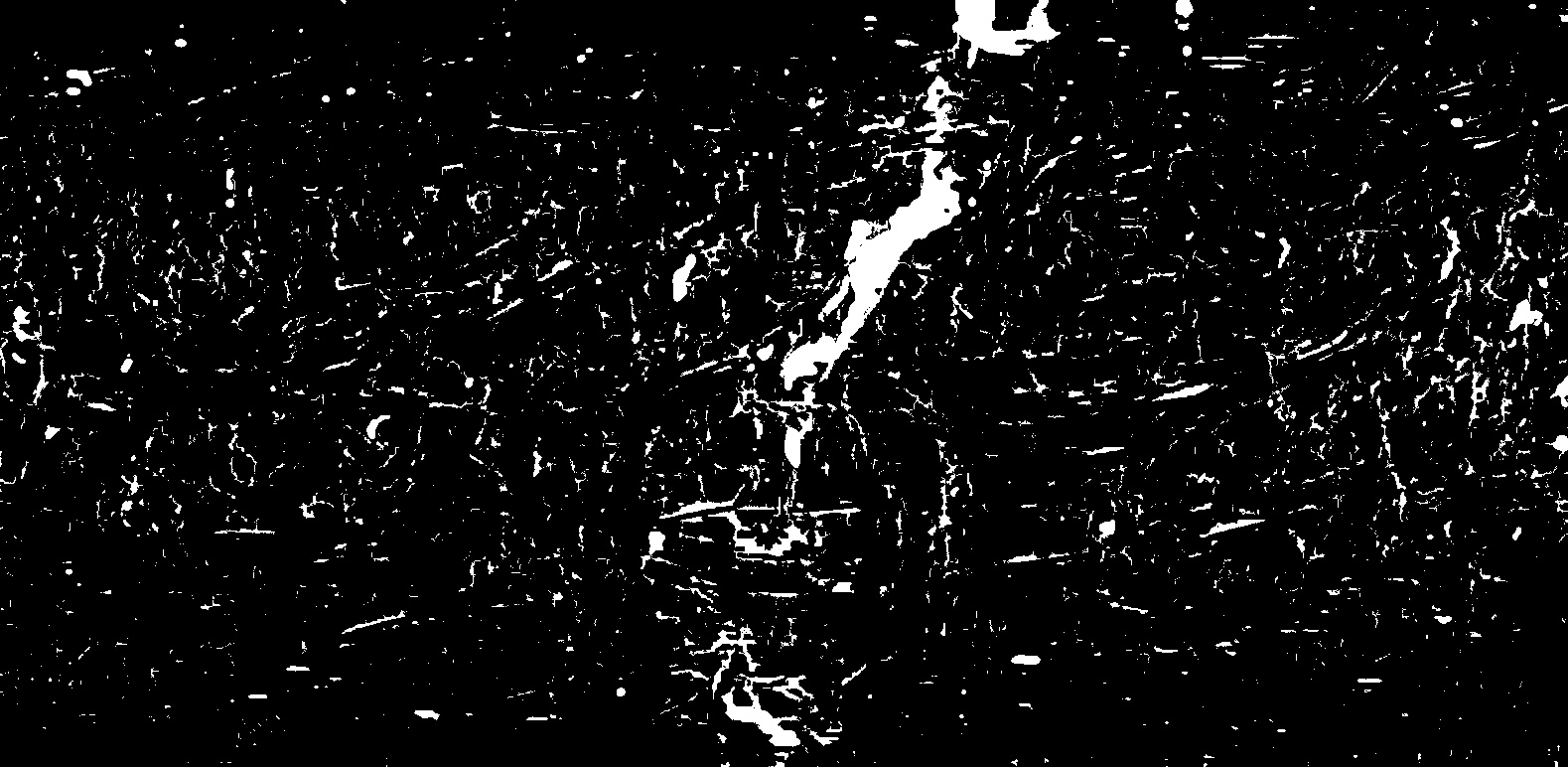}
    \includegraphics[width = 0.32 \textwidth]{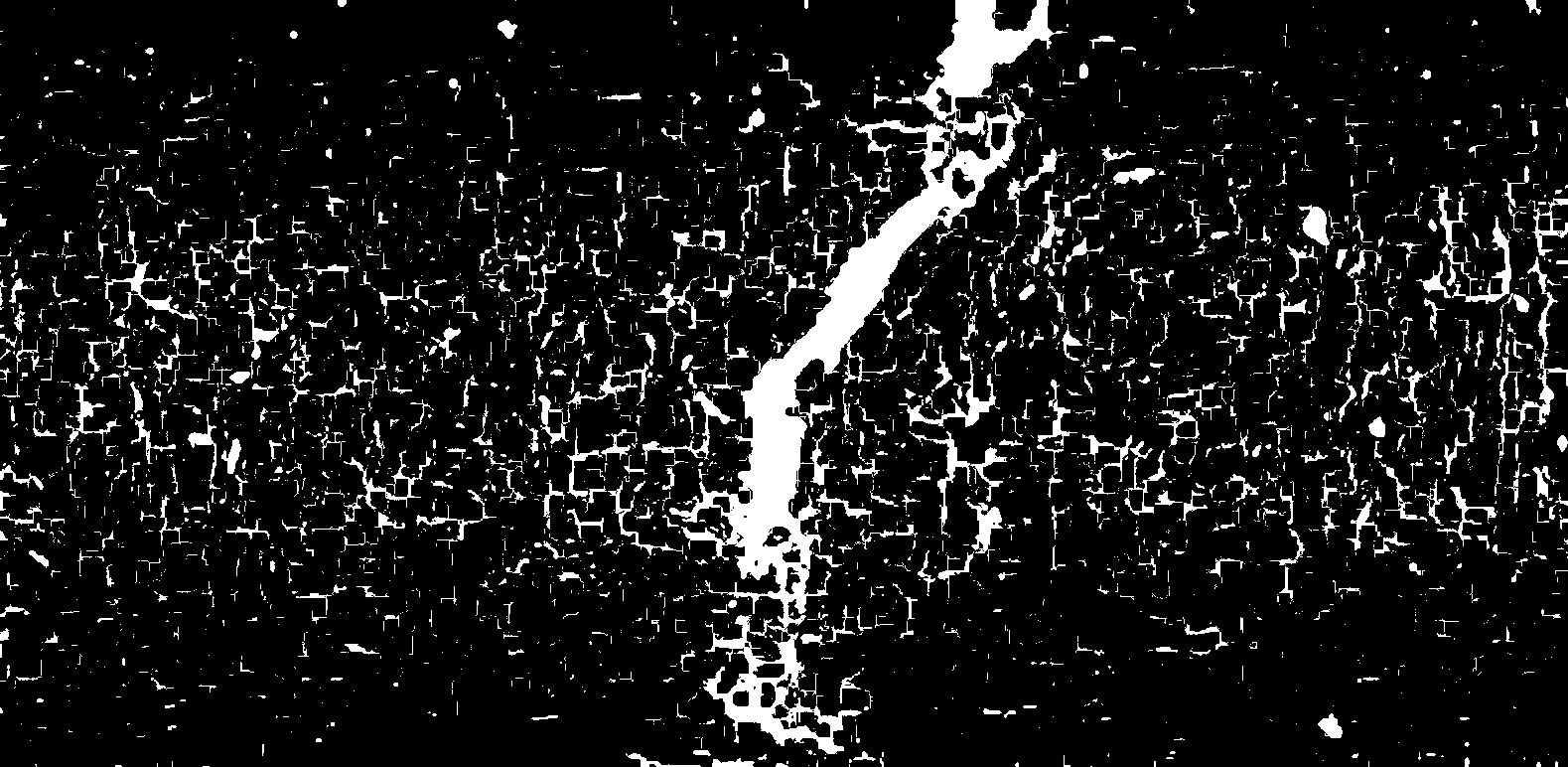}
    \caption{Cracks in samples of ultra high performance  concrete reinforced with steel fibers. Rows (from left to right): input image, segmentation results from the Riesz network, U-net and U-net mix, respectively. Column: original images, images after applying square opening of half-size $2$, and images after applying square closing of half-size $5$. Image size is $1\,579 \times 772$.}
    \label{fig:crack-reinforced3}
\end{figure*}

\section{Method implementation}
The Riesz network has been implemented in PyTorch. Here, we discuss the implementation of the Riesz transform (Algorithm \ref{alg:RT}) and the Riesz layer (Algorithm \ref{alg:RL}).

In Algorithm \ref{alg:RT}, the discrete Fourier transform "FT" denotes the composition of two \textit{torch} functions: first, \textit{torch.fft.fft2} is applied to the input and then followed by \textit{torch.fft.fftshift}. Similary, its inverse "iFT" uses \textit{torch.fft.ifftshift} and \textit{torch.fft.ifft2} but in the reverse order as in "FT". The operator "$\odot$" denotes a pointwise multiplication of two image maps (of the same size).
\begin{algorithm*}
\caption{RieszTransform$(I)$}\label{alg:RT}
\begin{algorithmic}
\Require $I \in \R^{H\times W}$
\Require H, W are even.
\State Discrete version of the Riesz kernel from equation (\ref{eq:FT:RT})
\For{$n_1\in\{1,\cdots,\frac{H}{2}\}$}
\For{$n_2\in\{1,\cdots,\frac{W}{2}\}$}
    \State $R_1(n_1,n_2) = i\frac{n_1-\frac{H}{2}}{\sqrt{(n_1-\frac{H}{2})^2+(n_2-\frac{W}{2})^2}}$ 
    \State $R_2(n_1,n_2) = i\frac{n_2-\frac{W}{2}}{\sqrt{(n_1-\frac{H}{2})^2+(n_2-\frac{W}{2})^2}}$ 
\EndFor
\EndFor
\State Discrete Fourier transform of $I$
\State $\Tilde{I} = FT(I)$
\State First order Riesz transform
\State $I_1 = iFT(\Tilde{I} \odot R_1)$
\State $I_2 = iFT(\Tilde{I} \odot R_2)$
\State Second order Riesz transform
\State $I^{(2,0)} = iFT(\Tilde{I} \odot R_1 . R_1)$
\State $I^{(0,2)} = iFT(\Tilde{I} \odot R_2 . R_2)$
\State $I^{(1,1)} = iFT(\Tilde{I} \odot R_2  . R_1)$
\State \Return $(I_1,I_2, I^{(2,0)},I^{(1,1)},I^{(0,2)}) \in \R^{H\times W \times 5}$
\end{algorithmic}
\end{algorithm*}

Based on equation (\ref{eq:FT:RT}), Algorithm \ref{alg:RL} is the implementation of equations (\ref{base:layer}) and (\ref{full:layer}). Here, a linear combination of Riesz transforms of the input feature maps can be seen as a 1d convolution across the channel dimension. Hence, "conv1d" represents the \textit{torch} function \textit{torch.nn.Conv2d} with $kernel\_size = (1,1)$ where $[5N_{input} , N_{output}]$ specify the number of input and output channels, respectively.
\begin{algorithm*}
\caption{RieszLayer$(L,N_{output})$}\label{alg:RL}
\begin{algorithmic}
\Require $L \in \R^{H\times W \times N_{input}}$
\Require H, W are even.
\Require $N_{output} \in \N$
\State Riesz transform for every channel in $L$
\State Define $L_R \in \R^{H\times W \times (5 N_{input})}$
\For{$i\in\{1,\cdots,N_{input}\}$}
    \State $L_R(:,:, (5(i-1)):(5i)) = \text{RieszTransform(L(:,:,i))}$
\EndFor
\State \Return $conv1d(L_R,[5N_{input}, N_{output}]) \in \R^{H\times W \times N_{output}}$
\end{algorithmic}
\end{algorithm*}

Finally, the Riesz network is built combining batch normalization, Riesz layer (Algorithm \ref{alg:RL}), and ReLU non-linearity.







\ 



\section*{Declarations}
 
\bmhead{Ethical Approval}
Not applicable.

\bmhead{Competing interests} 
The authors have no competing interests to declare that are relevant to the content of this article.
 
\bmhead{Authors' contributions} 
Conceptualization: T.B.; Methodology: T.B.; Formal analysis and investigation:  T.B.; Writing - original draft preparation: T.B.; Writing - review and editing: C.R., K.S.; Funding acquisition: C.R., K.S.; Supervision: C.R., K.S.
 
\bmhead{Funding} 
This work was supported by the German Federal Ministry of Education and Research (BMBF) [grant number 05M2020 (DAnoBi)].
 
\bmhead{Availability of data and materials} 
The datasets generated during and/or analysed during the current study are available from the corresponding author on reasonable request. 
\bibliographystyle{elsarticle-num}
\bibliography{ref}

\end{document}